\documentclass{article}

\usepackage{microtype}
\usepackage{graphicx}
\usepackage{booktabs} 

\usepackage{algorithm}
\usepackage{algpseudocode}
\usepackage{hyperref}

\usepackage[accepted]{icml2026/icml2026}

\usepackage{amsmath, amssymb, amsthm}
\usepackage{setspace, color, tabularx, csvsimple, longtable, adjustbox}
\usepackage{subcaption, multicol, placeins}
\usepackage{url}
\usepackage{enumerate}
\usepackage{enumitem}

\makeatletter
\renewcommand\paragraph{\@startsection{paragraph}{4}{\z@}%
  {0.6ex plus 0.2ex minus 0.2ex}
  {-0.8em}
  {\normalfont\normalsize\bfseries}}
\makeatother

\newtheorem{theorem}{Theorem}
\newtheorem{lemma}{Lemma}
\newtheorem{assumption}{Assumption}
\newtheorem{corollary}{Corollary}

\newtheorem{proposition}{Proposition}
\newtheorem{condition}{Condition}
\newtheorem{remark}{Remark}

\newcommand{\R}{\mathbb{R}}
\newcommand{\E}{\mathbb{E}}
\newcommand{\Prob}{\mathbb{P}}
\newcommand{\supp}{\mathrm{supp}}

\DeclareMathOperator{\sign}{sign}

\begin{document}

\icmltitlerunning{Transfer Learning in High-dimensional Ising Models} 

\twocolumn[
\icmltitle{Transfer Learning in High-dimensional Ising Models}

\icmlsetsymbol{equal}{*}

\begin{icmlauthorlist}
\icmlauthor{Joonho Kim}{kaist}
\icmlauthor{Seyoung Park}{yonsei}


\end{icmlauthorlist}

\icmlaffiliation{kaist}{
Kim Jaechul Graduate School of Artificial Intelligence,
KAIST,
Seoul,
Republic of Korea
}

\icmlaffiliation{yonsei}{
Department of Statistics and Data Science,
Yonsei University,
Seoul,
Republic of Korea
}

\icmlcorrespondingauthor{Seyoung Park}{ishspsy@yonsei.ac.kr}

\icmlkeywords{Transfer learning, High-dimensional inference, Ising model, Graphical model selection, Pseudolikelihood, Source detection}

\vskip 0.3in
]

\printAffiliationsAndNotice{}  

\begin{abstract}
In high-dimensional Ising model estimation, target sample sizes are often limited, and effectively using auxiliary binary datasets of unknown relevance remains challenging. To address this, we propose Trans-Ising, a transfer learning method that combines a loss-based source screening rule with a two-stage estimation procedure. The method first identifies informative auxiliary sources using held-out target pseudolikelihood to prevent negative transfer. It then computes an initial estimator via pooled nodewise $\ell_1$-regularized logistic regression, followed by a target-only correction step using a folded-concave penalty. Theoretically, we establish fixed-node $\ell_2$ and $\ell_1$ error bounds, exact graph selection consistency, and the conditional consistency of the screening rule. Through extensive simulations and real-data analyses, we demonstrate that Trans-Ising achieves lower estimation errors than both target-only estimation and naive data pooling.
\end{abstract}


\section{Introduction}
\label{sec:intro}

The Ising model has a long history in spatial statistics and statistical physics \citep{besag1974spatial, besag1975nonlattice}. It is widely used for binary network data in psychometrics and related applications \citep{vanBorkulo2014new, Epskamp2018, violatelasso, Park2022BayesianIsingEdu, logregising}, and recent work has studied statistical inference and structure learning for Ising models \citep{bhattacharya2018inference, Lokhov2018Optimal, XMeng}. In this model, each variable represents a binary state, and the interaction network represents pairwise dependence. Estimating this edge set is a common inferential target in these applications because edges are interpreted as conditional associations among binary variables. A standard approach for high-dimensional Ising model selection is nodewise $\ell_1$-regularized logistic regression \citep{ravikumar}. This neighborhood-selection method avoids the intractable partition function and has become a common baseline in the Ising-model literature \citep{Santhanam2012InfoLimits, barber2015high, kuang2017screening, DeCanditiis2020global}. For example, in cancer genomics, modeling mutation profiles requires estimating interactions among $p=200$ genes using limited target observations, such as $n_0=160$ samples. When $p$ is comparable to or larger than $n_0$, target-only nodewise logistic regressions estimate sparse neighborhoods from limited target observations, which increases false-positive and false-negative edge errors.

To reduce the target estimation error, transfer learning methods use related auxiliary datasets \citep{pan2010survey, Fawaz2018, TLnonparam, nonparamreg, TLMAB, TLFME,weiss2016survey, survey, hosna2022transfer}. It has been thoroughly developed by \citet{tllasso} and \citet{trans-glm} for high-dimensional linear and generalized linear models, by \citet{kim2025transfer} for benign-overfitting linear regression, and by \citet{park2025transfer} for large-scale low-rank regression, with related approaches in \citet{tlggm} and \citet{zhao2024transglasso} for Gaussian graphical models. For Gaussian graphical models, correction-based transfer procedures reduce precision-matrix estimation error when sources share structure with the target \citep{zhao2024transglasso}. Despite the growing availability of auxiliary binary datasets, there has been limited research on transfer learning for discrete graphical models such as the Ising model.

Extending transfer learning theory to high-dimensional Ising models presents technical difficulties. Each nodewise Ising regression uses a signed logistic pseudolikelihood, and cross-domain heterogeneity shifts the pooled population minimizer away from the target parameter. This shift alters the edge-selection behavior of standard $\ell_1$-penalized estimators. Because source relevance is unknown in practice, naive pooling increases the target validation risk and the target estimation error when incompatible sources are included.

In this article, we focus on the problem of transfer learning for high-dimensional Ising graph estimation and propose \emph{Trans-Ising}, a two-stage procedure. The proposed construction first obtains an initial estimator by nodewise $\ell_1$-regularized logistic regression using the target data and selected auxiliary samples, and then applies a target-only correction step using a folded-concave penalty. In addition, we develop a loss-based source screening rule using held-out target pseudolikelihood to select informative sources and avoid increasing the target validation risk.

\noindent
\textbf{Contributions.}
This article makes several contributions to transfer learning for Ising model estimation in a high-dimensional context:

\noindent
\begin{itemize}[leftmargin=*,topsep=0pt]
    \item \textbf{Oracle Two-Step Estimation and Error Bounds:} For a known informative source set, we construct an oracle two-step nodewise estimator using pooled logistic lasso initialization and target-only correction. Let $s_j$ denote the target neighborhood size, $N$ denote the combined sample size from the target and informative auxiliary sources, and $h_j$ denote the source-to-target heterogeneity level. Theorem~\ref{thm:transfer_rewrite} establishes fixed-node $\ell_2$ and $\ell_1$ error bounds whose leading variance term is $\sqrt{s_j\log p/N}$ and whose additional terms depend on $h_j$.
    
    \item \textbf{Dual-Penalty Correction for Exact Support Recovery:} We analyze a dual-penalty correction step for support recovery in high-dimensional Ising models. While recent high-dimensional transfer learning methods for generalized linear models \citep{trans-glm} established estimation guarantees for correction-based estimators with $\ell_1$-regularization, exact neighborhood recovery for high-dimensional Ising models requires a separate support-recovery analysis. This is because $\ell_1$ penalization introduces shrinkage on nonzero coefficients, and classical lasso selection analyses impose irrepresentable-type conditions \citep{Zhao2006Lasso}. Theorem~\ref{thm:selection_rewrite} establishes fixed-node exact neighborhood recovery for the oracle two-step estimator under the stated beta-min, separation, empirical curvature, local-solution, and cone conditions, without imposing an irrepresentable condition. In addition, Corollary~\ref{cor:graph_recovery_rewrite} shows that the AND-symmetrized estimator recovers the target edge set when the fixed-node selection result holds uniformly over nodes.
    
    \item \textbf{Data-Driven Source Detection and Empirical Validation:} We develop a data-driven source detection algorithm constructed from out-of-sample pseudolikelihood loss. Theorem~\ref{thm:detection_consistency} establishes conditional recovery of a risk-defined population informative-source set under graph-level separation, validation-loss deviation, and adaptive-threshold calibration conditions. Finally, through simulation studies and real-data analyses involving mutation, online-transaction, and movie-rating data, we show that Trans-Ising attains a lower estimation error than target-only estimation in the reported settings, and a lower error than naive pooling in most reported settings. The simulations explicitly assess parameter estimation and graph recovery, while the real-data analyses assess held-out nodewise prediction error because the true interaction networks are unknown.
\end{itemize}

\noindent
\textbf{Notation.}
We define an Ising model on an undirected graph $G=(V,E)$, where $V=\{1,\dots,p\}$ denotes the node set and $E$ denotes the edge set. We work with the $\{-1,1\}$ coding. Let $X=(X_1,\dots,X_p)\in\{-1,1\}^p$ denote a random configuration with realization $x$. Let $X^{(0)}$ denote the target dataset with sample size $n_0$. We observe $S$ auxiliary datasets $\{X^{(s)}\}_{s=1}^S$, where $X^{(s)}$ has sample size $n_s$. Let $\theta^\ast\in\mathbb R^{p\times p}$ denote the true target parameter, and let $w^{(s)}$ denote the parameter of source $s$. The target and source interaction matrices are symmetric, and each has zero diagonal. The target edge set is defined as
\[
E=\bigl\{\{j,k\}:1\le j<k\le p,\ \theta^\ast_{jk}\neq 0\bigr\}.
\]
Let $\delta^{(s)}:=\theta^\ast-w^{(s)}$ denote the contrast matrix. Let $\mathcal A\subseteq\{1,\dots,S\}$ denote the unknown set of informative sources. We write $n_A:=\sum_{s\in\mathcal A}n_s$ and $N:=n_0+n_A$. For node $j\in V$, let $\theta^\ast_{\setminus j}:=(\theta^\ast_{jk})_{k\neq j}$, let $S_j:=\{k\neq j:\theta^\ast_{jk}\neq 0\}$, and let $s_j:=|S_j|$. When writing logistic losses, we use the standard $0/1$ response coding $y_{j,i}:=\mathbf 1\{x^{(0)}_{ij}=1\}$, which is equivalent to the $\{-1,1\}$ representation and avoids sign ambiguities in the likelihood expressions. Let $\hat\theta$ denote the final estimate of the target interaction matrix. Let $\|\cdot\|_q$ and $\|\cdot\|_F$ denote the $\ell_q$ norm and the Frobenius norm, respectively. We write $a\lesssim b$ if $a\le Cb$ for an absolute constant $C>0$ independent of the sample sizes and dimensions, and we write $a\asymp b$ if both $a\lesssim b$ and $b\lesssim a$ hold. We define $a\wedge b:=\min(a,b)$.

\noindent
\textbf{Organization.}
Section~\ref{sec:back} reviews related work and background on transfer learning and Ising model estimation. Section~\ref{sec:method} introduces the proposed algorithms, and Section~\ref{sec:theoretical} presents the theoretical analysis. Section~\ref{sec:exp} presents simulations and the mutation-data study. Appendix~\ref{app:additional_realdata} presents additional real-data analyses, and Section~\ref{sec:discussion} concludes the article.

\section{Background}
\label{sec:back}

This section reviews related work and summarizes the background used in our development.

\subsection{Related Work}

Transfer learning in high-dimensional regression encompasses selective information borrowing, two-step correction, source detection, benign-overfitting interpolation, and low-rank multiple-response estimation \citep{tllasso,trans-glm,kim2025transfer,park2025transfer}. 
Lasso-based neighborhood selection, introduced for Gaussian graphical models \citep{Meinshausen}, provides a standard estimator for high-dimensional Ising structure learning via nodewise $\ell_1$-regularized logistic regression \citep{ravikumar}. 
Building on related correction ideas for Gaussian graphical models \citep{tlggm,zhao2024transglasso}, \citet{trans-glm} analyzed two-step correction and source detection for high-dimensional GLMs. Other related contexts include time-dependent spatially varying Gaussian models \citep{greenewald2017time} and heterogeneous logistic regression for grouped binary responses and not graph recovery \citep{kim2023debiased}.
While recent meta-learning work explored $\ell_1$-regularized Ising support recovery under a random-task model \citep{xiehonorio2024metaising}, our setting involves fixed domains with unknown relevance. We use a source-to-target bias formulation for the signed-design nodewise pseudolikelihood problem, augmenting the standard nodewise representation with loss-based source screening and a target-only folded-concave correction on updated coefficients for support recovery.





\subsection{Transfer Learning}

We briefly review the standard transfer learning method for high-dimensional linear models. 
Consider a target model:
\begin{equation}
Y^{(0)} = X^{(0)} \beta + \epsilon^{(0)},
\end{equation}
where $Y^{(0)}\in\mathbb R^{n_0}$, $X^{(0)}\in\mathbb R^{n_0\times p}$, and $\beta\in\mathbb R^p$ is the target parameter.

We also observe $S$ auxiliary models
\begin{equation}
Y^{(s)} = X^{(s)} w^{(s)} + \epsilon^{(s)}, \quad s = 1, \ldots, S,
\end{equation}
where $w^{(s)}\in\mathbb R^p$ is the parameter for the $s$-th source. 
A common structural assumption is that the contrast $\delta^{(s)}=\beta-w^{(s)}$ is ``small'' in a suitable sense, often in $\ell_1$ norm.

Following \citet{trans-glm}, one can define a transferring level via $\|\delta^{(s)}\|_1$ and the level-$h$ transferring set
\begin{equation}
\mathcal{A}_h = \{s : \|\delta^{(s)}\|_1 \le h\},
\end{equation}
where $h$ quantifies how close a source must be to be useful. 
In practice, $\mathcal A_h$ is unknown, and using non-informative sources can cause negative transfer. 
A recurring strategy in high dimensions is a two-step approach: (i) construct a stable estimator using pooled data from informative sources, and (ii) correct the resulting bias using only the target data. 
This logic underlies our Trans-Ising procedure.

\subsection{Ising Models}

The Ising model is a pairwise Markov Random Field that specifies dependence among binary variables through an undirected graph. 
Let $X=(X_1,\dots,X_p)\in\{-1,1\}^p$. 
A common specification (without external fields) is
\begin{equation}
P_{\theta}(x) = \frac{1}{Z(\theta)} \exp \left( \sum_{(i,j)\in E} \theta_{ij} x_i x_j \right),
\end{equation}
where $\theta_{ij}$ is the interaction strength and $Z(\theta)$ is the partition function. 
Direct likelihood-based estimation is difficult because $Z(\theta)$ involves a sum over $2^p$ states.

The conditional distribution of a single node has a logistic form.
For $j\in V$ and $x_j\in\{-1,1\}$,
\begin{equation}
P(X_j = x_j \mid X_{\setminus j})
= \frac{\exp \left( 2 x_j \sum_{k \ne j} \theta_{jk} X_k \right)}
{1 + \exp \left( 2 x_j \sum_{k \ne j} \theta_{jk} X_k \right)}.
\end{equation}
Equivalently,
$
P(X_j=1 \mid X_{\setminus j})
=
\sigma\!\Big(2\sum_{k\ne j}\theta_{jk}X_k\Big),
$
where $\sigma(u)=(1+e^{-u})^{-1}$. This identity motivates pseudolikelihood-based estimation and the neighborhood-selection approach.

Given data $X\in\{-1,1\}^{n\times p}$, define the $0/1$ response $y_{j,i}:=\mathbf 1\{x_{ij}=1\}$ and the linear predictor
$
\eta_{j,i}(\theta_{\setminus j}) := 2\sum_{k\ne j}\theta_{jk}x_{ik}.
$
Then the nodewise negative conditional log-likelihood at node $j$ can be written as
\begin{equation}
\ell_j(\theta_{\setminus j};X)
=
\frac{1}{n}\sum_{i=1}^n
\Big\{\log\big(1+e^{\eta_{j,i}(\theta_{\setminus j})}\big)-y_{j,i}\,\eta_{j,i}(\theta_{\setminus j})\Big\}.
\end{equation}
The standard $\ell_1$-regularized (Lasso) estimator \citep{tibshirani1996regression, vandeGeer2008GLMlasso} solves:
\begin{equation}
\hat{\theta}_{\setminus j}
=
\underset{\theta_{\setminus j} \in \mathbb{R}^{p-1}}{\operatorname{argmin}}
\left\{
\ell_j(\theta_{\setminus j};X) + \lambda \| \theta_{\setminus j} \|_1
\right\}.
\end{equation}
Repeating this for $j=1,\dots,p$ yields an (asymmetric) collection of neighborhood estimates, which can be symmetrized to recover an undirected graph.

This pseudolikelihood/logistic representation is the starting point for our transfer learning method. 
Trans-Ising keeps the nodewise logistic model and uses auxiliary datasets through a selective pooling step followed by a target-only refinement.

\begin{remark}[Signed-design vs.\ $0/1$ logistic form]
The loss $f(t)=\log(1+e^{-t})$ used in the theory corresponds to the standard $0/1$ logistic neighborhood regression after multiplying covariates by $x_{ij}\in\{-1,1\}$ (signed-design transformation). Both forms are equivalent and yield the same nodewise conditional likelihood.
\end{remark}

\section{Methodology}
\label{sec:method}

\subsection{Oracle Trans-Ising Algorithm with Known Informative Sources}

We first describe an oracle version of our procedure that assumes \emph{the informative source set $\mathcal A$ is known}. 

\noindent
\textbf{Initial Estimation Using $\ell_1$-Regularized Logistic Regression.}
The initial step pools the samples from informative auxiliary datasets $\{X^{(s)}\}_{s \in \mathcal{A}}$ with the primary dataset $X^{(0)}$. Let $n_0$ be the sample size of the primary data, $n_s$ be the sample size for source dataset $s \in \mathcal{A}$, and let $n_A = \sum_{s \in \mathcal{A}} n_s$. Put $N=n_0+n_A$, and write $n_r=n_0$ when $r=0$ and $n_r=n_s$ when $r=s\in\mathcal A$.

As established in Section \ref{sec:back}, estimation can be decomposed into $p$ neighborhood selection problems. Let $f(z):=\log(1+e^{-z})$, and let $\lambda_w>0$ denote the Step~1 regularization parameter. For each node $j \in V$, we estimate its neighborhood parameters $w_{\setminus j} \in \mathbb{R}^{p-1}$ by solving the pooled nodewise logistic lasso:
    \begingroup
    \small
    \begin{equation} \label{eq:initial_est}
    \begin{aligned}
    \hat{w}^A_{\setminus j} = \underset{w_{\setminus j} \in \mathbb{R}^{p-1}}{\operatorname{argmin}} \Bigg\{ \frac{1}{N} \sum_{r \in \{0\}\cup\mathcal{A}} \sum_{i=1}^{n_r} f\left(2 \sum_{k \neq j} w_{jk} x_{ij}^{(r)} x_{ik}^{(r)} \right) \\ 
    + \lambda_w \sum_{k \neq j} |w_{jk}| \Bigg\},
    \end{aligned}
    \end{equation}
    \endgroup
Repeating this for all $j=1,\ldots,p$ yields an initial (typically asymmetric) matrix estimate, denoted $\hat{w}^A$.

\noindent
\textbf{Bias Correction Using Primary Data.}
Although $\hat{w}^A$ uses auxiliary data, it can be biased when the source and target models differ. To reduce this bias, we apply a target-only correction. Let $\lambda_\delta>0$ be the correction regularization parameter, and let $P_\lambda(\cdot)$ be the SCAD penalty with penalty level $\lambda$. For each node $j$, define
    \begin{equation} \label{eq:bias_corr}
    \begin{aligned}
    Q_j(\delta_{\setminus j};\hat w^A_{\setminus j}) := \Bigg\{ \frac{1}{n_0} \sum_{i=1}^{n_0} f\left(2 \sum_{k \neq j} (\hat{w}_{jk}^A + \delta_{jk}) x_{ij}^{(0)} x_{ik}^{(0)} \right) \\ 
    + \lambda_\delta \sum_{k \neq j} |\delta_{jk}|  +  \sum_{k \neq j} P_{\lambda}(\hat{w}^A_{jk} + \delta_{jk}) \Bigg\},
    \end{aligned}
    \end{equation}
Let $\hat{\delta}^A_{\setminus j}$ denote the local solution of \eqref{eq:bias_corr} returned by the optimization routine analyzed in Condition~\ref{ass:local-step2_rewrite}. $\hat{\delta}^A$ adjusts $\hat{w}^A$ toward the target distribution using only $X^{(0)}$.

\noindent
\textbf{Dual Penalty for Bias Correction.}
The pooled initializer $\hat w^A$ can suffer from shrinkage bias on strong edges. 
A naive application of Lasso in the correction step, as in previous GLM transfer methods \citep{trans-glm}, would impose double shrinkage on large coefficients and make it difficult to distinguish true signals from transfer bias.
We therefore use a dual-penalty correction in \eqref{eq:bias_corr}: the $\ell_1$ penalty acts on the \emph{correction} $\delta_{\setminus j}$ to encourage sparse adjustments, while a folded-concave SCAD penalty is applied to the \emph{updated coefficients} $\hat w^A_{jk}+\delta_{jk}$.
For coefficients above the SCAD threshold, the SCAD derivative is zero, which reduces additional shrinkage on strong signals and supports the selection argument in Theorem~\ref{thm:selection_rewrite} \citep{FanLi2001SCAD}. 
See Appendix~\ref{app:scad} for the explicit piecewise form.
In Section~\ref{sec:theoretical}, Theorem~\ref{thm:selection_rewrite} establishes nodewise support recovery and sign consistency for the resulting estimator under its stated small-error, beta-min, separation, empirical curvature, local-solution, and cone conditions.

\noindent
\textbf{Optimization and Implementation.}
Step~2 leads to a nonsmooth, nonconvex objective because it combines an $\ell_1$ penalty on the correction with a folded-concave SCAD penalty on the updated coefficients. We optimize Step~2 using the standard \emph{local linear approximation (LLA)} scheme for SCAD, which replaces the nonconvex penalty by a sequence of weighted $\ell_1$ surrogates.

For each node $j$, let $Q(\delta):=Q_j(\delta;\hat w^A_{\setminus j})$ denote the objective in \eqref{eq:bias_corr}. Here $\ell_{0,j}$ denotes the target signed-design nodewise loss for node $j$; its explicit form is stated in Section~\ref{sec:theoretical}. Put $\vartheta := \hat w^A_{\setminus j} + \delta$ and rewrite the objective as
\[
\min_{\vartheta\in\mathbb R^{p-1}}
\ \ell_{0,j}(\vartheta)
+\lambda_\delta\|\vartheta-\hat w^A_{\setminus j}\|_1
+\sum_{k\neq j}P_\lambda(\vartheta_k).
\]
At iteration $t$, LLA linearizes $P_\lambda$ at $\vartheta^{(t)}$ via weights
$\omega^{(t)}_k := P'_\lambda(|\vartheta^{(t)}_k|)$, yielding the convex surrogate
\[
\min_{\vartheta}
\ \ell_{0,j}(\vartheta)
+\sum_{k\neq j}\Big(\lambda_\delta\,|\vartheta_k-\hat w^A_{jk}|+\omega^{(t)}_k|\vartheta_k|\Big).
\]
We solve this surrogate by proximal gradient. We stop when the coordinate KKT residual satisfies
\[
\mathrm{dist}_{\infty}\bigl(0,\partial Q(\hat\delta^A_{\setminus j})\bigr)
:=
\inf_{\xi\in\partial Q(\hat\delta^A_{\setminus j})}\|\xi\|_\infty
\le \varepsilon_{n,j},
\]
as in Condition~\ref{ass:local-step2_rewrite}.

\begin{algorithm}[tb]
\caption{Oracle Trans-Ising Algorithm}
\label{alg:oracle-trans-ising}
\begin{algorithmic}[1]
    \State \textbf{Input}: Primary data $X^{(0)}$ and informative auxiliary data $\{X^{(s)}\}_{s \in \mathcal{A}}$.
    \State \textbf{Output}: Final coefficient estimate $\hat{\theta}$.

    \State \textbf{Step 1 (Initial Estimation).} For each node $j \in V$, compute $\hat{w}^A_{\setminus j}$ by solving \eqref{eq:initial_est}.

    \State \textbf{Step 2 (Bias Correction).} For each node $j \in V$, compute $\hat{\delta}^A_{\setminus j}$ by solving \eqref{eq:bias_corr}.

    \State \textbf{Step 3 (Final Estimation).} Assemble the asymmetric intermediate matrices $\hat{w}^A$ and $\hat{\delta}^A$ from the node-wise estimates; compute
    \[
    \hat{\theta}^{\text{asym}} = \hat{w}^A + \hat{\delta}^A,
    \]
    and apply the symmetrization rule (AND rule) to $\hat{\theta}^{\text{asym}}$ to obtain the final symmetric estimate $\hat{\theta}$.
\end{algorithmic}
\end{algorithm}

\noindent
\textbf{Final Parameter Estimation.}
The final interaction coefficients are obtained by combining the initial estimate and the correction. For each node pair $(j,k)$,
\[
\hat{\theta}^{\text{asym}}_{jk} = \hat{w}^A_{jk} + \hat{\delta}^A_{jk}.
\]
This yields a complete, but still potentially asymmetric, matrix $\hat{\theta}^{\text{asym}}$. A final symmetrization step is applied using the AND rule:
\[
\hat{\theta}_{jk} = \hat{\theta}_{kj}
=
\begin{cases}
    \frac{(\hat{\theta}^{\text{asym}}_{jk} + \hat{\theta}^{\text{asym}}_{kj})}{2}
    & \begin{aligned}[t]
        &\text{if } \hat{\theta}^{\text{asym}}_{jk} \neq 0\ \& \ \hat{\theta}^{\text{asym}}_{kj} \neq 0,
      \end{aligned} \\
    0, & \text{otherwise},
\end{cases}
\]
which produces the final undirected graph estimate.

\subsection{Trans-Ising Algorithm with Informative Source Detection}

The oracle algorithm assumes the informative source set $\mathcal A$ is known; in practice it is not, and naively pooling all auxiliary datasets can cause \emph{negative transfer}. We therefore select informative sources by comparing each source's effect on held-out target \emph{pseudolikelihood}.

For a configuration $x\in\{-1,1\}^p$ and a parameter matrix $\Theta$, write
$\Theta_{j,\setminus j}:=(\Theta_{jk})_{k\neq j}$ and define
\begin{align*}
q(x_j\mid x_{\setminus j};\Theta_{j,\setminus j})
&:=
\frac{\exp\{2x_j\sum_{k\neq j}\Theta_{jk}x_k\}}
{1+\exp\{2x_j\sum_{k\neq j}\Theta_{jk}x_k\}},\\
\ell^{\mathrm{pseudo}}(x;\Theta)
&:=
-\sum_{j=1}^p\log q(x_j\mid x_{\setminus j};\Theta_{j,\setminus j}).
\end{align*}

For a dataset of size $n$ and a parameter estimate $\hat{\theta}$, define the (negative) pseudolikelihood loss
\begin{equation}
\begin{aligned}
\mathcal{L}_{n}^{\text{pseudo}}(\hat{\theta})
&=
-\frac{1}{n} \sum_{i=1}^n \sum_{j=1}^p \log q(x_{ij} \mid x_{i, \setminus j}; \hat{\theta}_{j,\setminus j})\\
&=
\frac{1}{n}\sum_{i=1}^n\sum_{j=1}^p
f\!\left(2\sum_{k\neq j}\hat{\theta}_{jk}\,x_{ij}x_{ik}\right),
\end{aligned}
\end{equation}
which measures how well the implied conditional distributions fit the target data. We keep sources whose inclusion does not increase the held-out target loss beyond a tolerance threshold, and then run the oracle Trans-Ising on the selected set.

\begin{algorithm}[tb]
\caption{Trans-Ising with Source Detection}
\label{alg:trans-ising-source-detection}
\begin{algorithmic}[1]
\State \textbf{Input:} Primary data $X^{(0)}$ and auxiliary datasets $\{X^{(s)}\}_{s=1}^S$.
\State \textbf{Output:} Final graph estimate $\hat{\theta}$.

\State Randomly split $X^{(0)}$ into 2 folds: $\{X^{(0)[1]}, X^{(0)[2]}\}$.
\For{$r = 1$ to $2$}
    \State Compute estimate in \eqref{eq:estimate_baseline}
    \State Compute baseline loss in \eqref{eq:loss_baseline} 
    \For{$s = 1$ to $S$}
        \State Compute estimate in \eqref{eq:estimate_transfer}
        \State Compute combined loss in \eqref{eq:loss_transfer} 
    \EndFor
\EndFor
\State Compute average baseline loss $\bar{\mathcal{L}}_0 = \frac{1}{2} \sum_{r=1}^2 \hat{\mathcal{L}}_0^{(r)}$.
\State Compute variance: $\hat{\sigma}^2 = \frac{1}{2-1} \sum_{r=1}^2 (\hat{\mathcal{L}}_0^{(r)} - \bar{\mathcal{L}}_0)^2$.
\State Set threshold $\tau \gets C_\tau\,\hat{\sigma}$ \Comment{$C_\tau>0$}.
\For{$s = 1$ to $S$}
    \State Compute scores as in \eqref{eq:delta_s}
\EndFor
\State Select informative auxiliary datasets as in \eqref{eq:auxiliary_datasets}.
\State Run Algorithm~\ref{alg:oracle-trans-ising} on $X^{(0)}$ and $\{X^{(s)}\}_{s \in \hat{\mathcal{A}}}$.
\end{algorithmic}
\end{algorithm}

\noindent
\textbf{Data Splitting and Pseudolikelihood Loss Calculation.}
We perform two-fold cross-validation on the target data $X^{(0)}$, splitting it into $\{X^{(0)[1]}, X^{(0)[2]}\}$. For each fold $r\in\{1,2\}$, let $n_{0,r}:=|X^{(0)[r]}|$. We compute with nodewise logistic lasso:
\begin{align}
    \hat{\theta}^{(0)[-r]} & : \text{ estimate on } X^{(0)} \setminus X^{(0)[r]}, \label{eq:estimate_baseline} \\
    \hat{\theta}^{(0+s)[-r]} & : \text{ estimate on } (X^{(0)} \setminus X^{(0)[r]}) \cup X^{(s)}. \label{eq:estimate_transfer}
\end{align}
Then we evaluate both estimates on the held-out validation fold $X^{(0)[r]}$:
\begin{align}
\hat{\mathcal{L}}_0^{(r)}
&=
\frac{1}{n_{0,r}}
\sum_{x \in X^{(0)[r]}} \ell^{\mathrm{pseudo}}(x; \hat{\theta}^{(0)[-r]}), \label{eq:loss_baseline} \\
\hat{\mathcal{L}}_s^{(r)}
&=
\frac{1}{n_{0,r}}
\sum_{x \in X^{(0)[r]}} \ell^{\mathrm{pseudo}}(x; \hat{\theta}^{(0+s)[-r]}). \label{eq:loss_transfer}
\end{align}

\noindent
\textbf{Informative Source Selection.}
We define the auxiliary compatibility score $\Delta_s$ as the average difference in pseudolikelihood loss across folds:
\begin{equation} \label{eq:delta_s}
\Delta_s = \frac{1}{2} \sum_{r=1}^2 \left( \hat{\mathcal{L}}_s^{(r)} - \hat{\mathcal{L}}_0^{(r)} \right).
\end{equation}
A negative $\Delta_s$ indicates that including $X^{(s)}$ improves predictive performance on the target fold. A positive value suggests potential negative transfer.

To account for statistical fluctuations, we allow a tolerance band. Specifically, we use an adaptive threshold $\tau = C_\tau\,\hat{\sigma}$, where $\hat{\sigma}$ is the empirical standard deviation of $\{\hat{\mathcal{L}}_0^{(r)}\}_{r=1}^2$, and select
\begin{equation} \label{eq:auxiliary_datasets}
\hat{\mathcal{A}} = \{s : \Delta_s < \tau\}.
\end{equation}

\noindent
\textbf{Final Oracle Trans-Ising Estimation.}
The final graph estimate is obtained by applying Algorithm~\ref{alg:oracle-trans-ising} using the full target dataset $X^{(0)}$ and the selected sources $\{X^{(s)}\}_{s \in \hat{\mathcal{A}}}$. This loss-based selection is adapted from \citet{trans-glm}.

\section{Theoretical Analysis}
\label{sec:theoretical}
We establish the theoretical properties of our algorithms for a fixed node $j$.
Our analysis uses high-dimensional M-estimation theory for nodewise logistic Ising selection \citep{vandeGeer2008GLMlasso, ravikumar,buhlmann2011,Negahban2012Unified}, transfer-learning arguments \citep{tllasso,trans-glm,park2025transfer}, and folded-concave penalties \citep{FanLi2001SCAD,Zhang2010MCP}.

\noindent
\textbf{Nodewise logistic loss (signed-design form).}
For a dataset $X^{(r)}=(x^{(r)}_{i\ell})_{1\le i\le n_r,\,1\le \ell\le p}$ with
$x^{(r)}_{i\ell}\in\{-1,1\}$, fix a node $j\in V$ and define the signed covariates
$z^{(r)}_{i,\ell}:=2\,x^{(r)}_{ij}\,x^{(r)}_{i\ell}$, 
$\ell\in V\setminus\{j\}$,
$z^{(r)}_i\in\mathbb R^{p-1}$.
For $u\in\mathbb R^{p-1}$, define the nodewise logistic loss
$\ell^{(r)}_j(u):=\frac{1}{n_r}\sum_{i=1}^{n_r} f\!\big(z^{(r)\top}_i u\big)$,
where $f(t):=\log(1+e^{-t})$.
We write $\ell_{0,j}(\cdot):=\ell^{(0)}_j(\cdot)$ for the target loss.

\noindent
\textbf{Pooled objective and population quantities.}
For informative set $\mathcal A$, let $N:=n_0+\sum_{s\in\mathcal A}n_s$ and define weights
$\alpha_r:=n_r/N$ for $r\in\{0\}\cup\mathcal A$.
Define the pooled empirical loss
$
\ell_{A,j}(u):=\sum_{r\in\{0\}\cup\mathcal A}\alpha_r\,\ell^{(r)}_j(u),
$
and the population risks $\mathcal L_{r,j}(u):=\mathbb E[\ell^{(r)}_j(u)]$ and
$\mathcal L_{A,j}(u):=\mathbb E[\ell_{A,j}(u)]$.
Let
$
w^\ast_{A,\setminus j}:=\arg\min_{u\in\mathbb R^{p-1}}\,\mathcal L_{A,j}(u).
$
The pooled-to-target transfer bias is
\(
\delta^\ast_{\setminus j}:=\theta^\ast_{\setminus j}-w^\ast_{A,\setminus j}.
\)
We quantify cross-domain heterogeneity by the source-to-target $\ell_1$ proximity level $h_j$
in Assumption~\ref{ass:sparsity_rewrite}. Under Assumption~\ref{ass:comparability_rewrite},
Lemma~\ref{lem:bias_rewrite} shows that $\|\delta^\ast_{\setminus j}\|_1 \le C h_j$ for some constant $C>0$.

\noindent
\textbf{Two-step estimator.}
The oracle two-step estimator at node $j$ is defined as follows.
\begin{equation} \label{eq:step1_rewrite}
\hat{w}^A_{\setminus j} = \arg\min_{u\in\mathbb R^{p-1}}
\Big\{\ell_{A,j}(u)+\lambda_w\|u\|_1\Big\},
\end{equation}
For Step~2, define
\begin{equation} \label{eq:step2_rewrite}
\begin{aligned}
Q_j(\delta;\hat{w}^A_{\setminus j})
:=\Big\{\ell_{0,j}(\hat{w}^A_{\setminus j}+\delta)+\lambda_\delta\|\delta\|_1\\
+ \sum_{k\neq j}P_\lambda(\hat{w}^A_{jk}+\delta_{jk})\Big\}, 
\end{aligned}
\end{equation}
Let $\hat{\delta}^A_{\setminus j}$ denote the local solution returned by the optimization routine and analyzed under Condition~\ref{ass:local-step2_rewrite}.
We set $\hat{\theta}_{\setminus j}:=\hat{w}^A_{\setminus j}+\hat{\delta}^A_{\setminus j}$.
This vector is the asymmetric nodewise estimate used in the fixed-node analysis; Algorithm~\ref{alg:oracle-trans-ising} applies the AND symmetrization after the nodewise estimates are computed.

\noindent
\textbf{Matrix norms.}
For a matrix $M$, define
$\|M\|_{1} :=\max_{k}\sum_{\ell}|M_{\ell k}|$ (induced $\ell_1$ norm, max column sum),
$\|M\|_{\infty} :=\max_{\ell}\sum_{k}|M_{\ell k}|$ (induced $\ell_\infty$ norm, max row sum),
$\|M\|_{\max} := \max_{\ell,k}|M_{\ell k}|$ (entrywise max norm).
Then for any vector $x$, $\|Mx\|_{\infty}\le \|M\|_{\infty}\,\|x\|_{1}$ and $\|Mx\|_{\infty}\le \|M\|_{\max}\,\|x\|_{1}$.

\noindent
\textbf{Target design representation.}
Define the signed design matrix
$
Z^{(0)}_{\setminus j}:=2\,\mathrm{diag}\!\big(x^{(0)}_{\cdot j}\big)\,X^{(0)}_{\setminus j}\in\mathbb R^{n_0\times(p-1)},
$
$ 
u_{j,i}(\vartheta):=\big[Z^{(0)}_{\setminus j}\big]_{i,\cdot}\,\vartheta,
$
and let $\sigma(u)=(1+e^{-u})^{-1}$ and $\widehat p_{j,i}(\vartheta):=\sigma(u_{j,i}(\vartheta))$.
With this convention,
\[
\begin{aligned}
\ell_{0,j}(\vartheta)
=\frac{1}{n_0}\sum_{i=1}^{n_0}\Big\{\log\big(1+e^{u_{j,i}(\vartheta)}\big)-u_{j,i}(\vartheta)\Big\}\\
=\frac{1}{n_0}\sum_{i=1}^{n_0} f\!\big(u_{j,i}(\vartheta)\big),
\end{aligned}
\]
and if we set $y^\dagger_j:=\mathbf 1_{n_0}$ (the all-ones vector), then the gradient and Hessian can be written as
\[
\nabla \ell_{0,j}(\vartheta)=\frac1{n_0}\,Z^{(0)\top}_{\setminus j}\big(\widehat p_j(\vartheta)-y^\dagger_j\big),
\]
\[
\nabla^2 \ell_{0,j}(\vartheta)=\frac1{n_0}\,Z^{(0)\top}_{\setminus j}\,W_j(\vartheta)\,Z^{(0)}_{\setminus j},
\]
where $\widehat p_j(\vartheta)=(\widehat p_{j,1}(\vartheta),\dots,\widehat p_{j,n_0}(\vartheta))^\top$ and
$W_j(\vartheta)=\mathrm{diag}\!\big(\widehat p_{j,i}(\vartheta)(1-\widehat p_{j,i}(\vartheta))\big)$.

\subsection{Rates and Selection Consistency of Algorithm~\ref{alg:oracle-trans-ising}}
\label{sec:theory-selection}


\noindent
\textbf{Assumptions and Lemmas.}
All technical assumptions, lemmas and their discussions used in this section are stated in Appendix~\ref{app:assumptions} and~\ref{app:lemmas}.
Conditions~\ref{ass:rsc-pooled_rewrite}--\ref{cond:step2-cone_rewrite} are theorem-local inputs for empirical curvature, local optimization, and the Step~2 cone bound; they are separated from the primitive model and population assumptions.


\begin{theorem}[Transfer-type $\ell_2$ and $\ell_1$ rates for Oracle Trans-Ising]\label{thm:transfer_rewrite}
Consider Algorithm~\ref{alg:oracle-trans-ising} for a fixed node $j$.
Suppose Assumptions~\ref{ass:independent_samples}, \ref{ass:sparsity_rewrite},
\ref{ass:comparability_rewrite}, and \ref{ass:boundedness_rewrite} hold.
Suppose Conditions~\ref{ass:rsc-pooled_rewrite}, \ref{ass:rsc-target_rewrite},
\ref{ass:local-step2_rewrite}, and \ref{cond:step2-cone_rewrite} hold
simultaneously with probability at least $1-C_1p^{-C_2}$.
Choose tuning parameters
\[
\begin{aligned}
\lambda_w=C_w\sqrt{\frac{\log p}{N}}, \quad
\lambda_\delta=C_\delta\sqrt{\frac{\log p}{n_0}},\quad
\lambda=C_\lambda\sqrt{\frac{\log p}{n_0}},
\end{aligned}
\]
where $\lambda$ is the SCAD penalty level in \eqref{eq:step2_rewrite} and
$C_w,C_\delta,C_\lambda>0$ are sufficiently large absolute constants.
Assume further that
\[
\begin{aligned}
&\frac{\log p}{n_0}=o(1),\quad s_j\sqrt{\frac{\log p}{N}}+h_j \lesssim \lambda_\delta,\\
&s_j\frac{\log p}{N}+\frac{\log p}{N}h_j^2=o(1).
\end{aligned}
\]
Define
\[
\begin{aligned}
r_{n,j}
:= C\bigg[
&\sqrt{\frac{s_j\log p}{N}}
+\Big(\big[(\tfrac{\log p}{N})^{1/4}h_j^{1/2}\big]\wedge h_j\Big)\\
&+\Big(\big[(\tfrac{\log p}{n_0})^{1/4}h_j^{1/2}\big]\wedge h_j\Big)
\bigg]
\end{aligned}
\]
for a sufficiently large constant $C>0$. Then, for constants $C_0,c_0>0$, with probability at least $1-C_0p^{-c_0}$,
\begin{align*}
\|\hat\theta_{\setminus j}-\theta^\ast_{\setminus j}\|_2
 &\le r_{n,j},
\qquad
\|\hat\theta_{\setminus j}-\theta^\ast_{\setminus j}\|_1
\lesssim
s_j\sqrt{\frac{\log p}{N}}+h_j.
\end{align*}
\end{theorem}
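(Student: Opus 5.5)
The proof follows the two-step template of \citet{trans-glm}, adapted to the signed-design nodewise loss. The pooled analysis is carried out around the population minimizer $w^\ast_{A,\setminus j}$, and the correction analysis around $\delta^\ast_{\setminus j}=\theta^\ast_{\setminus j}-w^\ast_{A,\setminus j}$. The bound $\|\delta^\ast_{\setminus j}\|_1\le Ch_j$ from Lemma~\ref{lem:bias_rewrite} (under Assumption~\ref{ass:comparability_rewrite}) links the two steps.

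\textbf{Step 1 (pooled lasso).} First I control the score. By the first-order condition $\nabla\mathcal L_{A,j}(w^\ast_{A,\setminus j})=0$, the pooled empirical gradient $\nabla\ell_{A,j}(w^\ast_{A,\setminus j})$ is a centered average of $N$ independent vectors (Assumption~\ref{ass:independent_samples}). Its coordinates are bounded because $|z_{i,\ell}|=2$ and $|f'|\le1$. Hoeffding's inequality and a union bound over the $p-1$ coordinates then give $\|\nabla\ell_{A,j}(w^\ast_{A,\setminus j})\|_\infty\le \lambda_w/2$ with probability $1-p^{-c}$.

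Next I handle the fact that $w^\ast_{A,\setminus j}$ is not exactly sparse. I compare it with $\theta^\ast_{\setminus j}$, which is $s_j$-sparse and within $\ell_1$-distance $Ch_j$. The standard basic inequality with approximate-sparsity slack, combined with the pooled RSC of Condition~\ref{ass:rsc-pooled_rewrite}, then yields
\[
\|\hat w^A_{\setminus j}-w^\ast_{A,\setminus j}\|_2\lesssim \sqrt{s_j\lambda_w^2}+\big(\sqrt{\lambda_w h_j}\wedge h_j\big),\qquad
\|\hat w^A_{\setminus j}-w^\ast_{A,\setminus j}\|_1\lesssim s_j\lambda_w+h_j .
\]
The RSC tolerance term is absorbed using $s_j\log p/N+h_j^2\log p/N=o(1)$. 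The minimum with $h_j$ comes from two cases. If $h_j\lesssim\lambda_w s_j$, the slack is dominated. Otherwise, one bounds the error directly by the $\ell_1$ ball radius, with the $\ell_2$ error at most the $\ell_1$ error. This gives the $(\log p/N)^{1/4}h_j^{1/2}\wedge h_j$ term.

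\textbf{Step 2 (correction).} Write $\hat w:=\hat w^A_{\setminus j}$ and define the target-centered quantity $\tilde\delta:=\theta^\ast_{\setminus j}-\hat w$. Then $\|\tilde\delta\|_1\le \|\delta^\ast_{\setminus j}\|_1+\|\hat w-w^\ast_{A,\setminus j}\|_1\lesssim s_j\lambda_w+h_j\lesssim\lambda_\delta$ by the assumed rate condition.

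The approximate stationarity of Condition~\ref{ass:local-step2_rewrite} gives a KKT residual $\le\varepsilon_{n,j}$. I take its inner product with $\hat\delta^A_{\setminus j}-\tilde\delta$, which equals $\hat\theta_{\setminus j}-\theta^\ast_{\setminus j}=:\Delta$. The terms are handled as follows.
\begin{itemize}
\item The target score $\|\nabla\ell_{0,j}(\theta^\ast_{\setminus j})\|_\infty\lesssim\sqrt{\log p/n_0}\le\lambda_\delta/4$, again by Hoeffding and a union bound; it is exactly centered because $\theta^\ast$ is the true conditional parameter.
\item The $\ell_1$ penalty subgradient contributes a cone-type inequality with slack $2\lambda_\delta\|\tilde\delta\|_1$.
\item The SCAD subgradient is bounded by $\lambda$ in sup norm. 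Because SCAD is weakly convex with concavity $1/(a-1)$, its nonconvexity is absorbed by the target RSC constant in Condition~\ref{ass:rsc-target_rewrite}.
\end{itemize}
Condition~\ref{cond:step2-cone_rewrite} then places $\Delta$ in a cone $\|\Delta\|_1\lesssim\sqrt{s_j}\|\Delta\|_2+\|\tilde\delta\|_1$ up to constants. Combining with target RSC gives $\kappa\|\Delta\|_2^2\lesssim\lambda_\delta\|\Delta\|_1+\lambda\|\Delta\|_1$.

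I then split into two cases, as in Step~1.
\begin{itemize}
\item If $\|\tilde\delta\|_1$ dominates, then $\|\Delta\|_2^2\lesssim\lambda_\delta\|\tilde\delta\|_1$, so $\|\Delta\|_2\lesssim(\log p/n_0)^{1/4}(h_j+s_j\lambda_w)^{1/2}$. Its $h_j$ part gives the third term of $r_{n,j}$, capped by $h_j$ via $\|\Delta\|_2\le\|\Delta\|_1$. Its $s_j\lambda_w$ part is bounded by the first two terms using $s_j\lambda_w\lesssim\lambda_\delta$.
\item Otherwise, I need that the SCAD penalty leaves strong target coordinates unpenalized and that the $\lambda$-level contribution on the support recovers the $\sqrt{s_j\log p/N}$ variance rate rather than $\sqrt{s_j\log p/n_0}$. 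This is where the leading term must come from the pooled step, with $\Delta$ inheriting the Step~1 error plus the correction error.
\end{itemize}

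\textbf{Main obstacle.} The hardest part is the second case: showing that the Step~2 error is not forced up to the target-only rate $\sqrt{s_j\log p/n_0}$. I would argue that $\Delta=(\hat\delta^A_{\setminus j}-\delta^\ast_{\setminus j})-(\hat w-w^\ast_{A,\setminus j})$ and bound the correction error $\hat\delta^A_{\setminus j}-\delta^\ast_{\setminus j}$ alone. Its target is $h_j$-small in $\ell_1$, so its $\ell_2$ error is $\lesssim\sqrt{\lambda_\delta h_j}\wedge h_j$, with the sparse-support contribution vanishing because $\delta^\ast$ is not assumed to have $s_j$ strong coordinates. The triangle inequality with the Step~1 bound then gives $r_{n,j}$. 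Verifying that the SCAD term does not reintroduce a $\sqrt{s_j}\lambda$ contribution will rely on Condition~\ref{cond:step2-cone_rewrite}. The $\ell_1$ bound follows from the cone inequality together with $\|\hat w-w^\ast_{A,\setminus j}\|_1\lesssim s_j\lambda_w+h_j$. A final union bound over the Hoeffding events and the conditions yields probability $1-C_0p^{-c_0}$.
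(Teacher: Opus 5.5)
Your final architecture in the \emph{Main obstacle} paragraph is exactly the paper's proof: bound $\Delta_w=\hat w-w^\ast_{A,\setminus j}$ by the pooled-lasso argument (your Step~1 matches Lemma~\ref{lem:step1-pooled_rewrite}), bound $v=\hat\delta^A_{\setminus j}-\delta^\ast_{\setminus j}$ separately, and add the two. Note that $\hat\theta_{\setminus j}-\theta^\ast_{\setminus j}=\Delta_w+v$, not $v-\Delta_w$.

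The gap is that the Step~2 argument you actually write out does not work, and the replacement is only gestured at. Centering at $\tilde\delta=\theta^\ast_{\setminus j}-\hat w$ turns the whole Step~1 error into an $\ell_1$ slack, so the best you can get is $\|\Delta\|_2\lesssim\sqrt{\lambda_\delta\|\tilde\delta\|_1}$. The piece $\sqrt{\lambda_\delta s_j\lambda_w}$ equals $\sqrt{s_j}\lambda_w\,(N/n_0)^{1/4}$ up to constants. It is therefore \emph{not} bounded by the first two terms of $r_{n,j}$, so your first case already fails at $h_j=0$ when $N\gg n_0$. Bounding it by $\lambda_\delta$ instead only gives the target-only scale. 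You also misread Condition~\ref{cond:step2-cone_rewrite}. It is a cone for $v$ around $T=\{k:|\delta^\ast_{jk}|>\lambda_\delta\}$, not a cone for $\Delta$ around $S_j$, and the latter reading is what created your spurious $\sqrt{s_j}\,\lambda$ concern.

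To make the $v$-route rigorous, you need four specific ingredients.
\begin{enumerate}
\item Use the local-minimality part of Condition~\ref{ass:local-step2_rewrite}, namely $Q(\delta^\ast+v)\le Q(\delta^\ast)$, as the basic inequality. If you take the inner product of the KKT residual instead, you pick up a term $\varepsilon_{n,j}\|v\|_1$. Theorem~\ref{thm:transfer_rewrite} imposes no bound such as $\varepsilon_{n,j}\lesssim\lambda_\delta$, so that term cannot be controlled.
\item Expand around $\theta^\dagger=\hat w+\delta^\ast=\theta^\ast_{\setminus j}+\Delta_w$. Then the Step~1 error enters only through the score,
\[
\|\nabla\ell_{0,j}(\theta^\dagger)\|_\infty\le\|\nabla\ell_{0,j}(\theta^\ast_{\setminus j})\|_\infty+\|\Delta_w\|_1\lesssim\lambda_\delta,
\]
by Lemma~\ref{lem:hess_max_bound} and the rate condition. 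It never appears as an additive slack.
\item Bound the SCAD difference by $\lambda\|v\|_1$, using only that $P_\lambda$ is $\lambda$-Lipschitz. The weak-convexity absorption you invoke would need something like $\kappa_0>1/(a-1)$, which is neither assumed nor needed.
\item Use $\|\delta^\ast\|_1-\|\delta^\ast+v\|_1\le\|v_T\|_1-\|v_{T^c}\|_1+2C_hh_j$ with $|T|\le C_hh_j/\lambda_\delta$. Condition~\ref{cond:step2-cone_rewrite} then gives $\|v\|_1\lesssim\sqrt{|T|}\,\|v\|_2+h_j$. Hence every linear term, including the $\lambda_\delta$ and $\lambda$ terms, is $\lesssim\sqrt{\lambda_\delta h_j}\,\|v\|_2+\lambda_\delta h_j$.
\end{enumerate}
Target RSC then yields $\|v\|_2\lesssim\sqrt{\lambda_\delta h_j}\wedge h_j$ and $\|v\|_1\lesssim h_j$, which is Lemma~\ref{lem:step2-bias_rewrite}. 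The triangle inequality with your Step~1 bounds finishes the proof.
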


\noindent
\textbf{Decomposition of the error bound.}
Theorem~\ref{thm:transfer_rewrite} decomposes the final error into:
\begin{align*}
& \underbrace{\sqrt{\frac{s_j\log p}{N}}}_{\text{pooled variance reduction}}
+
\underbrace{\Big((\tfrac{\log p}{N})^{1/4}h_j^{1/2}\wedge h_j\Big)}_{\text{Step~1 heterogeneity/approximation}} \\
&\qquad \qquad 
+ \underbrace{\Big((\tfrac{\log p}{n_0})^{1/4}h_j^{1/2}\wedge h_j\Big)}_{\text{Step~2 correction limit}}.
\end{align*}
Compared to the target-only rate $O_p(\sqrt{s_j\log p/n_0})$, the bound above is smaller when $N\gg n_0$ and the two heterogeneity terms are smaller than the variance term $\sqrt{s_j\log p/N}$.
In particular, in the idealized homogeneous case $h_j=0$, the bound reduces to the pooled rate up to constants,
$\sqrt{s_j\log p/N}$.
As $h_j$ grows, the heterogeneity terms dominate the variance term and a large $N$ no longer reduces the bound, consistent with the negative transfer of naive pooling under distribution shift.

Let $a>2$ denote the SCAD shape parameter in the selection analysis, and let
$\varepsilon_{n,j}$ denote the Step~2 coordinate KKT tolerance in
Condition~\ref{ass:local-step2_rewrite}.
\begin{theorem}[Support recovery via SCAD regularization]
\label{thm:selection_rewrite}
Let $\hat\theta_{\setminus j}=\hat w^A_{\setminus j}+\hat\delta^A_{\setminus j}$ be the estimator from
\eqref{eq:step1_rewrite}--\eqref{eq:step2_rewrite}. Define the estimated neighborhood by the
\emph{nonzero pattern}
\[
\hat S_j:=\{k\neq j:\ \hat\theta_{jk}\neq 0\}.
\]
Assume the conditions of Theorem~\ref{thm:transfer_rewrite} hold. Let $r_{n,j}$ be defined as in Theorem~\ref{thm:transfer_rewrite}. Suppose
\[
r_{n,j}\le \lambda,
\]
\[
\lambda-\lambda_\delta-\varepsilon_{n,j}
\ge
C_{\mathrm{sel}}
\left(
\sqrt{\frac{\log p}{n_0}}
+
s_j\sqrt{\frac{\log p}{N}}
+
h_j
\right)
\]
for a sufficiently large constant $C_{\mathrm{sel}}>0$, and suppose
\[
|\theta^\ast_{jm}| \ge a\lambda+r_{n,j}
\qquad \text{for all } m\in S_j .
\]
Then,
\[
\Prob\!\left(
\begin{array}{c}
\hat S_j=S_j,\\
\sign(\hat\theta_{jk})=\sign(\theta^\ast_{jk})\ \text{for all }k\in S_j
\end{array}
\right)\to 1.
\]
\end{theorem}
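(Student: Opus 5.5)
We work on the event $\mathcal E$ on which the conclusions of Theorem~\ref{thm:transfer_rewrite} hold, the Conditions~\ref{ass:rsc-pooled_rewrite}--\ref{cond:step2-cone_rewrite} hold, and the target gradient satisfies $\|\nabla\ell_{0,j}(\theta^\ast_{\setminus j})\|_\infty\le C\sqrt{\log p/n_0}$. The last bound follows from Hoeffding's inequality, because $\nabla\ell_{0,j}(\theta^\ast_{\setminus j})$ is an average of bounded mean-zero terms: the conditional score of the true nodewise model has expectation zero and $|z_{i\ell}|\le 2$. A union bound over the $p-1$ coordinates gives this with probability $1-O(p^{-c})$, so $\Prob(\mathcal E)\to1$. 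Everything below is deterministic on $\mathcal E$, and we split into two cases.

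\emph{Case 1: true support and signs.} Take $m\in S_j$. Theorem~\ref{thm:transfer_rewrite} gives $|\hat\theta_{jm}-\theta^\ast_{jm}|\le\|\hat\theta_{\setminus j}-\theta^\ast_{\setminus j}\|_2\le r_{n,j}$. Combined with the beta-min condition this yields $|\hat\theta_{jm}|\ge a\lambda$. Hence $\hat\theta_{jm}\ne0$ and $\sign(\hat\theta_{jm})=\sign(\theta^\ast_{jm})$. This also places $\hat\theta_{jm}$ in the flat region of SCAD, so $P'_\lambda(|\hat\theta_{jm}|)=0$. We do not use this fact for the support statement itself, but it explains why strong coefficients incur no additional shrinkage.

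\emph{Case 2: no false positives.} Suppose for contradiction that $k\notin S_j$ but $\hat\theta_{jk}\ne0$. Because $\theta^\ast_{jk}=0$, we have $|\hat\theta_{jk}|\le r_{n,j}\le\lambda$, so the SCAD penalty is in its $\ell_1$ regime at $\hat\theta_{jk}$ and its derivative has magnitude exactly $\lambda$ with sign $\sign(\hat\theta_{jk})$. The $\ell_1$ term on $\delta$ contributes some subgradient $v_k\in\lambda_\delta\,\partial|\hat\delta_{jk}|$, and $|v_k|\le\lambda_\delta$ whatever the value of $\hat\delta_{jk}$. Condition~\ref{ass:local-step2_rewrite} supplies a subgradient $\xi$ with $\|\xi\|_\infty\le\varepsilon_{n,j}$. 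Its $k$-th coordinate reads $\xi_k=[\nabla\ell_{0,j}(\hat\theta_{\setminus j})]_k+v_k+\lambda\sign(\hat\theta_{jk})$. It follows that
\[
\bigl|[\nabla\ell_{0,j}(\hat\theta_{\setminus j})]_k\bigr|\ge\lambda-\lambda_\delta-\varepsilon_{n,j}.
\]
We now bound the left side from above. By the mean-value theorem,
\[
\nabla\ell_{0,j}(\hat\theta_{\setminus j})=\nabla\ell_{0,j}(\theta^\ast_{\setminus j})+\bar H(\hat\theta_{\setminus j}-\theta^\ast_{\setminus j}),\qquad \bar H=\int_0^1\nabla^2\ell_{0,j}\bigl(\theta^\ast_{\setminus j}+t(\hat\theta_{\setminus j}-\theta^\ast_{\setminus j})\bigr)\,dt.
\]
Since $W_j\preceq\tfrac14 I$ and $|z_{i\ell}|\le2$, we get $\|\bar H\|_{\max}\le1$. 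This is the step where avoiding any irrepresentable condition pays off: we use only the crude bound $\|\bar H v\|_\infty\le\|\bar H\|_{\max}\|v\|_1$ together with the $\ell_1$ rate of Theorem~\ref{thm:transfer_rewrite}. The result is
\[
\bigl|[\nabla\ell_{0,j}(\hat\theta_{\setminus j})]_k\bigr|\le C\Bigl(\sqrt{\tfrac{\log p}{n_0}}+s_j\sqrt{\tfrac{\log p}{N}}+h_j\Bigr).
\]
With $C_{\mathrm{sel}}>C$, the separation hypothesis makes this strictly smaller than $\lambda-\lambda_\delta-\varepsilon_{n,j}$, which contradicts the lower bound. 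So $\hat\theta_{jk}=0$, and the two cases together give $\hat S_j=S_j$ with correct signs on $\mathcal E$.

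\emph{Main obstacle.} The delicate point is the KKT calculus at a nonsmooth, nonconvex local solution. The subdifferential of $Q_j$ must split coordinatewise into the loss gradient, an $\ell_1$ subgradient in $\delta$, and the SCAD (Clarke) derivative evaluated at $\hat w^A_{jk}+\hat\delta_{jk}$. This split is legitimate because the $\ell_1$ and SCAD terms are both separable and act on the same coordinate through an affine shift; I would make it precise using the local-solution formulation in Condition~\ref{ass:local-step2_rewrite}. A secondary concern is that the constant $C$ in the gradient bound must not depend on $C_{\mathrm{sel}}$. This holds because $C$ comes only from Hoeffding's inequality and from the $\ell_1$ bound of Theorem~\ref{thm:transfer_rewrite}, neither of which involves $C_{\mathrm{sel}}$.
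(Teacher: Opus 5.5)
Your proposal is correct and follows the paper's proof essentially step for step. You get no false negatives and sign consistency from $\|\hat\theta_{\setminus j}-\theta^\ast_{\setminus j}\|_\infty\le r_{n,j}$ plus beta-min, and you exclude false positives by contradiction: the coordinatewise KKT residual of Condition~\ref{ass:local-step2_rewrite}(i), with SCAD slope exactly $\lambda$ on $0<|\hat\theta_{jk}|\le\lambda$, is played against the upper bound $\|\nabla\ell_{0,j}(\theta^\ast_{\setminus j})\|_\infty+\|\hat\theta_{\setminus j}-\theta^\ast_{\setminus j}\|_1$ from Lemmas~\ref{lem:hess_max_bound}, \ref{lem:score_rewrite} and Theorem~\ref{thm:transfer_rewrite}, with the subdifferential splitting you flag as the main obstacle simply taken from the coordinatewise form stated in that condition.
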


Theorem~\ref{thm:selection_rewrite} establishes exact neighborhood recovery and sign consistency under a small-error regime, a separation condition between $\lambda$ and $(\lambda_\delta,\varepsilon_{n,j})$, and a beta-min condition. Compared with lasso, SCAD reduces shrinkage on large signals at the cost of nonconvex optimization and a computable stationary point (Condition~\ref{ass:local-step2_rewrite}).

\begin{remark}[\textbf{Selection Consistency without Irrepresentable Condition}]
\label{rem:no_irrepresentable}
Theorem~\ref{thm:selection_rewrite} establishes support recovery without requiring the mutual incoherence (irrepresentable) condition on the Hessian matrix. 
The result uses the target RSC condition, beta-min condition, separation condition, Step~2 local-solution condition, and Step~2 cone condition.
This follows because the SCAD penalty has a zero derivative above the threshold $a\lambda$, which removes the SCAD shrinkage term on those coordinates.
\end{remark}

\begin{figure*}[tb]
    \centering
    \includegraphics[
        width=\textwidth,
        trim={0 2.0cm 0 0},
        clip
    ]{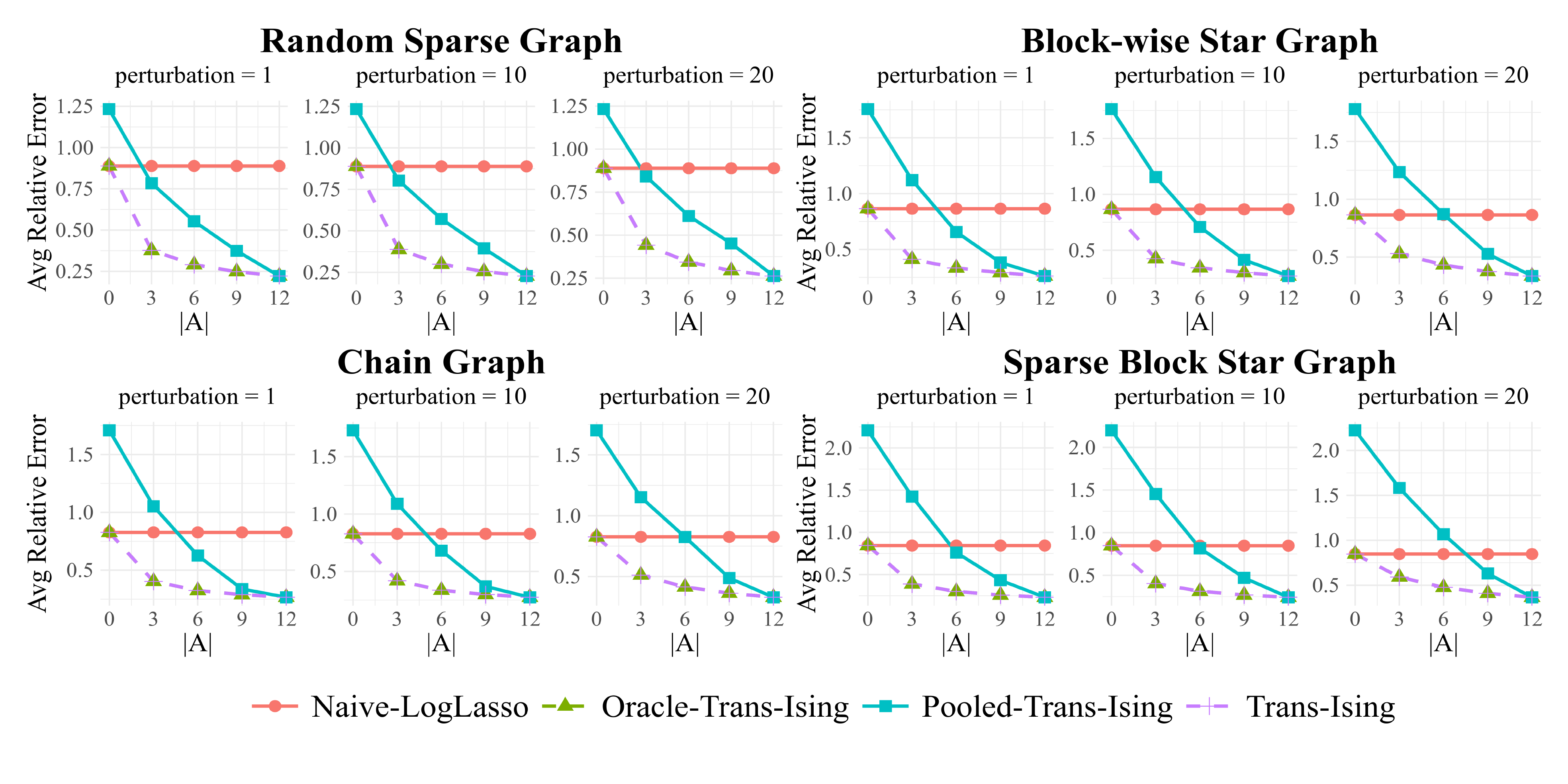}

    \caption{Average relative errors for Naive-LogLasso (red), Oracle Trans-Ising (green), Pooled-Trans-Ising (cyan), and Trans-Ising (purple), across four graph structures at perturbation levels $100\sigma\in\{1,10,20\}$, with $p=200$.}
    \label{fig:sim1}
\end{figure*}

\subsection{Consistency of Informative Source Detection}
\label{subsec:theory_detection}

While the Oracle Trans-Ising estimator uses a known informative source set $\mathcal A$, Algorithm~\ref{alg:trans-ising-source-detection} selects sources from held-out target pseudolikelihood. We state the detection result for the graph-level, risk-defined source set used by the validation rule.

For a parameter matrix $\Theta$, define the target population pseudolikelihood risk
\[
\mathcal L^{\mathrm{pseudo}}_0(\Theta)
:=
\sum_{j=1}^p\mathcal L_{0,j}(\Theta_{j,\setminus j}).
\]
For source $s$ and fold $r\in\{1,2\}$, put $n_{0,-r}:=n_0-n_{0,r}$, where $n_{0,r}$ is the validation-fold size in the source-detection split, and define
\[
\alpha_{0r,s}:=\frac{n_{0,-r}}{n_{0,-r}+n_s},
\qquad
\alpha_{sr}:=\frac{n_s}{n_{0,-r}+n_s}.
\]
For each $j=1,\ldots,p$, define the fold-specific rowwise population pooled minimizer $W^{(0+s),\ast,r}$ through
\[
\begin{aligned}
w^{(0+s),\ast,r}_{\setminus j}
&:=
\arg\min_{u\in\mathbb R^{p-1}}
\left\{
\alpha_{0r,s}\mathcal L_{0,j}(u)
+\alpha_{sr}\mathcal L_{s,j}(u)
\right\}.
\end{aligned}
\]
Define
\[
\mathcal E(s)
:=
\frac12\sum_{r=1}^2
\mathcal L^{\mathrm{pseudo}}_0(W^{(0+s),\ast,r})
-
\mathcal L^{\mathrm{pseudo}}_0(\theta^\ast).
\]
Fix a separation constant $c_{\mathrm{gap}}>0$ and a sequence $\nu_n>0$.
The population informative set at level $(c_{\mathrm{gap}},\nu_n)$ is
\[
\mathcal A_h:=\{s\in\{1,\ldots,S\}:\mathcal E(s)\le c_{\mathrm{gap}}\nu_n\},
\]

\begin{theorem}[Consistency of source detection]
\label{thm:detection_consistency}
Suppose Assumption~\ref{ass:independent_samples} and Condition~\ref{ass:detection_sep} hold. Let $\hat{\mathcal A}$ be the set selected by Algorithm~\ref{alg:trans-ising-source-detection} with threshold $\tau=C_\tau\hat\sigma$. Then
\[
\Prob(\hat{\mathcal A}=\mathcal A_h)\to 1.
\]
\end{theorem}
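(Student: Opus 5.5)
The plan is to reduce the event $\{\hat{\mathcal A}=\mathcal A_h\}$ to a union of $S$ per-source events and control each one through Condition~\ref{ass:detection_sep}. That condition is taken to supply three things. First, a validation-loss deviation bound: with probability tending to one, simultaneously over $s\in\{0,1,\dots,S\}$ and $r\in\{1,2\}$, the empirical held-out loss $\hat{\mathcal L}_s^{(r)}$ lies within $\nu_n$ of its population counterpart, namely $\mathcal L^{\mathrm{pseudo}}_0(W^{(0+s),\ast,r})$ for sources and $\mathcal L^{\mathrm{pseudo}}_0(\theta^\ast)$ for the baseline. Second, a graph-level separation: $\mathcal E(s)\ge C_{\mathrm{hi}}\nu_n$ for $s\notin\mathcal A_h$, with $C_{\mathrm{hi}}>c_{\mathrm{gap}}$ plus a margin. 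Third, a calibration of the adaptive threshold: $\tau=C_\tau\hat\sigma$ lies, with probability tending to one, in an interval $[c_{\mathrm{gap}}\nu_n+2\nu_n+\epsilon_n,\;C_{\mathrm{hi}}\nu_n-2\nu_n-\epsilon_n]$.

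\textbf{Step 1: decompose $\Delta_s$.} Write
\[
\Delta_s-\mathcal E(s)=\frac12\sum_{r=1}^2\Big[\big(\hat{\mathcal L}_s^{(r)}-\mathcal L^{\mathrm{pseudo}}_0(W^{(0+s),\ast,r})\big)-\big(\hat{\mathcal L}_0^{(r)}-\mathcal L^{\mathrm{pseudo}}_0(\theta^\ast)\big)\Big].
\]
The baseline term compares to $\theta^\ast$, because the target-only population minimizer of $\mathcal L_{0,j}$ is $\theta^\ast_{\setminus j}$ by correct specification of the conditional logistic model. Each bracket splits into two parts. The first is an estimation part, $\mathcal L^{\mathrm{pseudo}}_0(\hat\theta^{(\cdot)[-r]})-\mathcal L^{\mathrm{pseudo}}_0(\text{pop.\ minimizer})$. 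The second is an empirical-process part: the held-out average minus its expectation, evaluated at the fitted estimate.

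\textbf{Step 2: bound the two parts.} Assumption~\ref{ass:independent_samples} makes the held-out fold independent of $\hat\theta^{(\cdot)[-r]}$. Conditionally on the training data, the empirical-process part is therefore an average of i.i.d.\ bounded terms. Each term is bounded by $\sum_j f(\cdot)\lesssim p\,\|\hat\Theta\|_\infty$, using boundedness of the spins, so Hoeffding's inequality plus a union bound over $2(S+1)$ fits gives deviation of order $\sqrt{\log(S)/n_{0,r}}$ times that scale. The estimation part is handled by applying the nodewise logistic-lasso analysis of Theorem~\ref{thm:transfer_rewrite}'s Step~1 to each pooled pair $\{0,s\}$ with weights $(\alpha_{0r,s},\alpha_{sr})$, and summing over $j$. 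Both parts are absorbed into the deviation level $\nu_n$ of Condition~\ref{ass:detection_sep}. Combining them yields $\max_s|\Delta_s-\mathcal E(s)|\le\nu_n$ with probability tending to one.

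\textbf{Step 3: conclude.} On the intersection of the deviation event and the calibration event, each source falls on the correct side of the threshold:
\begin{itemize}
\item if $s\in\mathcal A_h$, then $\Delta_s\le \mathcal E(s)+\nu_n\le (c_{\mathrm{gap}}+1)\nu_n<\tau$, so $s\in\hat{\mathcal A}$;
\item if $s\notin\mathcal A_h$, then $\Delta_s\ge C_{\mathrm{hi}}\nu_n-\nu_n>\tau$, so $s\notin\hat{\mathcal A}$.
\end{itemize}
Hence $\hat{\mathcal A}=\mathcal A_h$ on this event, whose probability tends to one.

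\textbf{Main obstacle.} The hard step is the adaptive threshold. $\hat\sigma$ is built from only two fold losses, so it is a single random difference $|\hat{\mathcal L}_0^{(1)}-\hat{\mathcal L}_0^{(2)}|/\sqrt2$ and does not concentrate. Its scale must nonetheless be pinned relative to $\nu_n$ and the gap. I would not derive this from primitives. Instead I would invoke the threshold-calibration clause of Condition~\ref{ass:detection_sep} directly, stated as $\Prob(\tau\in[\tau_-,\tau_+])\to1$ with the margins above, and verify only that the argument needs nothing beyond that clause. A secondary subtlety is a source with $\mathcal E(s)$ in the gap $(c_{\mathrm{gap}}\nu_n,C_{\mathrm{hi}}\nu_n)$; the separation clause is what excludes such sources.
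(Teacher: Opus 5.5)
Your Step~3 is the same sandwich argument the paper uses. What goes wrong is the quantitative form you gave Condition~\ref{ass:detection_sep}, and with your constants the argument does not close under the hypothesis the theorem actually assumes.

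The condition sets the non-informative level at exactly $(1+c_{\mathrm{gap}})\nu_n$. There is no $C_{\mathrm{hi}}$ with extra margin. Clause (iii) bounds the scores directly, $\max_{s}|\Delta_s-\mathcal E(s)|\le C\epsilon_n$ with $\epsilon_n=o(\nu_n)$. Clause (iv) gives the window $c_{\mathrm{gap}}\nu_n+C\epsilon_n<C_\tau\hat\sigma<(1+c_{\mathrm{gap}})\nu_n-C\epsilon_n$.

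The gap between the two population levels is therefore only $\nu_n$. A deviation of order $\nu_n$ cannot be absorbed: informative scores could rise to $(1+c_{\mathrm{gap}})\nu_n$ and non-informative ones fall to $c_{\mathrm{gap}}\nu_n$, so no threshold separates them. Your own decomposition in Step~1 actually gives $2\nu_n$, not $\nu_n$, since each bracket carries two $\nu_n$-errors. Your window $[c_{\mathrm{gap}}\nu_n+2\nu_n+\epsilon_n,\,C_{\mathrm{hi}}\nu_n-2\nu_n-\epsilon_n]$ is also empty when $C_{\mathrm{hi}}=1+c_{\mathrm{gap}}$.

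With the clauses as actually stated, the argument is two lines. For $s\in\mathcal A_h$, $\Delta_s\le c_{\mathrm{gap}}\nu_n+C\epsilon_n<C_\tau\hat\sigma$. For $s\notin\mathcal A_h$, $\Delta_s\ge(1+c_{\mathrm{gap}})\nu_n-C\epsilon_n>C_\tau\hat\sigma$. Both hold on an event of probability tending to one.

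Step~2 should be deleted. It is unnecessary, because clause (iii) already controls $\Delta_s-\mathcal E(s)$. It is also not supported by the theorem's hypotheses, which are only Assumption~\ref{ass:independent_samples} and Condition~\ref{ass:detection_sep}:
\begin{itemize}
\item The Step~1 lasso analysis (Lemma~\ref{lem:step1-pooled_rewrite}) needs sparsity, comparability, and pooled RSC, none of which is assumed here.
\item You should not expect those inputs for non-informative $s$, since the off-support mass of $W^{(0+s),\ast,r}$ is not controlled by $h_j$.
\item Summing nodewise risk gaps over $j$ costs an extra factor of $p$, so absorbing everything into $\nu_n$ is unjustified.
\end{itemize}
The paper derives such a deviation bound separately, in Proposition~\ref{prop:suff_detection}, from different inputs (Condition~\ref{cond:suff_validation}). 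Those inputs include an assumed risk-stability level $b_n$ and an assumed threshold stability $|\hat\sigma-\sigma_n|\le b_n$. This matches your remark that the two-fold $\hat\sigma$ cannot be pinned down without an assumed calibration.
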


\begin{corollary}[Oracle properties after source detection]
\label{cor:detection_oracle_rates}
Fix a node $j$ and define $N_h:=n_0+\sum_{s\in\mathcal A_h}n_s$.
Suppose that the assumptions and conditions of Theorem~\ref{thm:detection_consistency} hold.
Suppose, in addition, that the assumptions, theorem-local conditions, tuning choices, and rate conditions of Theorem~\ref{thm:transfer_rewrite} hold with $\mathcal A$ and $N$ replaced by $\mathcal A_h$ and $N_h$, respectively.
Let $\hat\theta^{\mathrm{det}}_{\setminus j}$ be the asymmetric nodewise estimator obtained before the AND symmetrization step in the final run of Algorithm~\ref{alg:oracle-trans-ising} inside Algorithm~\ref{alg:trans-ising-source-detection}.
Define, for a sufficiently large constant $C>0$,
\[
\begin{aligned}
r_{n,j}^{(h)}
:={}
C\bigg[\sqrt{\frac{s_j\log p}{N_h}}
&+\Big(\big(\frac{\log p}{N_h}\big)^{1/4}h_j^{1/2}\wedge h_j\Big)\\
&+\Big(\big(\frac{\log p}{n_0}\big)^{1/4}h_j^{1/2}\wedge h_j\Big)\bigg].
\end{aligned}
\]
Then, with probability tending to one,
\[
\|\hat\theta^{\mathrm{det}}_{\setminus j}-\theta^\ast_{\setminus j}\|_2
\le r_{n,j}^{(h)},
\qquad
\|\hat\theta^{\mathrm{det}}_{\setminus j}-\theta^\ast_{\setminus j}\|_1
\lesssim
s_j\sqrt{\frac{\log p}{N_h}}+h_j.
\]
If the conditions of Theorem~\ref{thm:selection_rewrite} also hold with $\mathcal A$ and $N$ replaced by $\mathcal A_h$ and $N_h$, respectively, and
\[
\hat S_j^{\mathrm{det}}
:=
\{k\neq j:\hat\theta^{\mathrm{det}}_{jk}\neq0\},
\]
then, with probability tending to one,
\[
\hat S_j^{\mathrm{det}}=S_j,
\qquad
\sign(\hat\theta^{\mathrm{det}}_{jk})=\sign(\theta^\ast_{jk})
\quad\text{for all }k\in S_j.
\]
\end{corollary}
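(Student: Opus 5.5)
The proof conditions on the event $\mathcal G:=\{\hat{\mathcal A}=\mathcal A_h\}$. By Theorem~\ref{thm:detection_consistency}, $\Prob(\mathcal G)\to 1$. The key observation is that on $\mathcal G$ the final call to Algorithm~\ref{alg:oracle-trans-ising} inside Algorithm~\ref{alg:trans-ising-source-detection} is run on $X^{(0)}$ and $\{X^{(s)}\}_{s\in\mathcal A_h}$. It therefore produces exactly the oracle estimator with informative set $\mathcal A_h$, which we denote $\hat\theta^{\mathrm{or},h}_{\setminus j}$. So we may write $\hat\theta^{\mathrm{det}}_{\setminus j}=\hat\theta^{\mathrm{or},h}_{\setminus j}$ on $\mathcal G$.

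Because $\hat{\mathcal A}$ is data-dependent and built from $X^{(0)}$, the final estimator reuses the full target sample. The argument must therefore not treat the selected set as independent of the data. We avoid this by defining the event $\mathcal B$ on which the conclusions of Theorem~\ref{thm:transfer_rewrite} hold for the deterministic set $\mathcal A_h$. This set is fixed because it is defined through the population quantities $\mathcal E(s)$, $c_{\mathrm{gap}}$, and $\nu_n$.

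By hypothesis, the assumptions, the theorem-local Conditions~\ref{ass:rsc-pooled_rewrite}--\ref{cond:step2-cone_rewrite}, the tuning choices, and the rate conditions all hold with $(\mathcal A,N)$ replaced by $(\mathcal A_h,N_h)$. Theorem~\ref{thm:transfer_rewrite} then gives $\Prob(\mathcal B)\ge 1-C_0p^{-c_0}$. On $\mathcal B$ we have
\[
\|\hat\theta^{\mathrm{or},h}_{\setminus j}-\theta^\ast_{\setminus j}\|_2\le r^{(h)}_{n,j},
\qquad
\|\hat\theta^{\mathrm{or},h}_{\setminus j}-\theta^\ast_{\setminus j}\|_1\lesssim s_j\sqrt{\log p/N_h}+h_j .
\]
Here $r^{(h)}_{n,j}$ is exactly $r_{n,j}$ with $N$ replaced by $N_h$.

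We combine the two events with a union bound: $\Prob(\mathcal G\cap\mathcal B)\ge 1-\Prob(\mathcal G^c)-\Prob(\mathcal B^c)\to 1$, using $p\to\infty$ or $C_0p^{-c_0}\to 0$ in the asymptotic regime. On $\mathcal G\cap\mathcal B$ the detected estimator coincides with the oracle one, so the two rate bounds follow.

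The selection part is identical. Let $\mathcal S$ be the event in Theorem~\ref{thm:selection_rewrite} applied with $(\mathcal A_h,N_h)$, namely $\hat S_j=S_j$ with correct signs. Since $\Prob(\mathcal S)\to 1$ by that theorem, we get $\Prob(\mathcal G\cap\mathcal S)\to1$. On $\mathcal G\cap\mathcal S$, $\hat S^{\mathrm{det}}_j=\hat S_j=S_j$ and the signs agree.

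The only delicate step is justifying that intersecting with $\mathcal G$ is legitimate despite the dependence between selection and estimation. The plan handles this by never conditioning on $\mathcal G$. Instead, it uses the pathwise identity $\hat\theta^{\mathrm{det}}=\hat\theta^{\mathrm{or},h}$ on $\mathcal G$ together with unconditional probability bounds for the oracle estimator with the fixed set $\mathcal A_h$, so no independence is required.
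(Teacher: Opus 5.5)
Your proposal is correct and follows the paper's proof. On $\{\hat{\mathcal A}=\mathcal A_h\}$ the final run coincides with the oracle procedure for the fixed set $\mathcal A_h$, and intersecting this event (probability $\to 1$ by Theorem~\ref{thm:detection_consistency}) with the high-probability events of Theorems~\ref{thm:transfer_rewrite} and~\ref{thm:selection_rewrite} for $(\mathcal A_h,N_h)$ gives both claims; your added remarks that this needs only the pathwise identity and unconditional bounds, not conditioning on the data-dependent selection, and that $C_0p^{-c_0}\to 0$ requires $p\to\infty$, make explicit what the paper leaves implicit.
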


\begin{proof}
On the event $\hat{\mathcal A}=\mathcal A_h$, Algorithm~\ref{alg:trans-ising-source-detection} runs Algorithm~\ref{alg:oracle-trans-ising} with source set $\mathcal A_h$.
Theorem~\ref{thm:detection_consistency} gives $\Prob(\hat{\mathcal A}=\mathcal A_h)\to1$.
Intersecting this event with the high-probability events in Theorems~\ref{thm:transfer_rewrite} and~\ref{thm:selection_rewrite}, with $\mathcal A$ and $N$ replaced by $\mathcal A_h$ and $N_h$, gives the two claims.
This completes the proof.
\end{proof}

Theorem~\ref{thm:detection_consistency} is a conditional result for the screening statistic used in Algorithm~\ref{alg:trans-ising-source-detection}. It shows that graph-level separation, validation-loss deviation, and adaptive-threshold calibration imply exact recovery of $\mathcal A_h$. Proposition~\ref{prop:suff_detection} gives sufficient validation-stability conditions under which Condition~\ref{ass:detection_sep} holds.

\section{Experiments}
\label{sec:exp}

\subsection{Simulation Study}

We empirically evaluate Trans-Ising on synthetic Ising models and real-world applications. We assess whether Trans-Ising (i) reduces estimation error relative to target-only learning, (ii) preserves this reduction through data-driven source detection when heterogeneous sources are present, and (iii) reduces prediction error on real data. We report relative Frobenius error for parameter estimation and edge recovery performance via Precision--Recall (PR) curves (details are in Appendix~\ref{app:metrics_and_sim}).

\noindent
\textbf{Simulation setup.} \label{subsec:sim_setup}
We generate a target dataset with $n_0=160$ samples and $S=12$ auxiliary datasets each with $n_s=300$ samples in a high-dimensional regime ($p=200>n_0$). The target interaction matrix $\theta_{\text{true}}$ is drawn from one of four sparse graph topologies (Appendix~\ref{app:sim_details}; Figure~\ref{fig:network_app}). For each run, $|\mathcal{A}| \in \{0,3,6,9,12\}$ auxiliary sources are \emph{informative}, generated by perturbing $\theta_{\text{true}}$ with symmetric Gaussian noise at levels $\sigma\in\{0.01,0.1,0.2\}$. The remaining sources are generated from a structurally heterogeneous graph to induce negative transfer. All data are sampled using \texttt{IsingSampler} \citep{IsingSampler}. We average results over 100 independent simulation runs.

We compare four methods: (i) \textit{Naive-LogLasso} (target-only nodewise logistic lasso), (ii) \textit{Pooled-Trans-Ising} (naively pooling all sources; i.e., Algorithm~\ref{alg:oracle-trans-ising} with all auxiliaries), (iii) \textit{Oracle Trans-Ising} (informative set known), and (iv) \textit{Trans-Ising} (data-driven source detection). Hyperparameters are tuned proportionally to theoretical rates via cross-validation; full tuning details are in Appendix~\ref{app:tuning}.

\noindent
\textbf{Estimation errors.}
\label{subsec:sim_results}
Figure~\ref{fig:sim1} reports the average relative Frobenius errors across four graph structures. As the number of informative sources $|\mathcal A|$ increases, both Oracle Trans-Ising and Trans-Ising reduce error, consistent with variance reduction from additional compatible samples. The Trans-Ising curve closely tracks the oracle curve across the settings we considered. In contrast, pooling all sources without screening can increase error when heterogeneous (non-informative) sources are present, illustrating negative transfer in this regime.

\noindent
\textbf{Edge recovery.}
We evaluate support recovery using PR curves, which are typically more informative than ROC curves under graph sparsity (Appendix~\ref{app:metrics}). We fix $|\mathcal A|=3$ informative sources and $9$ heterogeneous sources, and vary the perturbation level $\sigma\in\{0.01,0.1,0.2\}$. In Figure~\ref{fig:pr_plots}, Trans-Ising attains PR performance comparable to the oracle benchmark across the graph families we considered. Naive pooling can yield lower precision at the recall values shown in Figure~\ref{fig:pr_plots}, consistent with additional false positives induced by heterogeneous sources. ROC curves are also reported in Appendix~\ref{app:roc} for completeness.

\begin{figure}[tb]
    \centering
    \includegraphics[width=\linewidth]{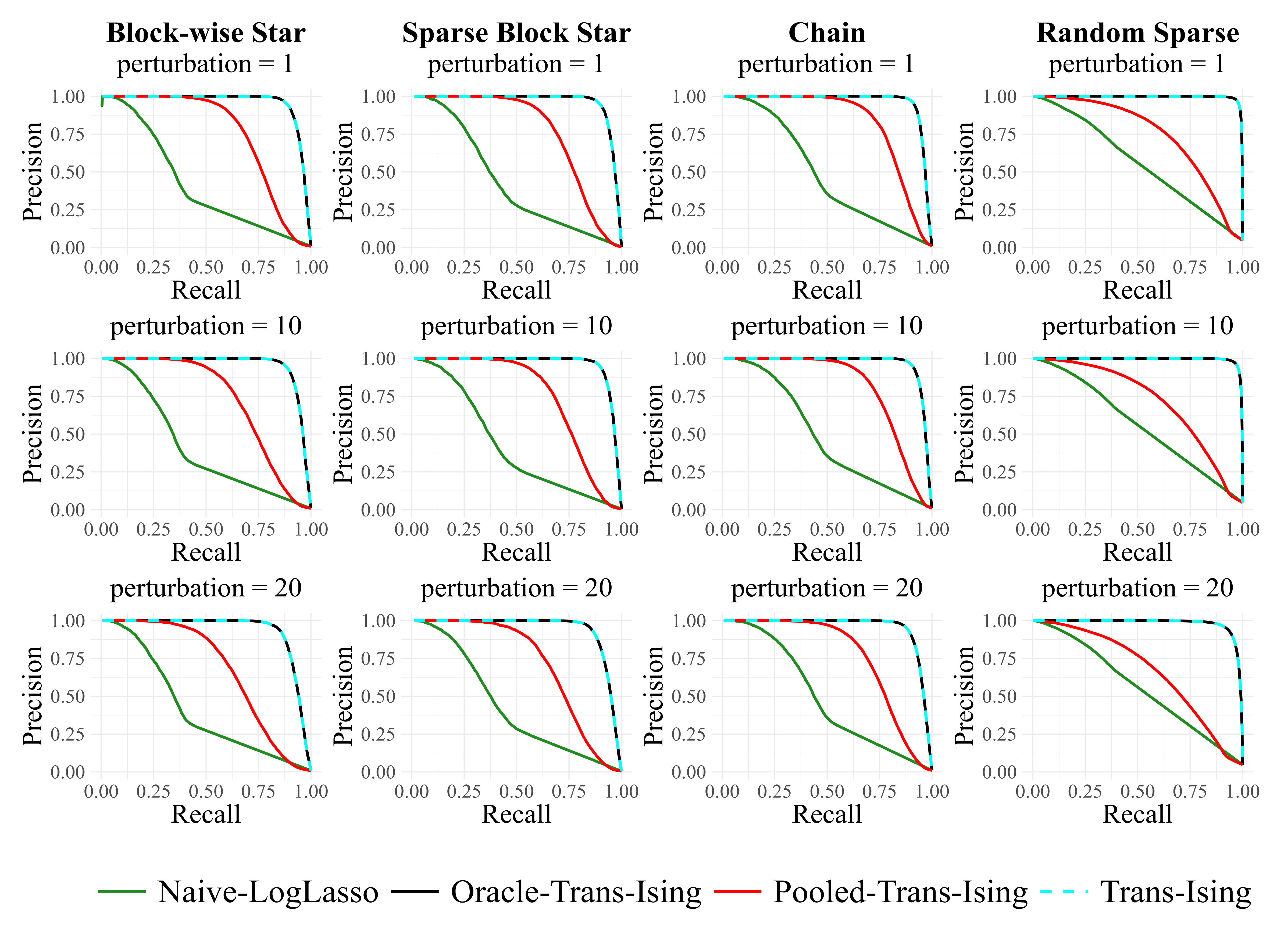}
    \caption{Averaged PR curves for Naive-LogLasso (green), Oracle Trans-Ising (black), Pooled-Trans-Ising (red), and Trans-Ising (cyan, dashed), across four graph structures at perturbation levels $100\sigma\in\{1,10,20\}$, with $p=200$.}
    \label{fig:pr_plots}
\end{figure}

\subsection{Real Data Study 1: Mutation Data Analysis} \label{subsec:depmap}

To evaluate the practical effectiveness of our proposed method, we conduct a real data study using the mutation data from the Cancer Dependency Map (DepMap) portal's \texttt{depmap\_mutationCalls} dataset (release 22Q2) \citep{depmap_22Q2}. 
Interaction-network inference is a common goal in high-dimensional biological data analysis \citep{Marbach2012Wisdom}. 
High-dimensional binary-response modeling for Cancer Cell-Line Encyclopedia data has also been studied through low-rank multiple-response logistic regression and its joint Ising extension \citep{park2024lowrankbinary}.
The data contain binary mutation indicators for over 18{,}000 genes across 1{,}771 cancer cell lines. We select the 200 most frequently mutated genes in the target training data. We treat each cancer type with at least 20 samples as a target study, and use the non-target groups with at least 20 samples as auxiliary studies.

Because the ground-truth interaction network is unknown, we evaluate via prediction. For each target cancer, we run 5-fold cross-validation and report the misclassification error for predicting held-out mutation statuses. Figure~\ref{fig:realdata} shows relative misclassification error compared to the Naive-LogLasso baseline (normalized to 1.0). Trans-Ising consistently improves over the target-only baseline across cancer types, while Pooled-Trans-Ising is less reliable and mostly performs worse than Trans-Ising.

To assess transferability across cancer types, Appendix~\ref{app:depmap_tables} reports the detected informative sources for each target. The number and identity of informative sources are heterogeneous, supporting the need for selective transfer and not indiscriminate pooling. Additional real data studies are provided in Appendix~\ref{app:realdata_online}--\ref{app:realdata_movie}.

\begin{figure}[tb]
    \centering
    \includegraphics[width=\linewidth]{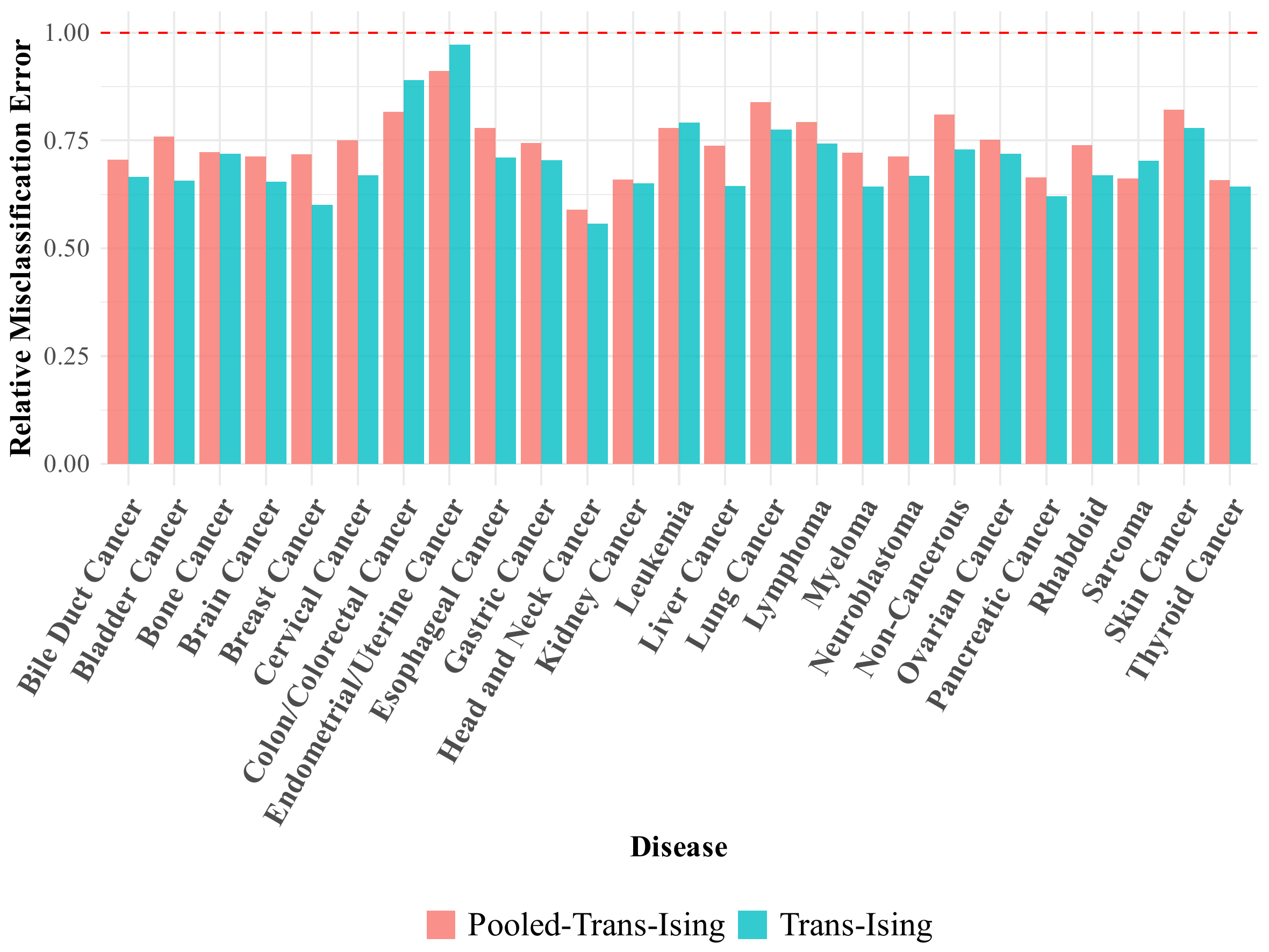}
    \caption{Relative misclassification error rates for Pooled-Trans-Ising (red) and Trans-Ising (cyan) across different target cancers. The error is relative to the Naive-LogLasso baseline (y=1.0).}
    \label{fig:realdata}
\end{figure}




\section{Discussion}
\label{sec:discussion}

Trans-Ising provides a transfer learning method for high-dimensional Ising models that mitigates negative transfer through loss-based screening and dual-penalty correction. Theoretically, our error bounds separate pooled variance reduction from cross-domain heterogeneity ($h_j$), and achieve exact support recovery without imposing restrictive irrepresentable conditions. Empirically, Trans-Ising consistently outperforms target-only and naive-pooling baselines.

A limitation is that cross-validated screening becomes computationally costly when many auxiliaries are available. Future work may explore computationally efficient and scalable screening methods. In addition, when parameters exhibit known group structure, structured regularization such as the logistic group lasso may improve interpretability and stability \citep{Meier2008grouplasso}.

\newpage

\section*{Acknowledgements}
Joonho Kim and Seyoung Park’s work was supported by the National Research Foundation of Korea (NRF) grant funded by the MSIT (No. RS-2025-00517793) and by the Yonsei University Research Fund of 2025-22-0071.

\section*{Impact Statement}
This paper presents work whose goal is to advance the field of Machine Learning. We do not identify specific societal consequences that require separate discussion here.

\bibliographystyle{icml2026/icml2026} 
\bibliography{references} 

\newpage
\onecolumn

\begin{center}
{\large\bf APPENDIX}
\end{center}
\appendix
\setcounter{secnumdepth}{3}
\setcounter{tocdepth}{3}
\renewcommand{\thesection}{S.\arabic{section}}
\renewcommand{\thesubsection}{S.\arabic{section}.\arabic{subsection}}
\renewcommand{\thesubsubsection}{S.\arabic{section}.\arabic{subsection}.\arabic{subsubsection}}

\section{Additional Experimental Details}
\label{app:metrics_and_sim}

\subsection{Evaluation Metrics}
\label{app:metrics}

\paragraph{Frobenius norm and relative error.}
We evaluate estimation accuracy using the relative error under the Frobenius norm:
\[
\frac{\|\hat{\theta}-\theta_{\text{true}}\|_F}{\|\theta_{\text{true}}\|_F},
\]
where $\hat{\theta}$ is the estimated interaction matrix and $\theta_{\text{true}}$ is the ground-truth matrix.

\paragraph{ROC and PR curves for graph recovery.}
We evaluate edge recovery via Receiver Operating Characteristic (ROC) and Precision--Recall (PR) curves by treating edge detection as a binary classification task. For each pair $(j,k)$, we use the absolute value of the estimated parameter $|\hat{\theta}_{jk}|$ as a score, with larger scores indicating higher confidence of an edge. The ROC curve summarizes the tradeoff between
\[
\text{TPR}=\frac{\text{TP}}{\text{TP}+\text{FN}} \quad \text{and} \quad
\text{FPR}=\frac{\text{FP}}{\text{FP}+\text{TN}},
\]
whereas the PR curve summarizes the tradeoff between precision
\[
\text{Precision}=\frac{\text{TP}}{\text{TP}+\text{FP}}
\]
and recall (TPR). We use AUC (ROC) and AUCPR (PR) as summary statistics.

PR curves are often more informative than ROC curves when positives are much rarer than negatives, as in sparse graph recovery \citep{DavisGoadrich2006PRROC,SaitoRehmsmeier2015PRROC}. In such settings, the large number of true negatives can make ROC summaries less sensitive to false positive edge discoveries, whereas PR curves focus on precision among selected edges.

\subsection{Simulation Design and Settings}
\label{app:sim_details}

We generate Ising model data with $p=200$ variables. The ground-truth interaction matrix $\theta_{\text{true}}$ follows one of four sparse graph structures (Figure~\ref{fig:network_app}). For all structures, $\mathrm{diag}(\theta_{\text{true}})=0$.

\paragraph{(i) Sparse block star graph.}
The $p$ nodes are divided into disjoint blocks of size 10. Within each block, one hub node is connected to a random subset of 5 spoke nodes. Edge weights are sampled from $\mathrm{Unif}(0.5,1.5)$.

\paragraph{(ii) Block-wise star graph.}
The $p$ nodes are divided into disjoint blocks of size 10. Within each block, one node is designated as a hub and connected to all other 9 nodes. Edge weights are sampled from $\mathrm{Unif}(0.5,1.5)$.

\paragraph{(iii) Chain graph.}
A path graph where only adjacent pairs $(i,i+1)$ have nonzero interactions. Edge weights are sampled from $\mathrm{Unif}(0.5,1.5)$.

\paragraph{(iv) Random sparse graph.}
We first generate $\theta_{\text{raw}}\sim\mathcal{N}(0,1)^{p\times p}$ and symmetrize as $(\theta_{\text{raw}}+\theta_{\text{raw}}^\top)/4$, yielding off-diagonal entries distributed as $\mathcal{N}(0,1/8)$. We set the diagonal to zero and induce sparsity by thresholding: off-diagonal entries with absolute value below 0.7 are set to zero.

\paragraph{Target and auxiliary sampling.}
We create a primary dataset with $n_0=160$ samples and $S=12$ auxiliary datasets with $n_s=300$ samples each. All Ising samples are generated using \texttt{IsingSampler} from the \textsf{IsingSampler} R package \citep{IsingSampler}. The number of informative sources $|\mathcal{A}|$ is varied across $\{0,3,6,9,12\}$. Informative auxiliaries are generated by adding symmetric Gaussian noise $\Delta^{(s)}$ to $\theta_{\text{true}}$, where off-diagonal entries satisfy $\Delta^{(s)}_{jk}\sim\mathcal{N}(0,\sigma^2)$ and $\sigma\in\{0.01,0.1,0.2\}$.
Non-informative auxiliaries are generated from a structurally heterogeneous graph: a sparse binary matrix is sampled with $P(\theta_{ij}=1)=0.05$, multiplied by standard normal noise, and symmetrized using the element-wise maximum.



\subsection{Tuning details}
\label{app:tuning}

Regularization parameters are tuned proportionally to theoretical rates by cross-validation.
For \textit{Oracle Trans-Ising}, put $N_{\mathrm{orc}}:=n_0+\sum_{s\in\mathcal A}n_s$.
For \textit{Trans-Ising}, after source detection, put $N_{\mathrm{sel}}:=n_0+\sum_{s\in\hat{\mathcal A}}n_s$ for the final estimation step.
For \textit{Pooled-Trans-Ising}, put $N_{\mathrm{pool}}:=n_0+\sum_{s=1}^S n_s$.

In Step~1, we tune the penalty over the grid
\[
\lambda_w \in
\begin{cases}
\{1.1,0.9,0.7,0.5,0.3,0.2,0.1\}\sqrt{\log p/N_{\mathrm{orc}}}, & \text{Oracle Trans-Ising},\\
\{1.1,0.9,0.7,0.5,0.3,0.2,0.1\}\sqrt{\log p/N_{\mathrm{sel}}}, & \text{Trans-Ising},\\
\{1.1,0.9,0.7,0.5,0.3,0.2,0.1\}\sqrt{\log p/N_{\mathrm{pool}}}, & \text{Pooled-Trans-Ising}.
\end{cases}
\]

In Step~2, we first select a base penalty
\[
\lambda_{\mathrm{base}} \in
\{1.5,1.3,1.1,0.9,0.7,0.5,0.3\}\sqrt{\frac{\log p}{n_0}}
\]
by cross-validation, and then set
\[
\lambda_\delta = 0.5\,\lambda_{\mathrm{base}},
\qquad
\lambda = 0.5\,\lambda_{\mathrm{base}}.
\]

The adaptive threshold parameter for source detection is set to $C_\tau=1/2$ in $\tau=C_\tau\,\hat{\sigma}$ for all simulations.
For \textit{Naive-LogLasso}, we tune
\[
\lambda_{\mathrm{naive}} \in
\{1.5,1.4,1.3,1.2,1.1,1.0,0.9,0.8,0.7,0.6,0.5\}
\sqrt{\frac{\log p}{n_0}}
\]
by cross-validation.
All reported metrics are averaged over 100 independent trials for each condition.

\subsection{ROC curves for edge recovery}
\label{app:roc}

For completeness, we also provide the ROC curves (Figure~\ref{fig:roc_app}) for the edge recovery experiments described in Section~\ref{subsec:sim_results}. We fix $|\mathcal{A}|=3$ informative sources and 9 non-informative sources and vary $\sigma\in\{0.01,0.1,0.2\}$. The score for each potential edge is $|\hat{\theta}_{jk}|$.

\begin{figure}[tb]
\noindent
\begin{minipage}[t]{0.48\linewidth}
    \vspace{0pt}
    \includegraphics[width=\linewidth]{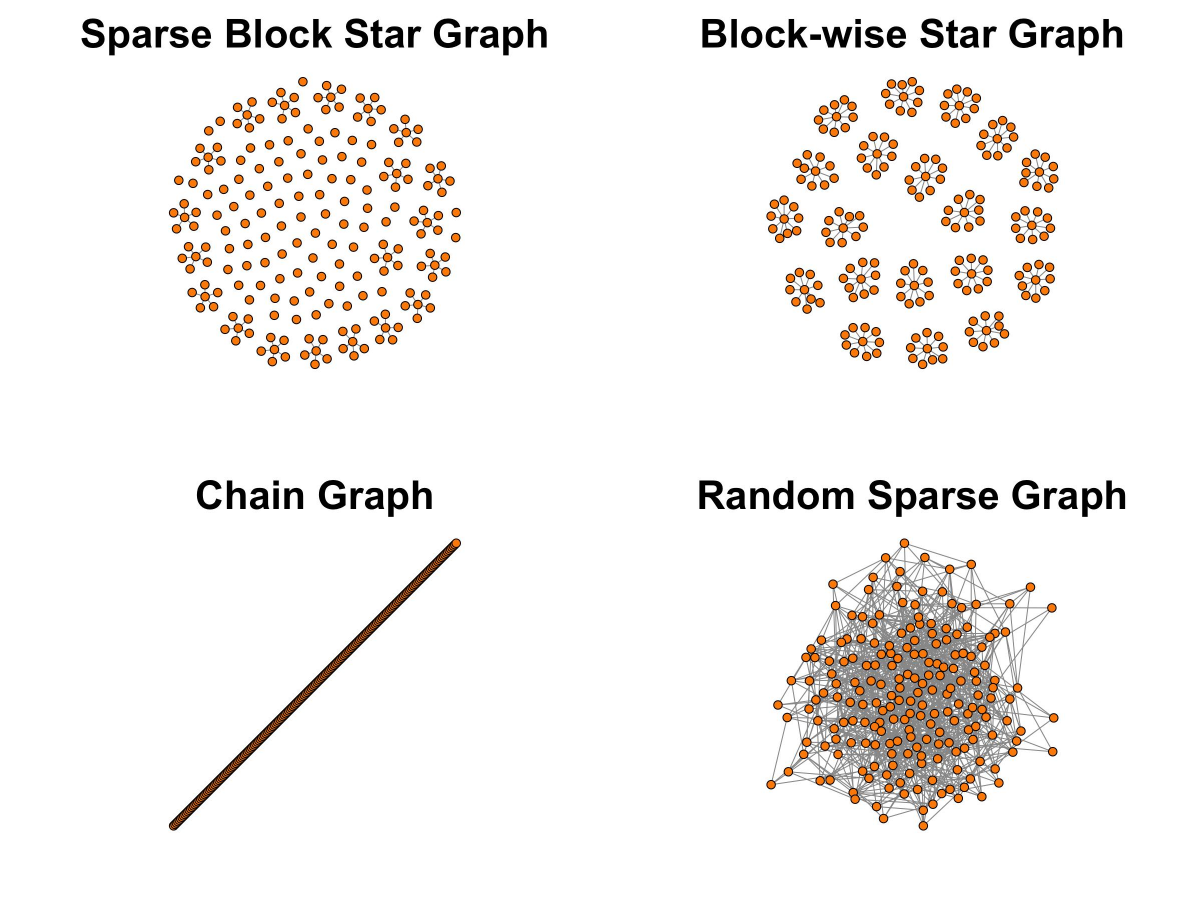}
    \caption{Underlying network structures for $p = 200$.}
    \label{fig:network_app}
\end{minipage}
\hfill
\begin{minipage}[t]{0.48\linewidth}
    \vspace{0pt}
    \includegraphics[width=\linewidth]{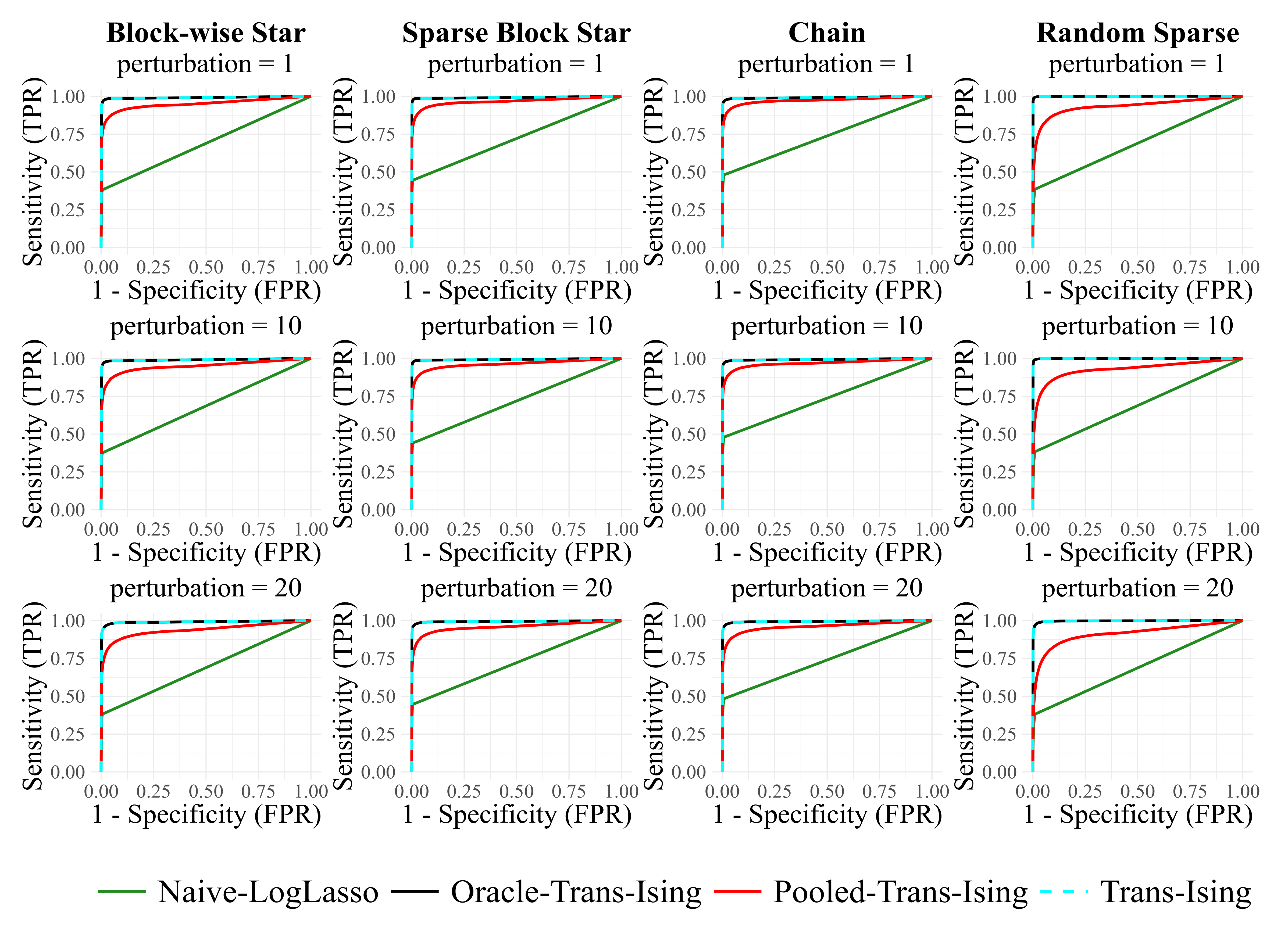}
    \caption{Averaged ROC curves for edge recovery across four graph structures at perturbation levels $100\sigma\in\{1,10,20\}$.}
    \label{fig:roc_app}
\end{minipage}
\end{figure}

\section{Additional Real Data Studies}
\label{app:additional_realdata}

\subsection{Exploratory Data Analysis of the Mutation Data}

For better understanding of the target dataset, we perform an exploratory data analysis (EDA) to characterize the mutation landscape of the \texttt{depmap\_mutationCalls} dataset.

Figure~\ref{fig:eda} shows the most frequently mutated genes for each primary disease. 
Mutation frequencies vary across primary diseases. Some genes such as TTN appear frequently across cancer types, whereas the ranking and presence of other genes depend on the primary disease. For instance, KRAS is a top mutated gene in Pancreatic Cancer but not in the other reported cancer types. These patterns motivate source screening before pooling cancer-type datasets.

\subsection{Supplementary Analysis on Mutation Data}
\label{app:depmap_tables}

To further check the potential for transfer learning across different cancer types, Table \ref{tab:grouped_informative_sources_app} summarizes the outcomes of the source detection procedure for each target disease analyzed. The procedure selects sources according to the change in held-out pseudolikelihood loss. The number of informative sources varies across cancer types. For instance, Brain Cancer and Ovarian Cancer select more auxiliary datasets, whereas Skin Cancer has few informative sources. These results describe source-screening behavior and do not identify specific biological mechanisms.

\begin{figure*}[tb]
    \centering
        \includegraphics[width=0.9\textwidth]{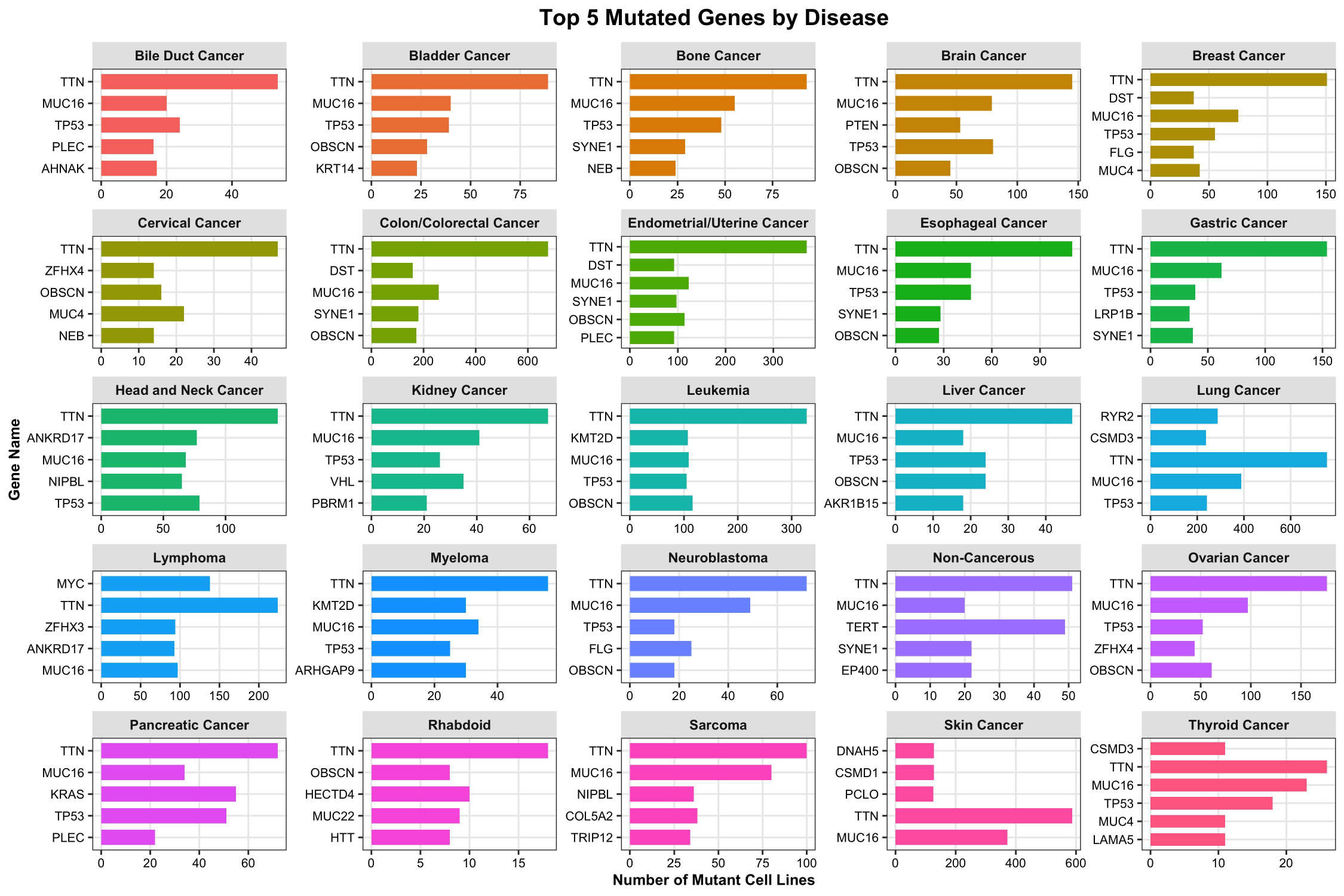}
    \caption{Exploratory data analysis of the DepMap mutation data.
    Mutation frequencies vary across primary diseases.}
    \label{fig:eda}
\end{figure*}

\begin{table}[tb]
\centering
\caption{Disease to Index Mapping}
\label{tab:mapping_wide_3col_app}
\begin{adjustbox}{max width=\linewidth, max totalheight=\textheight, keepaspectratio}
\begin{tabular}{lc@{\hspace{2em}}lc@{\hspace{2em}}lc}
\toprule
\textbf{Disease Name} & \textbf{Index} & \textbf{Disease Name} & \textbf{Index} & \textbf{Disease Name} & \textbf{Index} \\
\midrule
Bile Duct Cancer & 1 & Esophageal Cancer & 9 & Myeloma & 17 \\
Bladder Cancer & 2 & Gastric Cancer & 10 & Neuroblastoma & 18 \\
Bone Cancer & 3 & Head and Neck Cancer & 11 & Non-Cancerous & 19 \\
Brain Cancer & 4 & Kidney Cancer & 12 & Ovarian Cancer & 20 \\
Breast Cancer & 5 & Leukemia & 13 & Pancreatic Cancer & 21 \\
Cervical Cancer & 6 & Liver Cancer & 14 & Rhabdoid & 22 \\
Colon/Colorectal Cancer & 7 & Lung Cancer & 15 & Sarcoma & 23 \\
Endometrial/Uterine Cancer & 8 & Lymphoma & 16 & Skin Cancer & 24 \\
&&&&Thyroid Cancer & 25 \\
\bottomrule
\end{tabular}
\end{adjustbox}
\end{table}

\begin{table*}[tb]
\centering
\caption{Informative Source Indices for each Target Disease (DepMap Mutation Data).}
\label{tab:grouped_informative_sources_app}
\begin{adjustbox}{max width=0.8\linewidth, max totalheight=\textheight, keepaspectratio}
\begin{tabular}{ll}
\toprule
\textbf{Target Disease} & \textbf{Informative Source Indices} \\
\midrule

Bile Duct Cancer & 2, 3, 4, 5, 6, 10, 11, 12, 16, 17, 18, 19, 21, 22, 23, 25 \\
Bladder Cancer & 1, 3, 4, 5, 6, 9, 10, 11, 12, 14, 15, 16, 17, 18, 19, 20, 21, 22, 23, 25 \\
Bone Cancer & 1, 2, 4, 5, 6, 9, 10, 11, 12, 14, 16, 17, 18, 19, 20, 22, 23, 25 \\
Brain Cancer & 1, 2, 3, 5, 6, 9, 10, 11, 12, 13, 14, 15, 16, 17, 18, 19, 20, 21, 22, 23, 25 \\
Breast Cancer & 2, 3, 4, 9, 11, 14, 16, 17, 18, 19, 21 \\
Cervical Cancer & 1, 4, 12, 14, 16, 17, 18, 19, 22, 23 \\
Colon/Colorectal Cancer & 1, 5, 6, 8, 9, 10, 11, 13, 14, 15, 16, 17, 18, 20, 21, 22, 23, 24, 25 \\
Endometrial/Uterine Cancer & 1, 2, 5, 6, 7, 10, 11, 13, 15, 17, 18, 20, 21, 24 \\
Esophageal Cancer & 1, 2, 4, 5, 6, 10, 11, 12, 13, 14, 15, 16, 17, 18, 19, 20, 21, 22, 23, 25 \\
Gastric Cancer & 1, 2, 3, 4, 5, 6, 9, 11, 12, 13, 14, 15, 16, 17, 18, 19, 20, 22, 23, 25 \\
Head and Neck Cancer & 1, 2, 3, 4, 5, 9, 10, 12, 15, 16, 17, 18, 19, 21, 25 \\
Kidney Cancer & 1, 2, 3, 4, 5, 6, 10, 11, 14, 16, 17, 18, 19, 20, 21, 22, 23, 25 \\
Leukemia & 1, 2, 4, 5, 9, 10, 11, 12, 14, 15, 16, 17, 18, 19, 20, 21, 22, 23, 25 \\
Liver Cancer & 1, 2, 4, 5, 6, 9, 10, 11, 12, 15, 16, 17, 18, 19, 21, 22, 25 \\
Lung Cancer & 5, 6, 14, 17, 21, 22, 25 \\
Lymphoma & 3, 4, 5, 6, 12, 14, 17, 18, 19, 22, 25 \\
Myeloma & 1, 2, 3, 4, 5, 6, 9, 10, 11, 12, 13, 14, 15, 16, 18, 19, 20, 22, 23, 25 \\
Neuroblastoma & 1, 3, 4, 5, 6, 12, 14, 16, 17, 19, 21, 22, 23, 25 \\
Non-Cancerous & 1, 2, 3, 4, 5, 6, 8, 9, 10, 11, 12, 13, 14, 16, 17, 18, 20, 22, 23, 24, 25 \\
Ovarian Cancer & 1, 2, 3, 4, 5, 6, 9, 10, 11, 12, 13, 14, 15, 16, 17, 18, 19, 21, 22, 23, 24, 25 \\
Pancreatic Cancer & 1, 2, 3, 4, 5, 6, 9, 10, 11, 12, 13, 14, 15, 16, 17, 18, 19, 20, 22, 23, 25 \\
Rhabdoid & 1, 3, 4, 5, 6, 9, 12, 14, 15, 16, 17, 18, 19, 20, 21, 23, 25 \\
Sarcoma & 1, 2, 4, 5, 9, 10, 11, 12, 15, 16, 17, 18, 19, 21, 22 \\
Skin Cancer & 9, 15, 25 \\
Thyroid Cancer & 1, 3, 4, 6, 11, 16, 18, 19, 22, 24 \\

\bottomrule
\end{tabular}
\end{adjustbox}
\end{table*}
\FloatBarrier

\subsection{Real Data Study 2: Online Transaction Analysis}
\label{app:realdata_online}

We also conduct a real data study on e-commerce transaction data. We use the \texttt{UCI Online Retail} dataset from the \texttt{UCI Machine Learning Repository}, which contains transactions from a UK-based online retailer across customer countries \citep{chen2012online_retail_icdm,uci_online_retail_2015}.

We frame this as an Ising model estimation problem, where each item is a node, and each invoice is a sample. The binary state (1 or 0) represents whether an item was purchased in that transaction. The resulting Ising graph represents conditional associations among item-purchase indicators.

\paragraph{Experimental Setup.}
We define each country as a separate domain. One country with a sufficient number of invoices (e.g., Germany, France) is designated as the \textit{target} study, and other countries with sample size greater than $50$ serve as the \textit{auxiliary} sources. We exclude the ``United Kingdom'' dataset because, after removing cancellation invoices and nonpositive-quantity records, it contains $18{,}786$ of $20{,}728$ cleaned invoices ($90.6\%$) and would dominate the pooled analysis. These are invoice-level counts after applying the two filters and before selecting target and auxiliary countries.

For each target country, we select the $p=200$ most frequently purchased items within that country to define the nodes of our graph. All auxiliary datasets are then projected onto this same 200-item space. We then apply our \textit{Trans-Ising} algorithm and the \textit{Pooled-Trans-Ising} method.

Here we use 5-fold cross-validation on the target data. We use the same nodewise conditional prediction rule as in the DepMap study, and we report the results as the relative misclassification error standardized against the \textit{Naive-LogLasso} baseline.

\paragraph{Results.}
Figure~\ref{fig:retail_plot} shows the performance of Trans-Ising (cyan bars) and Pooled-Trans-Ising (red bars) across nine target countries.

\begin{figure}[tb]

\begin{minipage}[t]{0.48\linewidth}
    \vspace{0pt}
    \includegraphics[width=\linewidth]{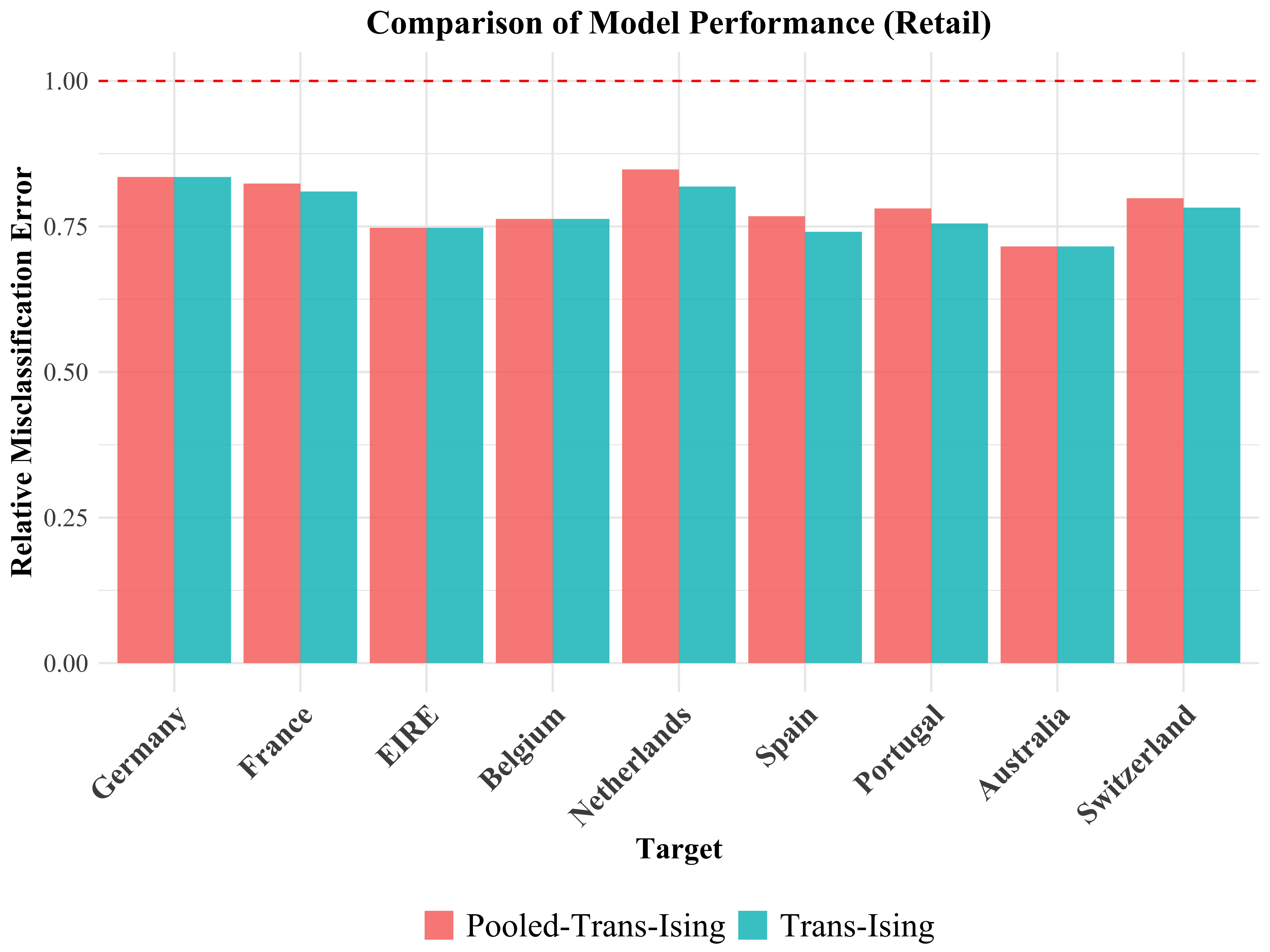}
    \caption{Relative misclassification error rates for Pooled-Trans-Ising and Trans-Ising on the Online Retail dataset. The error is relative to the Naive-LogLasso baseline ($y=1.0$).}
    \label{fig:retail_plot}
\end{minipage}
\hfill
\begin{minipage}[t]{0.48\linewidth}
    \vspace{0pt}
    \includegraphics[width=\linewidth]{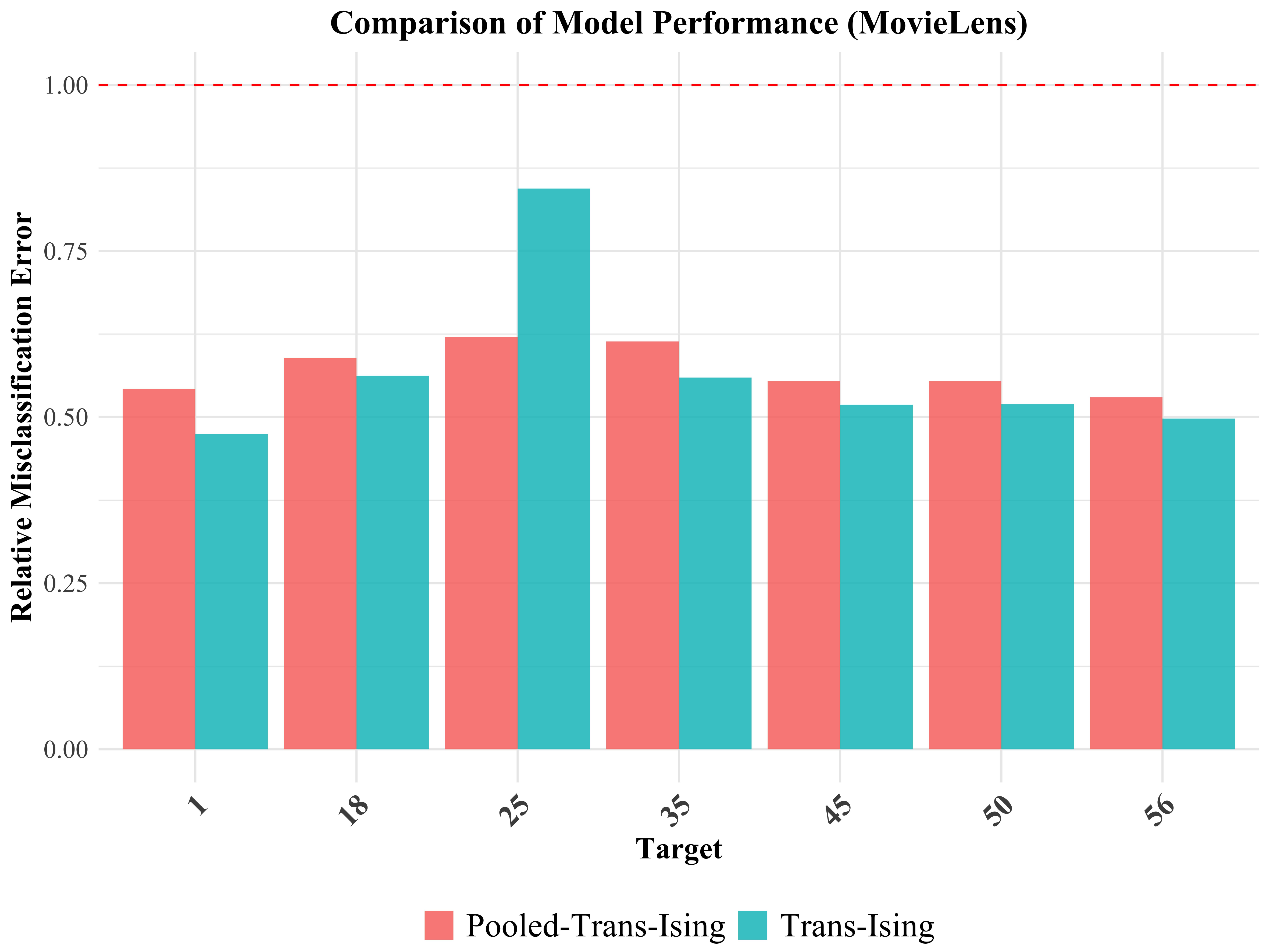}
    \caption{Relative misclassification error rates for Pooled-Trans-Ising and Trans-Ising on the MovieLens 1M dataset. The error is relative to the Naive-LogLasso baseline ($y=1.0$).}
    \label{fig:movielens_plot}
\end{minipage}

\end{figure}

The Online Retail results are consistent with the DepMap study for the reported targets. Trans-Ising yields relative misclassification error below 1.0 for the reported targets, indicating improved prediction compared with the target-only baseline.

Compared with pooling all auxiliary countries, the screened estimator attains relative misclassification error comparable to or below Pooled-Trans-Ising for the reported targets, while both methods remain below the target-only baseline in Figure~\ref{fig:retail_plot}. Table~\ref{tab:online-retail-informative-sources} summarizes the selected sources for each target. In this dataset, most reported targets select many available auxiliary countries; this is an empirical observation for the included countries and not a general claim about online transaction data.

\begin{table*}[tb]
  \centering
  \caption{Informative Source Countries for each Target Country (Online Retail data).}
  \label{tab:online-retail-informative-sources}
  \begin{adjustbox}{max width=\linewidth, max totalheight=\textheight, keepaspectratio}
  \begin{tabular}{ll}
    \toprule
    \textbf{Target country} & \textbf{Informative Source Countries} \\
    \midrule
    Germany     & France, EIRE, Belgium, Netherlands, Spain, Portugal, Australia, Switzerland \\
    France      & Germany, EIRE, Belgium, Netherlands, Spain, Portugal, Australia, Switzerland \\
    EIRE        & Germany, France, Belgium, Netherlands, Spain, Portugal, Australia, Switzerland \\
    Belgium     & Germany, France, EIRE, Netherlands, Spain, Portugal, Australia, Switzerland \\
    Netherlands & Germany, France, EIRE, Spain, Portugal, Australia, Switzerland \\
    Spain       & Germany, France, EIRE, Belgium, Netherlands, Portugal, Australia, Switzerland \\
    Portugal    & Germany, France, EIRE, Netherlands, Spain, Australia, Switzerland \\
    Australia   & Germany, France, EIRE, Belgium, Netherlands, Spain, Portugal, Switzerland \\
    Switzerland & Germany, France, EIRE, Netherlands, Spain, Portugal, Australia \\
    \bottomrule
  \end{tabular}
  \end{adjustbox}
\end{table*}
\FloatBarrier

\subsection{Real Data Study 3: Movie Data Analysis}
\label{app:realdata_movie}

We also apply Trans-Ising to collaborative filtering using the \texttt{MovieLens 1M} dataset \citep{harper2015movielens}. This dataset contains 1 million user-item ratings. We formulate this as a problem of learning user preference networks.

We define a binary \textit{“like”} event as any rating $\ge 4$. Each user is treated as a sample, and each movie is a node. The binary state (1 or 0) indicates whether a user “liked” a specific movie. The resulting Ising graph represents conditional associations among movie-preference indicators.

\paragraph{Experimental Setup.}
We use user demographics to define distinct domains. For this study, we select user \texttt{Age} as the domain variable. The dataset provides discrete age brackets (1, 18, 25, 35, 45, 50, 56). We designate one age group as the \textit{target} study and all other age groups (with $> 200$ users) as \textit{auxiliary} sources.

Following the same methodology as in the previous studies, we select the $p=100$ most ``liked'' movies based on the preferences within the target age group. All auxiliary age group datasets are then aligned to this $p=100$ movie space. We again use the same nodewise conditional prediction rule and 5-fold cross-validation to compare \textit{Trans-Ising} and \textit{Pooled-Trans-Ising}, reporting the relative misclassification error against the \textit{Naive-LogLasso} baseline.

\paragraph{Results.}

Figure~\ref{fig:movielens_plot} reports 5-fold cross-validated misclassification error normalized by the target-only baseline. Trans-Ising gives a smaller relative misclassification error than naive pooling for most of the reported target age groups.


Separately, to provide a descriptive view of which age groups the screening rule deems compatible, we also report the selected source sets obtained by running the screening step on the full sample (Table~\ref{tab:movielens-informative-ages}). This full-sample analysis is not used for out-of-sample evaluation; it is included to summarize empirical patterns in estimated transferability across age groups within this dataset.

\begin{table}[tb]
  \centering
  \caption{Informative source age groups for each target age group (MovieLens 1M).}
  \label{tab:movielens-informative-ages}
  \begin{adjustbox}{max width=\linewidth, max totalheight=\textheight, keepaspectratio}
  \begin{tabular}{ll}
    \toprule
    \textbf{Target Age Group} & \textbf{Informative Source Age Groups} \\
    \midrule
    1  & 35, 45, 50, 56 \\
    18 & 1 \\
    25 & 1, 45, 50, 56 \\
    35 & 1, 45, 50, 56 \\
    45 & 56 \\
    50 & 1, 45, 56 \\
    56 & 1, 45, 50 \\
    \bottomrule
  \end{tabular}
  \end{adjustbox}
\end{table}
\FloatBarrier

The three real-data analyses show how source screening changes prediction error for cancer genomics, online transactions, and movie ratings in the reported settings.

\section{Technical Assumptions for Section~\ref{sec:theoretical}}
\label{app:assumptions}

\begin{assumption}[Independent samples across domains]\label{ass:independent_samples}
For each $r\in\{0,1,\ldots,S\}$, the observations
$X^{(r)}_1,\ldots,X^{(r)}_{n_r}$ are independent copies from the Ising distribution
with parameter $\theta^{(r)}$. The samples are independent across different domains $r$.
Here $\theta^{(0)}=\theta^\ast$, and $\theta^{(r)}=w^{(r)}$ for $r\ge 1$.
\end{assumption}

\begin{assumption}[Sparsity and source-to-target proximity]\label{ass:sparsity_rewrite}
Fix a node $j$ and let $S_j=\{m\neq j:\theta^\ast_{jm}\neq 0\}$ with $s_j=|S_j|$.
Assume the target neighborhood is sparse:
$
s_j=o\!\left(\frac{n_0}{\log p}\right).
$
For each informative source $k\in\mathcal A$, let $w^{(k)}_{\setminus j}$ denote the population nodewise parameter at node $j$.
Assume the informative sources are $\ell_1$-close to the target:
\[
\|\theta^\ast_{\setminus j}-w^{(k)}_{\setminus j}\|_1\le h_j,\qquad \forall k\in\mathcal A,
\]
where $h_j\ge 0$ may depend on $j$ and is allowed to tend to zero as $(n_0,p)$ grow.
\end{assumption}

\begin{assumption}[Information comparability across sources]\label{ass:comparability_rewrite}
There exist a convex set $\mathcal U\subset\mathbb R^{p-1}$ and a constant $C>0$ such that:
\begin{enumerate}
\item \textbf{(Local comparability)} for all $k\in\{0\}\cup\mathcal A$ and all $u,v\in\mathcal U$,
\[
\Big\|\Big(\nabla^2 \mathcal L_{A,j}(u)\Big)^{-1}\nabla^2 \mathcal L_{k,j}(v)\Big\|_{1} \le C.
\]
In particular, $\nabla^2 \mathcal L_{A,j}(u)$ is invertible for all $u\in\mathcal U$.

\item \textbf{(Local well-posedness of population minimizers)}
The population minimizers of our analysis lie in $\mathcal U$:
\[
\theta^\ast_{\setminus j}\in\mathcal U,\qquad
w^{(k)}_{\setminus j}\in\mathcal U\ \ \forall k\in\mathcal A,
\qquad
w^\ast_{A,\setminus j}\in\mathcal U,
\]
where $w^{(k)}_{\setminus j}:=\arg\min_{u}\mathcal L_{k,j}(u)$ and
$w^\ast_{A,\setminus j}:=\arg\min_{u}\mathcal L_{A,j}(u)$.

\item \textbf{(Integrated comparability)}
The bound in (i) also holds when $\nabla^2\mathcal L_{A,j}(u)$ and
$\nabla^2\mathcal L_{k,j}(v)$ are replaced by their averages over line segments in
$\mathcal U$.
\end{enumerate}
\end{assumption}

\begin{condition}[Theorem-local pooled RSC and localization]\label{ass:rsc-pooled_rewrite}
There exist constants $\kappa_A>0$, $\tau_A\ge 0$, and a radius $r_A>0$ such that for all
\[
\Delta\in\mathcal C_A
:=\Big\{\Delta:\|\Delta_{S_j^c}\|_1\le 3\|\Delta_{S_j}\|_1+4\|(w^\ast_{A,\setminus j})_{S_j^c}\|_1\Big\},
\]
and all $\vartheta$ satisfying $\|\vartheta-w^\ast_{A,\setminus j}\|_1\le r_A$,
\[
\Delta^\top \nabla^2 \ell_{A,j}(\vartheta)\,\Delta
\ \ge\
\kappa_A\|\Delta\|_2^2-\tau_A\frac{\log p}{N}\|\Delta\|_1^2.
\]
In addition,
\[
\|\hat w^A_{\setminus j}-w^\ast_{A,\setminus j}\|_1\le r_A.
\]
This implies that the segment between $\hat w^A_{\setminus j}$ and
$w^\ast_{A,\setminus j}$ is contained in
$\{\vartheta:\|\vartheta-w^\ast_{A,\setminus j}\|_1\le r_A\}$.
\end{condition}

\begin{condition}[Theorem-local target RSC for bias correction]\label{ass:rsc-target_rewrite}
Let $C_h\ge 1$ be chosen to satisfy
\[
\|\delta^\ast_{\setminus j}\|_1\le C_hh_j,
\]
which is available under Assumptions~\ref{ass:sparsity_rewrite} and
\ref{ass:comparability_rewrite} by Lemma~\ref{lem:bias_rewrite}, and define
\(
t_j:=\left\lceil \frac{C_h\,h_j}{\lambda_\delta}\right\rceil.
\)
There exist constants $\kappa_0 > 0$, $\tau_0\ge 0$, and a radius $r_0>0$
such that for any index set $T\subseteq V\setminus\{j\}$ with $|T|\le t_j$, for all
\[
\Delta\in\mathcal C_{T,h}:=\{\Delta:\|\Delta_{T^c}\|_1\le 3\|\Delta_{T}\|_1+3C_h h_j\},
\]
and all $\vartheta$ satisfying $\|\vartheta-\theta^\ast_{\setminus j}\|_1\le r_0$,
\[
\Delta^\top \nabla^2 \ell_{0,j}(\vartheta)\,\Delta
\ \ge\
\kappa_0\|\Delta\|_2^2-\tau_0\frac{\log p}{n_0}\|\Delta\|_1^2.
\]
\end{condition}

\begin{assumption}[Bounded nodewise fields and non-degenerate curvature]\label{ass:boundedness_rewrite}
The Ising variables satisfy $x_{ik}\in\{-1,1\}$.
There exists a finite constant $M<\infty$ such that, for the target parameter $\theta^\ast$ and
all informative-source parameters $\{w^{(s)}\}_{s\in\mathcal A}$,
\[
\max_{r\in\{0\}\cup\mathcal A}\ \max_{j\in V}\ \max_{x\in\{-1,1\}^p}
\ 2\left|\sum_{k\neq j}\theta^{(r)}_{jk}\,x_k\right|
\ \le\ M,
\]
where $\theta^{(0)}:=\theta^\ast$ and $\theta^{(r)}:=w^{(r)}$ for $r\in\mathcal A$.
For any signed linear predictor whose absolute value is bounded by $M$, define
\[
\rho_0 \;:=\;\inf_{|t|\le M} f''(t)\;>\;0,
\qquad f(t)=\log(1+e^{-t}).
\]
Curvature over the empirical neighborhoods used in the proofs is imposed separately in
Conditions~\ref{ass:rsc-pooled_rewrite} and~\ref{ass:rsc-target_rewrite}.
\end{assumption}

\begin{condition}[Theorem-local stationary local solution in Step~2]\label{ass:local-step2_rewrite}
Let $Q(\delta):=Q_j(\delta;\hat w^A_{\setminus j})$ denote the Step~2 objective in \eqref{eq:step2_rewrite}.
Assume the optimization routine returns a point $\hat\delta^A_{\setminus j}$ and a tolerance
$\varepsilon_{n,j}\ge 0$ such that:
\begin{enumerate}
\item \textbf{(Approximate stationarity)}
\[
\mathrm{dist}_{\infty}\bigl(0,\partial Q(\hat\delta^A_{\setminus j})\bigr)
:=
\inf_{\xi\in\partial Q(\hat\delta^A_{\setminus j})}\|\xi\|_\infty
\le \varepsilon_{n,j}.
\]
Equivalently, for each coordinate $k\neq j$, there exist
$g_{jk}\in\partial|\hat\delta^A_{jk}|$ and
$h_{jk}\in\partial P_\lambda(\hat w^A_{jk}+\hat\delta^A_{jk})$ with $|g_{jk}|\le 1$ such that
\[
\left|
\Big[\nabla \ell_{0,j}(\hat w^A_{\setminus j}+\hat\delta^A_{\setminus j})\Big]_k
+\lambda_\delta\, g_{jk}
+h_{jk}
\right|
\le \varepsilon_{n,j}.
\]

\item \textbf{(Local minimality in a neighborhood of $\delta^\ast$)}
There exists a radius $r_\delta>0$ such that
\(
\|\hat\delta^A_{\setminus j}-\delta^\ast_{\setminus j}\|_1\le r_\delta
\)
and
\[
Q(\hat\delta^A_{\setminus j}) \le Q(\delta)
\qquad \text{for all }\delta\text{ satisfying }\|\delta-\delta^\ast_{\setminus j}\|_1\le r_\delta.
\]

\item \textbf{(RSC neighborhood applicability)}
Both the oracle-shifted point and the returned estimator lie in the target-RSC neighborhood in
Condition~\ref{ass:rsc-target_rewrite}, i.e.,
\[
\big\|\hat w^A_{\setminus j}+\delta^\ast_{\setminus j}-\theta^\ast_{\setminus j}\big\|_1 \le r_0,
\qquad
\big\|\hat w^A_{\setminus j}+\hat\delta^A_{\setminus j}-\theta^\ast_{\setminus j}\big\|_1 \le r_0,
\]
where $r_0$ is the radius in Condition~\ref{ass:rsc-target_rewrite}.
\end{enumerate}
\end{condition}

\begin{condition}[Theorem-local Step~2 cone condition]\label{cond:step2-cone_rewrite}
Let $v=\hat\delta^A_{\setminus j}-\delta^\ast_{\setminus j}$ and
$T=\{k\neq j:|\delta^\ast_{jk}|>\lambda_\delta\}$. The Step~2 error satisfies
\[
\|v_{T^c}\|_1\le 3\|v_T\|_1+3C_hh_j .
\]
\end{condition}

\begin{assumption}[Tuning and Beta-min for Oracle Property]\label{ass:beta-min_rewrite}
Let $P_\lambda(\cdot)$ be the SCAD penalty with parameter $a>2$. Choose tuning parameters as
\[
\lambda_w = c_w\sqrt{\frac{\log p}{N}},\qquad
\lambda_\delta = c_\delta\sqrt{\frac{\log p}{n_0}},\qquad
\lambda = c_\lambda\sqrt{\frac{\log p}{n_0}},
\]
where $c_w,c_\delta,c_\lambda$ are sufficiently large. Define the target estimation error rate $r_{n,j}$ as in Theorem~\ref{thm:transfer_rewrite}.

We assume the beta-min condition:
\[
|\theta^\ast_{jm}| \ \ge\ a\lambda + r_{n,j}
\qquad \text{for all } m\in S_j .
\]
\textit{Remark:} This condition ensures that the true signals are large enough to fall into the constant penalty region of SCAD (where $P'_\lambda(\cdot)=0$) even after accounting for estimation error, which avoids the irrepresentable condition \citep{Zhao2006Lasso, Wainwright2009}.
\end{assumption}

\begin{condition}[Theorem-local source-detection separation and threshold calibration]
\label{ass:detection_sep}
For the constant $c_{\mathrm{gap}}>0$, the sequence $\nu_n>0$, and the set
$\mathcal A_h$ defined before Theorem~\ref{thm:detection_consistency}, there
exist a constant $C>0$ and a sequence $\epsilon_n=o(\nu_n)$ such that:
\begin{enumerate}
    \item \textbf{(Informative sources)}
    For every $s\in \mathcal{A}_h$,
    \[
    \mathcal E(s) \le c_{\mathrm{gap}}\nu_n.
    \]
    \item \textbf{(Non-informative sources)}
    For every $s\notin \mathcal{A}_h$,
    \[
    \mathcal E(s) \ge (1+c_{\mathrm{gap}})\nu_n.
    \]
    \item \textbf{(Validation-loss deviation)}
    \[
    \max_{1\le s\le S}|\Delta_s-\mathcal E(s)|\le C\epsilon_n
    \]
    with probability tending to one.
    \item \textbf{(Adaptive-threshold calibration)}
    The threshold $\tau=C_\tau\hat\sigma$ in Algorithm~\ref{alg:trans-ising-source-detection}
    satisfies
    \[
    c_{\mathrm{gap}}\nu_n+C\epsilon_n
    <
    C_\tau\hat\sigma
    <
    (1+c_{\mathrm{gap}})\nu_n-C\epsilon_n
    \]
    with probability tending to one.
\end{enumerate}
\end{condition}

Conditions~\ref{ass:rsc-pooled_rewrite}--\ref{cond:step2-cone_rewrite}
and Condition~\ref{ass:detection_sep} are theorem-local high-level inputs.
They are not asserted to follow from Assumptions~\ref{ass:independent_samples}--\ref{ass:boundedness_rewrite} alone.
The following propositions give sufficient conditions for the empirical curvature and validation-loss deviation inputs.
The Step~2 local-solution and cone inputs remain theorem-local algorithmic conditions unless a separate optimization argument verifies Conditions~\ref{ass:local-step2_rewrite} and~\ref{cond:step2-cone_rewrite}.

\subsection{Sufficient Conditions for Theorem-Local Inputs}
\label{app:sufficient-theorem-local-inputs}

For a fixed node $j$ and each domain $r\in\{0\}\cup\mathcal A$, let
$Z_{\setminus j}^{(r)}\in\mathbb R^{n_r\times(p-1)}$ denote the signed design matrix whose $i$-th row is $(z_i^{(r)})^\top$.
Recall that $\ell_{A,j}$ and $\ell_{0,j}$ are the pooled and target losses. Define
\[
\widehat\Sigma_A:=\sum_{r\in\{0\}\cup\mathcal A}\alpha_r\frac{(Z_{\setminus j}^{(r)})^\top Z_{\setminus j}^{(r)}}{n_r},
\qquad
\widehat\Sigma_0:=\frac{(Z_{\setminus j}^{(0)})^\top Z_{\setminus j}^{(0)}}{n_0},
\]
and let $\Sigma_A:=\E\widehat\Sigma_A$ and $\Sigma_0:=\E\widehat\Sigma_0$.
For radii $r_A,r_0,r_\delta>0$, define the local sets
\[
\mathcal U_A(r_A):=\{\vartheta:\|\vartheta-w^\ast_{A,\setminus j}\|_1\le r_A\},
\qquad
\mathcal U_0(r_0):=\{\vartheta:\|\vartheta-\theta^\ast_{\setminus j}\|_1\le r_0\}.
\]
Let $\mathcal G_j$ denote the event on which the two empirical Gram
inequalities in \eqref{eq:gram-rsc-suff} below hold.

\begin{assumption}[Primitive bounded-design curvature inputs]\label{ass:suff_curvature_population}
There exist constants $M_A,M_0<\infty$ and $\gamma_A,\gamma_0>0$ such that:
\begin{enumerate}
\item \textbf{(Local bounded fields)}
For every $\vartheta\in\mathcal U_A(r_A)$ and every $x\in\{-1,1\}^p$,
\[
\left|2x_j\sum_{k\neq j}\vartheta_kx_k\right|\le M_A.
\]
For every $\vartheta\in\mathcal U_0(r_0)$ and every $x\in\{-1,1\}^p$,
\[
\left|2x_j\sum_{k\neq j}\vartheta_kx_k\right|\le M_0.
\]
\item \textbf{(Population restricted eigenvalues)}
For every $\Delta\in\mathcal C_A$,
\[
\Delta^\top\Sigma_A\Delta\ge \gamma_A\|\Delta\|_2^2.
\]
For every index set $T\subseteq V\setminus\{j\}$ with $|T|\le t_j$ and every
$\Delta\in\mathcal C_{T,h}$,
\[
\Delta^\top\Sigma_0\Delta\ge \gamma_0\|\Delta\|_2^2.
\]
\item \textbf{(Sparse signed-design tails)}
Put
\[
m_A:=\left\lfloor c_m\frac{N}{\log p}\right\rfloor,
\qquad
m_0:=\left\lfloor c_m\frac{n_0}{\log p}\right\rfloor
\]
for an absolute constant $c_m>0$ chosen such that the support union bound in
the proof of Proposition~\ref{prop:suff_curvature_algorithm} is bounded by
$Cp^{-c}$.
There exists a constant $K<\infty$ such that, for every
$r\in\{0\}\cup\mathcal A$ and every vector $v$ with
$\|v\|_2=1$ and $\|v\|_0\le 2(m_A\vee m_0)$,
\[
\|v^\top z_i^{(r)}\|_{\psi_2}\le K,
\]
where $\|\cdot\|_{\psi_2}$ denotes the sub-Gaussian Orlicz norm.
\item \textbf{(Sample-size scaling)}
\[
m_A\to\infty,\qquad m_0\to\infty,\qquad
\frac{\log p}{N}\to0,\qquad \frac{\log p}{n_0}\to0.
\]
\end{enumerate}
\end{assumption}

\begin{condition}[Sufficient local estimator and LLA inputs]\label{cond:suff_algorithmic_local}
\begin{enumerate}
\item \textbf{(Local estimator containment)}
On $\mathcal G_j$, the Step~1 solution satisfies
\[
\|\hat w^A_{\setminus j}-w^\ast_{A,\setminus j}\|_1\le r_A.
\]
\item \textbf{(LLA local-basin condition)}
Let $T=\{k\neq j:|\delta^\ast_{jk}|>\lambda_\delta\}$ and
\[
\mathcal B_{\delta,j}:=
\left\{\delta:\|\delta-\delta^\ast_{\setminus j}\|_1\le r_\delta,\ 
\|(\delta-\delta^\ast_{\setminus j})_{T^c}\|_1
\le 3\|(\delta-\delta^\ast_{\setminus j})_T\|_1+3C_hh_j
\right\}.
\]
The Step~2 LLA routine is initialized in $\mathcal B_{\delta,j}$, all surrogate problems are solved to coordinate KKT tolerance $\varepsilon_{n,j}$, the iterates remain in $\mathcal B_{\delta,j}$, and the returned point $\hat\delta^A_{\setminus j}$ satisfies the original Step~2 KKT residual bound
\[
\mathrm{dist}_{\infty}\bigl(0,\partial Q(\hat\delta^A_{\setminus j})\bigr)
\le \varepsilon_{n,j}.
\]
In addition, the returned point satisfies
\[
Q(\hat\delta^A_{\setminus j})\le Q(\delta)
\qquad
\text{for all }\delta\text{ such that }\|\delta-\delta^\ast_{\setminus j}\|_1\le r_\delta .
\]
In addition,
\[
\big\|\hat w^A_{\setminus j}+\delta^\ast_{\setminus j}-\theta^\ast_{\setminus j}\big\|_1 \le r_0,
\qquad
\big\|\hat w^A_{\setminus j}+\hat\delta^A_{\setminus j}-\theta^\ast_{\setminus j}\big\|_1 \le r_0.
\]
\end{enumerate}
\end{condition}

\begin{proposition}[Verification of empirical curvature inputs]\label{prop:suff_curvature_algorithm}
Suppose Assumptions~\ref{ass:independent_samples}, \ref{ass:sparsity_rewrite},
\ref{ass:comparability_rewrite}, \ref{ass:boundedness_rewrite}, and~\ref{ass:suff_curvature_population}
and Condition~\ref*{cond:suff_algorithmic_local}(i) hold. Then Conditions~\ref{ass:rsc-pooled_rewrite} and~\ref{ass:rsc-target_rewrite} hold with probability at least $1-Cp^{-c}$ for some constants $C,c>0$. On this event, Condition~\ref*{cond:suff_algorithmic_local}(ii) implies Conditions~\ref{ass:local-step2_rewrite} and~\ref{cond:step2-cone_rewrite}.
\end{proposition}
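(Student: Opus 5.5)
The proposition has two parts: the curvature claims (Conditions~\ref{ass:rsc-pooled_rewrite} and~\ref{ass:rsc-target_rewrite}), and the algorithmic claims (Conditions~\ref{ass:local-step2_rewrite} and~\ref{cond:step2-cone_rewrite}). I would treat them separately. The algorithmic part should be bookkeeping. Condition~\ref*{cond:suff_algorithmic_local}(ii) states verbatim the three items of Condition~\ref{ass:local-step2_rewrite}: the KKT residual bound, local minimality in the $\ell_1$ ball of radius $r_\delta$, and the two $r_0$ containments. Its membership requirement $\hat\delta^A_{\setminus j}\in\mathcal B_{\delta,j}$, with $v=\hat\delta^A_{\setminus j}-\delta^\ast_{\setminus j}$, is exactly the cone inequality of Condition~\ref{cond:step2-cone_rewrite}, with the same $T$. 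So once the curvature event is established, this implication is immediate.

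For curvature I would use a two-layer argument: a Gram-matrix RSC bound, then a transfer to the logistic Hessian.

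\textbf{Gram-matrix RSC.} First I would show that on an event $\mathcal G_j$ of probability at least $1-Cp^{-c}$, for all $\Delta$,
\begin{equation}\label{eq:gram-rsc-suff}
\Delta^\top\widehat\Sigma_A\Delta\ge \tfrac{\gamma_A}{2}\|\Delta\|_2^2-\tau'_A\tfrac{\log p}{N}\|\Delta\|_1^2,\qquad
\Delta^\top\widehat\Sigma_0\Delta\ge \tfrac{\gamma_0}{2}\|\Delta\|_2^2-\tau'_0\tfrac{\log p}{n_0}\|\Delta\|_1^2 .
\end{equation}
This uses the standard sparse-deviation argument. For each fixed support of size $2m$ (with $m=m_A$ or $m_0$), the sub-Gaussian tail in Assumption~\ref{ass:suff_curvature_population}(iii) and Bernstein with an $\epsilon$-net give $\sup_{\|v\|_2=1,\|v\|_0\le 2m}|v^\top(\widehat\Sigma-\Sigma)v|\le \gamma/54$. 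For the pooled matrix, independence across domains (Assumption~\ref{ass:independent_samples}) makes it a sum of independent rows over $N$ samples. A union bound over $\binom{p-1}{2m}$ supports costs $e^{2m\log p}$, and choosing $c_m$ small keeps the total failure probability below $Cp^{-c}$. A Maurey/convexification lemma (Loh--Wainwright, Lemma 12 style) then extends the sparse bound to all $\Delta$, at the price of $\|\Delta\|_1^2/m$, which is $\asymp \|\Delta\|_1^2\log p/N$.

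This bound is then combined with the population restricted eigenvalue on the cones. Here one subtlety is that the population RE holds only on $\mathcal C_A$ and $\mathcal C_{T,h}$. I would therefore apply the deviation bound to $\Delta^\top(\widehat\Sigma-\Sigma)\Delta$ in the form $|\Delta^\top(\widehat\Sigma-\Sigma)\Delta|\le \frac{\gamma}{2}\|\Delta\|_2^2+\tau'\frac{\log p}{n}\|\Delta\|_1^2$, valid for all $\Delta$, and then add the population RE on the cone. This route matters because the $\ell_1$ slack terms ($4\|(w^\ast_A)_{S_j^c}\|_1$, $3C_hh_j$) break exact cone sparsity, so cone-based decomposition would not work directly.

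\textbf{Transfer to the Hessian.} The logistic Hessian is $\nabla^2\ell(\vartheta)=n^{-1}\sum_i f''(z_i^\top\vartheta)z_iz_i^\top$. For $\vartheta\in\mathcal U_A(r_A)$ or $\mathcal U_0(r_0)$, Assumption~\ref{ass:suff_curvature_population}(i) bounds $|z_i^\top\vartheta|\le M_A$ or $M_0$ uniformly over the sample space. Hence $f''(z_i^\top\vartheta)\ge\rho:=\inf_{|t|\le M_A\vee M_0}f''(t)>0$, and $\nabla^2\ell(\vartheta)\succeq\rho\widehat\Sigma$ deterministically. This yields both conditions with $\kappa_A=\rho\gamma_A/2$, $\tau_A=\rho\tau'_A$, $\kappa_0=\rho\gamma_0/2$, $\tau_0=\rho\tau'_0$. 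Because the bound is uniform over $x$, there is no need for the uniform-in-$\vartheta$ empirical-process control that usually makes logistic RSC hard. The localization clause of Condition~\ref{ass:rsc-pooled_rewrite} is supplied by Condition~\ref*{cond:suff_algorithmic_local}(i) on $\mathcal G_j$, and convexity of the $\ell_1$ ball gives the segment statement.

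\textbf{Main obstacle.} The only delicate step is the Gram-matrix deviation: the sparse-to-global extension, and checking that $m_A\to\infty$ together with the union bound gives the $p^{-c}$ rate. Everything else is direct.
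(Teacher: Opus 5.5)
Your proposal is correct and follows the paper's proof step for step: net-plus-Bernstein control of the sparse Gram deviation with a union bound over supports of size $2m_q$, extension to all $\Delta$, addition of the population restricted eigenvalue on $\mathcal C_A$ and $\mathcal C_{T,h}$, the deterministic comparison $\nabla^2\ell\succeq\rho\,\widehat\Sigma$ from the bounded local fields, and bookkeeping for Conditions~\ref{ass:local-step2_rewrite} and~\ref{cond:step2-cone_rewrite}. The only variation is the sparse-to-global step, where you use the Loh--Wainwright convexification lemma while the paper sorts $\Delta$ into blocks of size $m_q$ and uses polarization with $\sum_{\ell\ge1}\|\Delta_{S_\ell}\|_2\le m_q^{-1/2}\|\Delta\|_1$; both give $|\Delta^\top(\widehat\Sigma_q-\Sigma_q)\Delta|\le\frac{\gamma_q}{2}\|\Delta\|_2^2+C\frac{\log p}{n_q}\|\Delta\|_1^2$ for all $\Delta$, and, as your own later remark implies, your first display should be asserted only on the cones rather than for all $\Delta$.
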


\begin{proof}
We first verify the empirical Gram bounds. For $q=A$, set
$\widehat\Sigma_q=\widehat\Sigma_A$, $\Sigma_q=\Sigma_A$,
$n_q=N$, $m_q=m_A$, and $\gamma_q=\gamma_A$. For $q=0$, set
$\widehat\Sigma_q=\widehat\Sigma_0$, $\Sigma_q=\Sigma_0$,
$n_q=n_0$, $m_q=m_0$, and $\gamma_q=\gamma_0$.
For a fixed $q\in\{A,0\}$ and a fixed support $B$ with
$|B|\le 2m_q$, let $\mathbb S_B$ be the unit sphere on $B$ and let
$\mathcal N_B$ be a $1/4$-net of $\mathbb S_B$ with
$|\mathcal N_B|\le 9^{|B|}$. For any $u\in\mathcal N_B$, the variables
\(
(u^\top z_i^{(r)})^2-\E\{(u^\top z_i^{(r)})^2\}
\)
are independent and sub-exponential with norm bounded by a constant that
depends only on $K$. For $q=A$ these variables are summed over
$r\in\{0\}\cup\mathcal A$ with weight $1/N$, and for $q=0$ they are summed
over the target sample with weight $1/n_0$. Bernstein's inequality gives
\[
\Pr\left(
\left|u^\top(\widehat\Sigma_q-\Sigma_q)u\right|>\gamma_q/16
\right)\le 2\exp(-c n_q)
\]
for a constant $c>0$ that depends only on $K$ and $\gamma_q$.
A union bound over the nets and over all supports with $|B|\le 2m_q$,
together with $m_q\le c_m n_q/\log p$ and a sufficiently small $c_m$,
gives, with probability at least $1-Cp^{-c}$,
\[
\sup_{\|u\|_2=1,\ \|u\|_0\le 2m_q}
\left|u^\top(\widehat\Sigma_q-\Sigma_q)u\right|\le \gamma_q/4
\]
simultaneously for $q=A$ and $q=0$. To pass from sparse vectors to all
vectors, fix $\Delta$ and order its coordinates by decreasing absolute
value. Let $S_0,S_1,\ldots$ be consecutive blocks of size $m_q$. The sparse
bound above and polarization imply
\[
\left|a^\top(\widehat\Sigma_q-\Sigma_q)b\right|
\le C\gamma_q\|a\|_2\|b\|_2
\]
whenever $|\operatorname{supp}(a)\cup\operatorname{supp}(b)|\le 2m_q$.
Because
\[
\sum_{\ell\ge 1}\|\Delta_{S_\ell}\|_2\le m_q^{-1/2}\|\Delta\|_1,
\]
expanding $\Delta^\top(\widehat\Sigma_q-\Sigma_q)\Delta$ over the blocks
gives
\[
\left|\Delta^\top(\widehat\Sigma_q-\Sigma_q)\Delta\right|
\le \frac{\gamma_q}{2}\|\Delta\|_2^2
+C\frac{\log p}{n_q}\|\Delta\|_1^2 .
\]
Combining this deviation bound with Assumption~\ref{ass:suff_curvature_population}(ii) gives constants $C,c>0$ such that, with probability at least $1-Cp^{-c}$,
\begin{equation}\label{eq:gram-rsc-suff}
\begin{aligned}
\Delta^\top\widehat\Sigma_A\Delta
&\ge \frac{\gamma_A}{2}\|\Delta\|_2^2
-C\frac{\log p}{N}\|\Delta\|_1^2
\qquad \text{for all }\Delta\in\mathcal C_A,\\
\Delta^\top\widehat\Sigma_0\Delta
&\ge \frac{\gamma_0}{2}\|\Delta\|_2^2
-C\frac{\log p}{n_0}\|\Delta\|_1^2
\end{aligned}
\end{equation}
for all $\Delta\in\mathcal C_{T,h}$ and all $T\subseteq V\setminus\{j\}$ satisfying $|T|\le t_j$.

For every $\vartheta\in\mathcal U_A(r_A)$, Assumption~\ref{ass:suff_curvature_population}(i) gives
\[
f''\{(z_i^{(r)})^\top\vartheta\}\ge
\rho_A:=\inf_{|u|\le M_A}f''(u)>0 .
\]
This and \eqref{eq:gram-rsc-suff} imply
\[
\Delta^\top\nabla^2\ell_{A,j}(\vartheta)\Delta
\ge
\rho_A\Delta^\top\widehat\Sigma_A\Delta
\ge
\frac{\rho_A\gamma_A}{2}\|\Delta\|_2^2
-C\rho_A\frac{\log p}{N}\|\Delta\|_1^2
\]
for all $\Delta\in\mathcal C_A$ and all $\vartheta\in\mathcal U_A(r_A)$.
Together with Condition~\ref*{cond:suff_algorithmic_local}(i), this verifies Condition~\ref{ass:rsc-pooled_rewrite} with $\kappa_A=\rho_A\gamma_A/2$ and $\tau_A=C\rho_A$.
The same argument with
\(
\rho_{0,\mathrm{loc}}:=\inf_{|u|\le M_0}f''(u)>0
\)
verifies Condition~\ref{ass:rsc-target_rewrite} with $\kappa_0=\rho_{0,\mathrm{loc}}\gamma_0/2$ and $\tau_0=C\rho_{0,\mathrm{loc}}$.

It remains to relate Condition~\ref*{cond:suff_algorithmic_local}(ii) to the Step~2 theorem-local inputs. Condition~\ref*{cond:suff_algorithmic_local}(ii) gives the original-objective coordinate KKT tolerance, the local minimality condition, and the target-RSC neighborhood condition in Condition~\ref{ass:local-step2_rewrite}. Because the returned point belongs to $\mathcal B_{\delta,j}$, the vector
$v=\hat\delta^A_{\setminus j}-\delta^\ast_{\setminus j}$ satisfies the cone bound in Condition~\ref{cond:step2-cone_rewrite}. This completes the proof.
\end{proof}

For source detection, let $\widetilde W^{(0),r}$ denote the target-only training-fold estimator used in fold $r$ of Algorithm~\ref{alg:trans-ising-source-detection}, and let $\widetilde W^{(0+s),r}$ denote the corresponding target-plus-source estimator for source $s$.
Put $n_{\min}^{\mathrm{val}}:=\min_{r=1,2} n_{0,r}$.

\begin{condition}[Sufficient validation inputs]\label{cond:suff_validation}
There exist sequences $b_n\ge0$, $\sigma_n>0$, and $\epsilon_n$ such that
\[
\epsilon_n=b_n+p\sqrt{\frac{\log(S\vee p)}{n_{\min}^{\mathrm{val}}}},
\qquad \epsilon_n=o(\nu_n).
\]
With probability at least $1-Cp^{-c}$, the following conditions hold:
\begin{enumerate}
\item \textbf{(Uniform bounded fitted fields)}
For every node $j$, every fold $r$, and every fitted matrix
\[
\Theta\in\{\widetilde W^{(0),r}\}\cup\{\widetilde W^{(0+s),r}:1\le s\le S\},
\]
the nodewise signed linear predictor satisfies
\[
\sup_{x\in\{-1,1\}^p}
\left|2x_j\sum_{k\neq j}\Theta_{jk}x_k\right|\le B_v
\]
for a finite constant $B_v$.
\item \textbf{(Training-estimator risk stability)}
\[
\left|
\frac12\sum_{r=1}^2
\mathcal L^{\mathrm{pseudo}}_0(\widetilde W^{(0),r})
-\mathcal L^{\mathrm{pseudo}}_0(\theta^\ast)
\right|\le b_n
\]
and
\[
\max_{1\le s\le S}
\left|
\frac12\sum_{r=1}^2
\mathcal L^{\mathrm{pseudo}}_0(\widetilde W^{(0+s),r})
-
\frac12\sum_{r=1}^2
\mathcal L^{\mathrm{pseudo}}_0(W^{(0+s),\ast,r})
\right|\le b_n .
\]
\item \textbf{(Population separation)}
For every $s\in\mathcal A_h$, $\mathcal E(s)\le c_{\mathrm{gap}}\nu_n$, and for every
$s\notin\mathcal A_h$, $\mathcal E(s)\ge (1+c_{\mathrm{gap}})\nu_n$.
\item \textbf{(Threshold stability)}
\[
|\hat\sigma-\sigma_n|\le b_n
\]
and, for a sufficiently large constant $C_\Delta>0$,
\[
c_{\mathrm{gap}}\nu_n+C_\Delta\epsilon_n
<
C_\tau(\sigma_n-b_n)
\le
C_\tau(\sigma_n+b_n)
<
(1+c_{\mathrm{gap}})\nu_n-C_\Delta\epsilon_n .
\]
\end{enumerate}
\end{condition}

\begin{proposition}[Verification of source-detection inputs]\label{prop:suff_detection}
Suppose Assumption~\ref{ass:independent_samples} and Condition~\ref{cond:suff_validation} hold, and the validation folds are independent of the fitted training-fold estimators conditional on the training folds. Then Condition~\ref{ass:detection_sep} holds with probability tending to one.
\end{proposition}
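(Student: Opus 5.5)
The goal is to verify the four items of Condition~\ref{ass:detection_sep}. Items (i) and (ii) are exactly Condition~\ref{cond:suff_validation}(iii). For items (iii) and (iv), I would work on the intersection of the probability-$1-Cp^{-c}$ event in Condition~\ref{cond:suff_validation} with a validation concentration event constructed below, and show that this intersection has probability tending to one.

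\textbf{Step 1: split the score.} For each source $s$, write $\Delta_s-\mathcal E(s)$ as the sum of a validation (empirical-process) term and a training-risk term:
\[
\Delta_s-\mathcal E(s)=\frac12\sum_{r=1}^2\Bigl\{\bigl[\hat{\mathcal L}_s^{(r)}-\mathcal L^{\mathrm{pseudo}}_0(\widetilde W^{(0+s),r})\bigr]-\bigl[\hat{\mathcal L}_0^{(r)}-\mathcal L^{\mathrm{pseudo}}_0(\widetilde W^{(0),r})\bigr]\Bigr\}+R_s,
\]
where
\[
R_s=\frac12\sum_r\bigl[\mathcal L^{\mathrm{pseudo}}_0(\widetilde W^{(0+s),r})-\mathcal L^{\mathrm{pseudo}}_0(W^{(0+s),\ast,r})\bigr]-\frac12\sum_r\bigl[\mathcal L^{\mathrm{pseudo}}_0(\widetilde W^{(0),r})-\mathcal L^{\mathrm{pseudo}}_0(\theta^\ast)\bigr].
\]
This identity uses the definition of $\mathcal E(s)$ directly. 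By Condition~\ref{cond:suff_validation}(ii), $|R_s|\le 2b_n$ uniformly in $s$.

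\textbf{Step 2: bound the validation term.} Condition on the training folds. By the stated independence, each $\hat{\mathcal L}^{(r)}$ is then an average of $n_{0,r}$ i.i.d.\ terms $\ell^{\mathrm{pseudo}}(x;\Theta)$ with fixed $\Theta$, and its conditional mean is $\mathcal L^{\mathrm{pseudo}}_0(\Theta)$. On the event of Condition~\ref{cond:suff_validation}(i), each summand is a sum over $p$ nodes of $f(t)$ with $|t|\le B_v$, so it lies in $[0,p\log(1+e^{B_v})]$. Hoeffding's inequality conditional on the training data, followed by a union bound over the $2(S+1)$ fitted matrices, gives
\[
\bigl|\hat{\mathcal L}^{(r)}-\mathcal L^{\mathrm{pseudo}}_0(\Theta)\bigr|\lesssim p\sqrt{\log(S\vee p)/n^{\mathrm{val}}_{\min}}
\]
simultaneously for all of them, with probability at least $1-C(S\vee p)^{-c}$. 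Integrating out the conditioning gives the same bound unconditionally. Combining with Step 1 yields $\max_s|\Delta_s-\mathcal E(s)|\le C\epsilon_n$, which is item (iii).

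\textbf{Step 3: threshold calibration.} Condition~\ref{cond:suff_validation}(iv) gives $\sigma_n-b_n\le\hat\sigma\le\sigma_n+b_n$. Multiplying by $C_\tau$ and using the sandwich in (iv) with $C_\Delta\ge C$ gives
\[
c_{\mathrm{gap}}\nu_n+C\epsilon_n<C_\tau\hat\sigma<(1+c_{\mathrm{gap}})\nu_n-C\epsilon_n,
\]
which is item (iv). Here $C$ is chosen as the constant from Step 2, and I take $C_\Delta$ to be at least this constant.

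\textbf{Main obstacle.} I expect the delicate point to be the conditioning argument in Step 2. The fitted-field bound in (i) holds only on a high-probability event that depends on the training folds. I would handle this by applying Hoeffding on the training-measurable event where (i) holds, and then adding the failure probabilities. A related issue is that the validation folds are not independent of the training data used in $\hat\sigma$, but item (iv) is assumed directly as a high-probability statement, so this does not need separate treatment.
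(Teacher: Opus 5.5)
Your proposal is correct and follows essentially the same route as the paper: conditional Hoeffding on each validation fold with the range bound $p\log(1+e^{B_v})$, a union bound over the $2(S+1)$ fitted matrices, Condition~\ref{cond:suff_validation}(ii) for the training-risk remainder (your explicit bound $|R_s|\le 2b_n$ spells out what the paper does in one line), and Condition~\ref{cond:suff_validation}(iii)--(iv) for separation and threshold calibration with $C_\Delta$ at least the deviation constant. You also make explicit that Hoeffding is applied on the training-measurable event where the fitted-field bound holds, which the paper leaves implicit, and you handle this correctly.
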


\begin{proof}
For each fold $r\in\{1,2\}$, condition on the training sample
$X^{(0)}\setminus X^{(0)[r]}$, all source samples, and the fitted matrices
\(
\{\widetilde W^{(0),r}\}\cup\{\widetilde W^{(0+s),r}:1\le s\le S\}.
\)
The validation observations in $X^{(0)[r]}$ are independent of these fitted
matrices. For any fitted matrix $\Theta$ in Condition~\ref{cond:suff_validation}(i), the validation pseudolikelihood contribution
\(
\sum_{j=1}^p f\left(2X_{ij}^{(0)}\sum_{k\neq j}\Theta_{jk}X_{ik}^{(0)}\right)
\)
is bounded by $p\log(1+\exp(B_v))$. Hoeffding's inequality applied conditionally for each fixed fold and fitted matrix, followed by a union bound over the two folds and the $S+1$ fitted losses in each fold, implies that, with conditional probability at least $1-C(S+1)(S\vee p)^{-c}$,
\[
\max_{1\le s\le S}
\left|
\Delta_s
-
\left\{
\frac12\sum_{r=1}^2\mathcal L^{\mathrm{pseudo}}_0(\widetilde W^{(0+s),r})
-
\frac12\sum_{r=1}^2\mathcal L^{\mathrm{pseudo}}_0(\widetilde W^{(0),r})
\right\}
\right|
\le
C p\sqrt{\frac{\log(S\vee p)}{n_{\min}^{\mathrm{val}}}} .
\]
Combining this inequality with Condition~\ref{cond:suff_validation}(ii) gives
\[
\max_{1\le s\le S}|\Delta_s-\mathcal E(s)|
\le
C_\Delta\epsilon_n
\]
for a sufficiently large constant $C_\Delta$.
Condition~\ref{cond:suff_validation}(iii) gives the informative-source and non-informative-source parts of Condition~\ref{ass:detection_sep}. Condition~\ref{cond:suff_validation}(iv) gives
\[
c_{\mathrm{gap}}\nu_n+C_\Delta\epsilon_n
<
C_\tau\hat\sigma
<
(1+c_{\mathrm{gap}})\nu_n-C_\Delta\epsilon_n .
\]
Combining the event in Condition~\ref{cond:suff_validation} with the validation concentration event gives probability at least
\(
1-Cp^{-c}-C(S+1)(S\vee p)^{-c}.
\)
After increasing the numerical constant in the deviation bound, the exponent $c$ can be chosen larger than $1$, and this probability tends to one. Because $\epsilon_n=o(\nu_n)$, the four parts of Condition~\ref{ass:detection_sep} hold with probability tending to one. This completes the proof.
\end{proof}

Assumptions~\ref{ass:sparsity_rewrite}, \ref{ass:comparability_rewrite},
\ref{ass:boundedness_rewrite}, and \ref{ass:beta-min_rewrite}, together with
Conditions~\ref{ass:rsc-pooled_rewrite}, \ref{ass:rsc-target_rewrite},
\ref{ass:local-step2_rewrite}, \ref{cond:step2-cone_rewrite}, and
\ref{ass:detection_sep}, combine standard conditions for high-dimensional
Ising structure learning and folded-concave regularization with specific
conditions required to control cross-domain transfer. Below we interpret these
assumptions and conditions, discuss their validity, and clarify their roles in
our theoretical analysis.

\paragraph{Assumption~\ref{ass:sparsity_rewrite} (sparsity and source-to-target proximity).}
The sparsity scaling $s_j=o(n_0/\log p)$ is the usual regime in which nodewise logistic lasso
can consistently estimate neighborhoods in Ising models \citep{ravikumar,Negahban2012Unified}.
The proximity level $h_j$ quantifies how close an informative source is to the target in $\ell_1$.
When sources are generated by perturbations of the target interactions of the order used in our simulations,
$h_j$ has the corresponding order and transfer can reduce the target estimation error. When $h_j$ is larger, naive pooling can incur negative transfer.
This assumption is imposed only for sources in the informative set $\mathcal A$; the full Trans-Ising
algorithm attempts to identify such sources via loss-based screening.

\paragraph{Assumption~\ref{ass:comparability_rewrite} (information comparability across sources).}
This condition requires the local curvature (Hessian) of each informative source risk to be comparable
to the pooled risk curvature within a neighborhood of $\theta^\ast_{\setminus j}$.
This condition excludes sources whose conditional distributions are nearly deterministic
or whose local Fisher information degenerates relative to the pooled task.
In bounded-degree Ising models with parameters lying in a common bounded neighborhood, such local
comparability is typically mild and ensures that the pooled population minimizer $w^\ast_{A,\setminus j}$
stays close to $\theta^\ast_{\setminus j}$ (Lemma~\ref{lem:bias_rewrite}).

\paragraph{Condition~\ref{ass:rsc-pooled_rewrite} (pooled RSC on a shifted cone).}
Restricted strong convexity (RSC) is the standard curvature condition used in high-dimensional
M-estimation rates.
The cone here is \emph{shifted} because $w^\ast_{A,\setminus j}$ is not assumed to be exactly supported
on $S_j$; instead, its off-support mass $\|(w^\ast_{A,\setminus j})_{S_j^c}\|_1$ acts as an
approximation error. Lemma~\ref{lem:bias_rewrite} implies this mass is controlled by $h_j$.
The shift is of order $h_j$ when informative sources are close to the target.

\paragraph{Condition~\ref{ass:rsc-target_rewrite} (target RSC for bias correction and effective sparsity).}
Step~2 estimates a correction $\delta$ using only target data, and its analysis requires target curvature.
The index budget
\(
t_j=\lceil C_h h_j/\lambda_\delta\rceil
\)
bounds the \emph{effective sparsity} of $\delta^\ast_{\setminus j}$: since we only assume
$\|\delta^\ast_{\setminus j}\|_1\lesssim h_j$, the set of coordinates larger than $\lambda_\delta$
can have size at most on the order of $h_j/\lambda_\delta$.
This means that the RSC condition is only needed on cones associated with sets $T$ of size $\le t_j$,
which is weaker when $h_j/\lambda_\delta$ is of smaller order than $s_j$.
Lemma~\ref{lem:step2-bias_rewrite} uses the lower-curvature constant $\kappa_0>0$ together with the tolerance term in Condition~\ref{ass:rsc-target_rewrite}.
A stronger bound such as $\kappa_0>1/(a-1)$ is sufficient for analyses that combine the SCAD concavity term with the target curvature, but it is not needed for Lemma~\ref{lem:step2-bias_rewrite}.

\paragraph{Assumption~\ref{ass:boundedness_rewrite} (bounded nodewise fields and non-degenerate curvature).}
Assumption~\ref{ass:boundedness_rewrite} imposes a uniform bound on the nodewise linear predictors
(equivalently, a row-wise $\ell_1$ bound such as $2\|\theta^{(r)}_{j,\setminus j}\|_1\le M$ for all $j$ and $r$).
This condition holds, for example, under bounded degree together with uniformly bounded edge weights,
but it is more general and directly controls the range of the logistic argument.
It prevents near-separation and ensures that the logistic curvature $f''(\cdot)$ is bounded away from zero
on the relevant neighborhood, supporting local RSC and concentration arguments
\citep{ravikumar,Negahban2012Unified}.

\paragraph{Condition~\ref{ass:local-step2_rewrite} (computable stationary local solution in Step~2).}
Because Step~2 is nonconvex and nonsmooth, global optimality is generally intractable.
We therefore analyze a practically computable solution returned by an optimization routine (e.g., LLA)
that achieves approximate stationarity (KKT residual control) and lies in a local basin around
$\delta^\ast_{\setminus j}$.
This is a common way to obtain theoretical guarantees for folded-concave penalties: the statistical
analysis proceeds on a high-probability event where such a local solution exists and is reached from a
reasonable initialization (e.g., initial estimator at $\delta=0$).

\paragraph{Assumption~\ref{ass:beta-min_rewrite} (tuning and beta-min for exact selection).}
The tuning sequences $\lambda_w\asymp\sqrt{\log p/N}$ and $\lambda_\delta,\lambda\asymp\sqrt{\log p/n_0}$
match the usual stochastic fluctuation scales in the pooled and target-only objectives.
The separation $c_\lambda>c_\delta$ is needed to ensure that, on null coordinates, the SCAD subgradient in its
linear region can dominate the $\ell_1$ correction penalty and the optimization tolerance, enabling
false-positive control in Theorem~\ref{thm:selection_rewrite}.
Finally, the beta-min condition ensures true nonzero interactions are large enough to survive the combined
estimation error and the SCAD transition region (in particular, exceeding the SCAD threshold $a\lambda$),
which is the standard condition for exact support recovery.

\paragraph{Condition~\ref{ass:detection_sep} (separability for consistent source detection).}
Condition~\ref{ass:detection_sep} ensures that the ``signal'' distinguishing informative sources from non-informative ones dominates the ``noise'' induced by finite-sample stochastic fluctuations. Specifically, it requires that the gap in population excess risk between the true source set $\mathcal{A}_h$ and heterogeneous sources is sufficiently large relative to the concentration rate of the empirical pseudolikelihood. This condition guarantees that the cross-validation based screening rule can consistently identify $\mathcal{A}_h$ with high probability.


\section{Lemmas for Section~\ref{sec:theoretical}}
\label{app:lemmas}

\begin{lemma}[Implication for pooled-to-target bias]\label{lem:bias_rewrite}
Under Assumptions~\ref{ass:sparsity_rewrite} and~\ref{ass:comparability_rewrite},
there exists a constant $C>0$ such that
\[
\|\delta^\ast_{\setminus j}\|_1
=\|\theta^\ast_{\setminus j}-w^\ast_{A,\setminus j}\|_1
\ \le\ C\,h_j.
\]
\end{lemma}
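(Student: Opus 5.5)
The plan is to compare the first-order conditions of the two population problems and convert the resulting gradient identity into an $\ell_1$ bound via the Hessian-comparability condition in Assumption~\ref{ass:comparability_rewrite}. By definition, $w^\ast_{A,\setminus j}$ minimizes the smooth convex risk $\mathcal L_{A,j}=\sum_{r\in\{0\}\cup\mathcal A}\alpha_r\mathcal L_{r,j}$. By Assumption~\ref{ass:comparability_rewrite}(ii) it lies in $\mathcal U$, where $\nabla^2\mathcal L_{A,j}$ is invertible, so $\nabla\mathcal L_{A,j}(w^\ast_{A,\setminus j})=0$. Similarly, each $w^{(k)}_{\setminus j}$ minimizes $\mathcal L_{k,j}$, so $\nabla\mathcal L_{k,j}(w^{(k)}_{\setminus j})=0$ for $k\in\{0\}\cup\mathcal A$, with $w^{(0)}_{\setminus j}=\theta^\ast_{\setminus j}$. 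This uses that the conditional model is well specified: under Assumption~\ref{ass:independent_samples}, the population nodewise logistic risk of domain $k$ is minimized at its own parameter row.

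Next I apply the mean value theorem along segments in the convex set $\mathcal U$. For each $k$, let $H_k:=\int_0^1\nabla^2\mathcal L_{k,j}\bigl(w^{(k)}_{\setminus j}+t(\theta^\ast_{\setminus j}-w^{(k)}_{\setminus j})\bigr)\,dt$. Then
\[
\nabla\mathcal L_{k,j}(\theta^\ast_{\setminus j})=H_k\bigl(\theta^\ast_{\setminus j}-w^{(k)}_{\setminus j}\bigr).
\]
Summing with weights $\alpha_k$ gives $\nabla\mathcal L_{A,j}(\theta^\ast_{\setminus j})=\sum_k\alpha_kH_k\delta^{(k)}_{\setminus j}$, where the $k=0$ term vanishes. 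On the other side, let $\bar H_A$ be the averaged Hessian of $\mathcal L_{A,j}$ on the segment from $w^\ast_{A,\setminus j}$ to $\theta^\ast_{\setminus j}$. Then $\nabla\mathcal L_{A,j}(\theta^\ast_{\setminus j})=\bar H_A\,\delta^\ast_{\setminus j}$. Equating the two expressions, I obtain
\[
\delta^\ast_{\setminus j}=\sum_{k\in\mathcal A}\alpha_k\,\bar H_A^{-1}H_k\,\delta^{(k)}_{\setminus j}.
\]
The invertibility of $\bar H_A$, and the bound $\|\bar H_A^{-1}H_k\|_1\le C$, both come from the integrated comparability condition, Assumption~\ref{ass:comparability_rewrite}(iii).

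Finally, I use $\|Mx\|_1\le\|M\|_1\|x\|_1$ for the induced (max column sum) norm. This gives
\[
\|\delta^\ast_{\setminus j}\|_1\le C\sum_{k\in\mathcal A}\alpha_k\|\delta^{(k)}_{\setminus j}\|_1\le C h_j\sum_k\alpha_k\le Ch_j,
\]
where the middle inequality is Assumption~\ref{ass:sparsity_rewrite}.

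The main obstacle is justifying the mean-value step. Specifically, one must check that the segments lie in $\mathcal U$, which is guaranteed by convexity of $\mathcal U$ and item (ii), and that the integrated-comparability bound in (iii) applies to the particular pair $(\bar H_A,H_k)$ taken over two different segments. I would read (iii) as covering averages over arbitrary segments in $\mathcal U$ and state that interpretation explicitly. A secondary point is confirming that $\nabla\mathcal L_{k,j}(w^{(k)}_{\setminus j})=0$ with $w^{(k)}$ defined as the risk minimizer, which holds by the definition given in (ii).
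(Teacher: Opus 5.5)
Your proposal is correct and follows the paper's proof. Both arguments use the first-order conditions at $w^\ast_{A,\setminus j}$, $\theta^\ast_{\setminus j}$ and $w^{(s)}_{\setminus j}$, integral-form Taylor expansions with segment-averaged Hessians, the identity $\delta^\ast_{\setminus j}=\bar H_A^{-1}\sum_{s\in\mathcal A}\alpha_s\bar H_s\,\delta^{(s)}_{\setminus j}$, and the induced $\ell_1$-norm bound from Assumption~\ref{ass:comparability_rewrite}. The paper also silently reads item (iii) as covering a pair of averages taken over two different segments, so stating that interpretation explicitly, as you plan to, is appropriate.
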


Lemma~\ref{lem:bias_rewrite} bounds the \emph{population} pooled-to-target discrepancy by $\|\delta^\ast_{\setminus j}\|_1\lesssim h_j$. That is, the pooled population optimum cannot drift far from the target when the informative sources are $\ell_1$-close. When $h_j$ is large, this bound becomes loose, which corresponds to the negative-transfer regime.

\begin{lemma}[Uniform max-norm bound for logistic Hessians]\label{lem:hess_max_bound}
For the signed-design logistic loss
$
\ell^{(r)}_j(u)=\frac{1}{n_r}\sum_{i=1}^{n_r} f(z^{(r)\top}_i u)
$
with $z^{(r)}_{i,\ell}\in\{-2,2\}$, we have for all $u\in\mathbb R^{p-1}$,
$
\|\nabla^2 \ell^{(r)}_j(u)\|_{\max}\le 1.
$

This implies that, for all $u,v\in\mathbb R^{p-1}$,
\[
\|\nabla \ell^{(r)}_j(u)-\nabla \ell^{(r)}_j(v)\|_\infty \le \|u-v\|_1.
\]
The same bounds hold for $\ell_{0,j}$, $\ell_{A,j}$ and the population risks $\mathcal L_{r,j}$ and $\mathcal L_{A,j}$.
\end{lemma}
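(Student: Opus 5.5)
The plan is a direct computation: write the Hessian in closed form, bound each entry using the bounded logistic curvature and the $\{-2,2\}$ signed design, and then pass to gradient differences with the integral form of the mean value theorem.

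\textbf{Step 1 (Hessian entries).} Since $f(t)=\log(1+e^{-t})$, we have $f'(t)=-(1-\sigma(t))$ and $f''(t)=\sigma(t)\bigl(1-\sigma(t)\bigr)$. Because $\sigma(t)\in(0,1)$, this gives $0<f''(t)\le 1/4$ for all $t\in\mathbb R$. Differentiating $\ell^{(r)}_j$ twice gives
\[
\nabla^2\ell^{(r)}_j(u)=\frac1{n_r}\sum_{i=1}^{n_r} f''\bigl(z^{(r)\top}_i u\bigr)\, z^{(r)}_i z^{(r)\top}_i .
\]
Its $(k,\ell)$ entry is an average of terms $f''(\cdot)\,z^{(r)}_{i,k}z^{(r)}_{i,\ell}$. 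Since $z^{(r)}_{i,k},z^{(r)}_{i,\ell}\in\{-2,2\}$, each product has modulus $4$, so each term has modulus at most $\tfrac14\cdot 4=1$. By the triangle inequality the average also has modulus at most $1$. Hence $\|\nabla^2\ell^{(r)}_j(u)\|_{\max}\le 1$ for every $u$.

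\textbf{Step 2 (gradient Lipschitz bound).} The loss $\ell^{(r)}_j$ is $C^2$, so the fundamental theorem of calculus along the segment from $v$ to $u$ gives
\[
\nabla\ell^{(r)}_j(u)-\nabla\ell^{(r)}_j(v)=\Bigl(\int_0^1\nabla^2\ell^{(r)}_j\bigl(v+t(u-v)\bigr)\,dt\Bigr)(u-v).
\]
The averaged matrix $H$ inherits $\|H\|_{\max}\le 1$ from Step~1, since the entrywise bound survives integration. Using the inequality $\|Mx\|_\infty\le\|M\|_{\max}\|x\|_1$ recorded under ``Matrix norms'', we obtain $\|\nabla\ell^{(r)}_j(u)-\nabla\ell^{(r)}_j(v)\|_\infty\le\|u-v\|_1$.

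\textbf{Step 3 (other losses).} The target loss $\ell_{0,j}$ is the case $r=0$. The pooled loss is $\ell_{A,j}=\sum_r\alpha_r\ell^{(r)}_j$ with $\alpha_r\ge0$ and $\sum_r\alpha_r=1$, so its Hessian is a convex combination of matrices with max-norm at most $1$ and therefore has max-norm at most $1$. For the population risks $\mathcal L_{r,j}$ and $\mathcal L_{A,j}$, the data take finitely many values in $\{-1,1\}^p$. Differentiation therefore commutes with the expectation, and $|\mathbb E[\cdot]|\le\mathbb E|\cdot|$ transfers the entrywise bound. The gradient statement then follows exactly as in Step~2.

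No step here is a real obstacle. The only point needing care is justifying the interchange of derivative and expectation for the population risks, which the finite support makes immediate.
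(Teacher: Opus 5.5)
Your proposal is correct and follows essentially the same route as the paper: the entrywise bound $|f''(\cdot)\,z_{i,a}z_{i,b}|\le \tfrac14\cdot 4=1$, followed by the integral form of the mean value theorem and $\|Mx\|_\infty\le\|M\|_{\max}\|x\|_1$. Your explicit treatment of $\ell_{A,j}$ as a convex combination, and of the derivative--expectation interchange for the population risks via the finite support $\{-1,1\}^p$, fills in details the paper leaves implicit.
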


\begin{lemma}[Score concentration]\label{lem:score_rewrite}
Under Assumptions~\ref{ass:independent_samples} and \ref{ass:boundedness_rewrite}, there exist constants $c_A,c_0,c>0$ such that, with probability at least $1-p^{-c}$,
\[
\begin{aligned}
\|\nabla \ell_{A,j}(w^\ast_{A,\setminus j})\|_\infty
\ \le\ c_A\sqrt{\frac{\log p}{N}}, \qquad
\|\nabla \ell_{0,j}(\theta^\ast_{\setminus j})\|_\infty
\ \le\ c_0\sqrt{\frac{\log p}{n_0}}.
\end{aligned}
\]
\end{lemma}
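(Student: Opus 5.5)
We prove Lemma~\ref{lem:score_rewrite} with a coordinatewise Hoeffding bound and a union bound over the $p-1$ coordinates. Both bounds follow the same pattern, so we treat the target score first and then the pooled score.

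\textbf{Target score.} The $k$-th coordinate of $\nabla\ell_{0,j}(\theta^\ast_{\setminus j})$ is
\[
\frac1{n_0}\sum_{i=1}^{n_0} f'\bigl(z^{(0)\top}_i\theta^\ast_{\setminus j}\bigr)z^{(0)}_{i,k},
\qquad f'(t)=-\bigl(1+e^{t}\bigr)^{-1}.
\]
Three facts control this sum.
\begin{enumerate}
\item By Assumption~\ref{ass:independent_samples}, the summands are i.i.d.
\item They are bounded: $|f'|\le1$ and $|z_{i,k}|=2$.
\item They have mean zero. Conditioning on $X_{\setminus j}$, the variable $X_j$ takes value $x_j$ with probability $q(x_j)=\sigma(2x_j\eta)$, where $\eta=\sum_{k\ne j}\theta^\ast_{jk}X_k$. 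The conditional mean of $f'(2X_j\eta)\cdot 2X_jX_k$ is
\[
2X_k\bigl[-\sigma(2\eta)(1-\sigma(2\eta))+\sigma(-2\eta)(1-\sigma(-2\eta))\bigr]=0,
\]
because $1-\sigma(u)=\sigma(-u)$.
\end{enumerate}
This is the conditional-likelihood score identity. It shows that $\theta^\ast_{\setminus j}$ is a stationary point of $\mathcal L_{0,j}$, and it is the one genuinely model-specific step.

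Hoeffding's inequality then gives, for each coordinate,
\[
P(|\cdot|>t)\le 2\exp(-n_0t^2/8).
\]
Taking $t=c_0\sqrt{\log p/n_0}$ and a union bound over the $p-1$ coordinates yields failure probability at most $2p^{1-c_0^2/8}$, which is at most $p^{-c}$ once $c_0$ is large. Assumption~\ref{ass:boundedness_rewrite} is not needed for the boundedness here, but it is consistent with it.

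\textbf{Pooled score.} Here the summands are independent across all $N$ observations but not identically distributed. Each observation contributes $N^{-1}f'\bigl(z^{(r)\top}_iw^\ast_{A,\setminus j}\bigr)z^{(r)}_{i,k}$, which is bounded by $2/N$.

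The main obstacle is that these summands are not individually mean zero. Stationarity of $w^\ast_{A,\setminus j}$ only gives $\nabla\mathcal L_{A,j}(w^\ast_{A,\setminus j})=0$. The minimizer exists and is interior by Assumption~\ref{ass:comparability_rewrite}(ii), and $\mathcal L_{A,j}$ is smooth, so this equality holds. Consequently
\[
\mathbb E\bigl[\nabla\ell_{A,j}(w^\ast_{A,\setminus j})\bigr]
=\sum_r\alpha_r\nabla\mathcal L_{r,j}(w^\ast_{A,\setminus j})=0.
\]
That is, only the weighted sum is centered. This is exactly what Hoeffding's inequality for independent, non-identically distributed bounded variables requires, since it applies to the centered sum. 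Thus
\[
P\bigl(|[\nabla\ell_{A,j}(w^\ast_{A,\setminus j})]_k|>t\bigr)\le 2\exp(-Nt^2/8).
\]
We set $t=c_A\sqrt{\log p/N}$ and take the union bound over $k$ as before.

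\textbf{Conclusion.} We intersect the two events and adjust the constants so that the total failure probability is at most $p^{-c}$. This completes the plan.
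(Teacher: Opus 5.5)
Your proposal is correct and follows the paper's route: a coordinatewise Hoeffding bound plus a union bound, using the conditional-likelihood score identity for the target score and the pooled stationarity $\nabla\mathcal L_{A,j}(w^\ast_{A,\setminus j})=0$ (with each non-identically distributed bounded summand centered) for the pooled score. The one point to tidy is that you need not invoke Assumption~\ref{ass:comparability_rewrite}(ii), which is not among the lemma's hypotheses: any minimizer of the smooth convex risk over all of $\mathbb R^{p-1}$ is automatically a stationary point, given that $w^\ast_{A,\setminus j}$ exists as the definition presupposes.
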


\begin{lemma}[Step~1 error]\label{lem:step1-pooled_rewrite}
Under Assumptions~\ref{ass:sparsity_rewrite}, \ref{ass:comparability_rewrite}, 
\ref{ass:boundedness_rewrite}, Condition~\ref{ass:rsc-pooled_rewrite}, and Lemma~\ref{lem:score_rewrite}, 
with $\lambda_w=C_w\sqrt{(\log p)/N}$ for a sufficiently large absolute constant $C_w>0$ and
\[
s_j\frac{\log p}{N}+\frac{\log p}{N}h_j^2=o(1),
\]
the Step~1 estimator satisfies, w.h.p.,
\[
\begin{aligned}
\|\hat w^A_{\setminus j}-w^\ast_{A,\setminus j}\|_2
\ \lesssim\
\sqrt{\frac{s_j\log p}{N}}
+\Big(\sqrt{\lambda_w\,\big\|(w^\ast_{A,\setminus j})_{S_j^c}\big\|_1}\ \wedge\ \big\|(w^\ast_{A,\setminus j})_{S_j^c}\big\|_1\Big),
\end{aligned}
\]
\[
\|\hat w^A_{\setminus j}-w^\ast_{A,\setminus j}\|_1
\ \lesssim\
s_j\sqrt{\frac{\log p}{N}}
+\big\|(w^\ast_{A,\setminus j})_{S_j^c}\big\|_1.
\]
In particular, Lemma~\ref{lem:bias_rewrite} gives
$\big\|(w^\ast_{A,\setminus j})_{S_j^c}\big\|_1\lesssim h_j$, which implies
\[
\begin{aligned}
\|\hat w^A_{\setminus j}-w^\ast_{A,\setminus j}\|_2
\ \lesssim\
\sqrt{\frac{s_j\log p}{N}}
+\Big(\big[(\tfrac{\log p}{N})^{1/4}h_j^{1/2}\big]\ \wedge\ h_j\Big), \qquad
\|\hat w^A_{\setminus j}-w^\ast_{A,\setminus j}\|_1
\ \lesssim\
s_j\sqrt{\frac{\log p}{N}}
+h_j.
\end{aligned}
\]
\end{lemma}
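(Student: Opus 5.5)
We prove Lemma~\ref{lem:step1-pooled_rewrite} by the standard Negahban et al.\ argument for $\ell_1$-penalized M-estimators whose target $w^\ast_{A,\setminus j}$ is only approximately sparse on $S_j$. We work on the intersection of the score event of Lemma~\ref{lem:score_rewrite} and the event of Condition~\ref{ass:rsc-pooled_rewrite}. On the score event, $\lambda_w=C_w\sqrt{\log p/N}\ge 2\|\nabla\ell_{A,j}(w^\ast_{A,\setminus j})\|_\infty$ once $C_w\ge 2c_A$.

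\textbf{Basic inequality and cone.} Write $\hat\Delta:=\hat w^A_{\setminus j}-w^\ast_{A,\setminus j}$. Optimality of $\hat w^A_{\setminus j}$ in \eqref{eq:step1_rewrite} gives
\[
\ell_{A,j}(w^\ast_{A,\setminus j}+\hat\Delta)-\ell_{A,j}(w^\ast_{A,\setminus j})\le \lambda_w\big(\|w^\ast_{A,\setminus j}\|_1-\|w^\ast_{A,\setminus j}+\hat\Delta\|_1\big).
\]
Set $\omega:=\|(w^\ast_{A,\setminus j})_{S_j^c}\|_1$. Decomposability of $\ell_1$ over $S_j$ and the triangle inequality bound the right side by
\[
\lambda_w\big(\|\hat\Delta_{S_j}\|_1-\|\hat\Delta_{S_j^c}\|_1+2\omega\big).
\]
By convexity, the left side is at least $-\tfrac{\lambda_w}{2}\|\hat\Delta\|_1$. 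Rearranging yields
\[
\|\hat\Delta_{S_j^c}\|_1\le 3\|\hat\Delta_{S_j}\|_1+4\omega,
\]
so $\hat\Delta\in\mathcal C_A$.

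\textbf{Curvature and the resulting inequality.} We expand the left side with the integral form of Taylor's theorem. By Condition~\ref{ass:rsc-pooled_rewrite}, the segment from $w^\ast_{A,\setminus j}$ to $\hat w^A_{\setminus j}$ stays in the $r_A$-ball, so the RSC bound applies at every intermediate point. Hence the left side is at least
\[
-\tfrac{\lambda_w}{2}\|\hat\Delta\|_1+\tfrac{\kappa_A}{2}\|\hat\Delta\|_2^2-\tfrac{\tau_A}{2}\tfrac{\log p}{N}\|\hat\Delta\|_1^2.
\]
Inside the cone, $\|\hat\Delta\|_1\le 4\sqrt{s_j}\|\hat\Delta\|_2+4\omega$. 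Squaring, the tolerance term is at most
\[
C\tfrac{\log p}{N}\big(s_j\|\hat\Delta\|_2^2+\omega^2\big).
\]
Since $s_j\log p/N=o(1)$, the first piece is absorbed into $\tfrac{\kappa_A}{4}\|\hat\Delta\|_2^2$. The second piece is $o(1)$ because Lemma~\ref{lem:bias_rewrite} gives $\omega\le\|\delta^\ast_{\setminus j}\|_1\lesssim h_j$ (as $\theta^\ast$ vanishes off $S_j$) and $\log p\,h_j^2/N=o(1)$. More precisely, $\tfrac{\log p}{N}\omega^2\lesssim\lambda_w^2\omega^2/C_w^2$, which is dominated by the $\lambda_w\omega$ term below.

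Combining the two sides gives
\[
\tfrac{\kappa_A}{4}\|\hat\Delta\|_2^2\le \tfrac{3\lambda_w}{2}\|\hat\Delta_{S_j}\|_1+2\lambda_w\omega+o(\lambda_w\omega)\le 2\lambda_w\sqrt{s_j}\|\hat\Delta\|_2+3\lambda_w\omega.
\]

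\textbf{Solving the quadratic.} The quadratic inequality gives
\[
\|\hat\Delta\|_2\lesssim \lambda_w\sqrt{s_j}+\sqrt{\lambda_w\omega}=\sqrt{s_j\log p/N}+\sqrt{\lambda_w\omega}.
\]
The $\wedge\,\omega$ alternative comes from a case split on $\omega$ relative to $\lambda_w\sqrt{s_j}$. If $\omega\le\lambda_w\sqrt{s_j}$, then $\sqrt{\lambda_w\omega}\le\omega$ fails in general, so instead we argue directly. In that regime, the $\ell_1$ bound below gives $\|\hat\Delta\|_2\le\|\hat\Delta\|_1$, and combining $\|\hat\Delta\|_2\le \sqrt{s_j}\lambda_w+\ldots$ with $\|\hat\Delta\|_2\le\|\hat\Delta\|_1\lesssim s_j\lambda_w+\omega$ requires care.

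The cleaner route is to note two facts. First, $\sqrt{\lambda_w\omega}\le \lambda_w\sqrt{s_j}+\omega$ holds for any $\omega$. Second, when $\omega\lesssim \lambda_w$, the bound $\sqrt{\lambda_w\omega}$ may be replaced by $\omega$ through a refined basic inequality: treat $\omega$ as a perturbation of the sparse target and use $\|\hat\Delta\|_1\le 4\sqrt{s_j}\|\hat\Delta\|_2+4\omega$ together with $\lambda_w\omega\le\lambda_w\sqrt{s_j}\|\hat\Delta\|_2+\omega^2$ times a constant.

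I expect this min-bookkeeping to be the main fiddly step. The core rate $\sqrt{s_j\log p/N}+\sqrt{\lambda_w\omega}$ is the robust part.

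\textbf{$\ell_1$ bound and substitution.} The $\ell_1$ bound follows from the cone: $\|\hat\Delta\|_1\le 4\sqrt{s_j}\|\hat\Delta\|_2+4\omega$, which is $\lesssim s_j\lambda_w+\sqrt{s_j\lambda_w\omega}+\omega\lesssim s_j\lambda_w+\omega$ by AM--GM. Finally, we substitute $\omega\lesssim h_j$ from Lemma~\ref{lem:bias_rewrite}. This turns $\sqrt{\lambda_w\omega}$ into $(\log p/N)^{1/4}h_j^{1/2}$ up to constants, giving both displayed consequences with high probability.
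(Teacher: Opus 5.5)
Your argument follows the paper's proof: basic inequality, the shifted cone $\|\hat\Delta_{S_j^c}\|_1\le 3\|\hat\Delta_{S_j}\|_1+4\omega$, pooled RSC along the segment, and the quadratic inequality. The steps giving $\|\hat\Delta\|_2\lesssim\lambda_w\sqrt{s_j}+\sqrt{\lambda_w\omega}$ and the $\ell_1$ bound are correct. Two of your steps are more careful than the paper, which only asserts them: absorbing $\frac{\log p}{N}\omega^2$ as $o(\lambda_w\omega)$ via $\frac{\log p}{N}h_j^2=o(1)$, and the AM--GM step for the $\ell_1$ bound.

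The gap is the $\wedge\,\omega$ part of the $\ell_2$ bound, which you leave unresolved. Your proposed ``second fact'' does not work. An inequality of the form $\lambda_w\omega\le C(\lambda_w\sqrt{s_j}\|\hat\Delta\|_2+\omega^2)$ fails whenever $\|\hat\Delta\|_2$ is small and $\omega\ll\lambda_w$. Nothing in the basic inequality lets you trade the $2\lambda_w\omega$ term coming from decomposability for $\omega^2$.

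No refinement is needed; a simple case split finishes the proof.
\begin{itemize}
\item If $\omega\ge\lambda_w$, then $\sqrt{\lambda_w\omega}\le\omega$. The minimum is therefore $\sqrt{\lambda_w\omega}$, and your bound already has the claimed form.
\item If $\omega<\lambda_w$ and $s_j\ge 1$, then $\sqrt{\lambda_w\omega}\le\lambda_w\le\lambda_w\sqrt{s_j}$, so this term is absorbed into the leading term. Your own ``first fact'' already yields $\lambda_w\sqrt{s_j}+\sqrt{\lambda_w\omega}\le 2\bigl(\lambda_w\sqrt{s_j}+(\sqrt{\lambda_w\omega}\wedge\omega)\bigr)$; you just did not draw that conclusion.
\item If $s_j=0$, your first fact is false, so this case needs separate treatment. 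The cone with $S_j=\emptyset$ gives $\|\hat\Delta\|_2\le\|\hat\Delta\|_1\le 4\omega$ directly.
\end{itemize}
Your initial split on $\omega\le\lambda_w\sqrt{s_j}$ would also have worked. In that regime you do not need $\sqrt{\lambda_w\omega}\le\omega$, only $\sqrt{\lambda_w\omega}\le\lambda_w s_j^{1/4}\le\lambda_w\sqrt{s_j}$. The three-case split above is exactly how the paper finishes.
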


Lemma~\ref{lem:step1-pooled_rewrite} yields a Step~1 error decomposition into a variance term $\sqrt{s_j\log p/N}$ that depends on the total informative sample size $N$, and a heterogeneity/approximation term controlled by $\|(w^\ast_{A,\setminus j})_{S_j^c}\|_1\lesssim h_j$ (Lemma~\ref{lem:bias_rewrite}). This shows that pooling stabilizes the initializer when $N\gg n_0$ and $h_j$ is of smaller order than the variance reduction.

\begin{lemma}[Step~2 error]\label{lem:step2-bias_rewrite}
Suppose Assumptions~\ref{ass:independent_samples}, \ref{ass:sparsity_rewrite},
\ref{ass:comparability_rewrite}, and \ref{ass:boundedness_rewrite}, and
Conditions~\ref{ass:rsc-pooled_rewrite}, \ref{ass:rsc-target_rewrite},
\ref{ass:local-step2_rewrite}, and \ref{cond:step2-cone_rewrite} hold. Choose tuning levels
\[
\lambda_\delta=C_\delta\sqrt{\frac{\log p}{n_0}},\qquad
\lambda=C_\lambda\sqrt{\frac{\log p}{n_0}},
\]
where $\lambda$ is the SCAD penalty level in \eqref{eq:step2_rewrite} and
$C_\delta,C_\lambda>0$ are sufficiently large absolute constants.
Assume in addition that $\log p/n_0=o(1)$,
$s_j\sqrt{\log p/N}+h_j\lesssim \lambda_\delta$, and
the Step~1 estimate satisfies, w.h.p.,
\[
\|\hat w^A_{\setminus j}-w^\ast_{A,\setminus j}\|_1 \ \lesssim\ \lambda_\delta .
\]
Let $\hat\delta^A_{\setminus j}$ be the stationary local minimizer returned by Step~2
satisfying Condition~\ref{ass:local-step2_rewrite}.
Then, with high probability,
\[
\|\hat\delta^A_{\setminus j}-\delta^\ast_{\setminus j}\|_2
\ \lesssim\ 
\Big(\big[(\tfrac{\log p}{n_0})^{1/4}h_j^{1/2}\big]\wedge h_j\Big),
\
\|\hat\delta^A_{\setminus j}-\delta^\ast_{\setminus j}\|_1
\ \lesssim\ h_j.
\]
\end{lemma}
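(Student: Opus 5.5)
The plan is to compare the first-order conditions at two points instead of working with function values. Let $v:=\hat\delta^A_{\setminus j}-\delta^\ast_{\setminus j}$, let $\tilde\vartheta:=\hat w^A_{\setminus j}+\delta^\ast_{\setminus j}$ be the oracle-shifted point, and let $\hat\vartheta:=\hat w^A_{\setminus j}+\hat\delta^A_{\setminus j}=\tilde\vartheta+v$. The approach has three steps: bound the gradient at $\tilde\vartheta$, lower-bound curvature along $v$, and convert the resulting inequality into $\ell_1$ and $\ell_2$ rates using the cone condition and the effective sparsity $t_j$.

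\emph{Step 1: gradient at the oracle-shifted point.} Since $\theta^\ast_{\setminus j}=w^\ast_{A,\setminus j}+\delta^\ast_{\setminus j}$, we have $\tilde\vartheta-\theta^\ast_{\setminus j}=\hat w^A_{\setminus j}-w^\ast_{A,\setminus j}$. Lemma~\ref{lem:hess_max_bound} and Lemma~\ref{lem:score_rewrite} then give
\[
\|\nabla\ell_{0,j}(\tilde\vartheta)\|_\infty\le c_0\sqrt{\log p/n_0}+\|\hat w^A_{\setminus j}-w^\ast_{A,\setminus j}\|_1 .
\]
By the hypothesis on the Step~1 error, the right-hand side is $\lesssim\lambda_\delta$. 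With $C_\delta$ large, the first term is at most $\lambda_\delta/4$. For the Step~1 term I will need the implicit constant in $\|\hat w^A_{\setminus j}-w^\ast_{A,\setminus j}\|_1\lesssim\lambda_\delta$ to be small relative to $C_\delta$, and I will state this explicitly. Lemma~\ref{lem:step1-pooled_rewrite}, together with $s_j\sqrt{\log p/N}+h_j\lesssim\lambda_\delta$, supplies this Step~1 bound.

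\emph{Step 2: curvature inequality.} By Condition~\ref{ass:local-step2_rewrite}(iii), both $\tilde\vartheta$ and $\hat\vartheta$ lie in the RSC neighborhood. By Condition~\ref{cond:step2-cone_rewrite}, $v$ lies in the cone $\mathcal C_{T,h}$ with $T=\{k:|\delta^\ast_{jk}|>\lambda_\delta\}$. Since $|T|\le\|\delta^\ast_{\setminus j}\|_1/\lambda_\delta\le t_j$, Condition~\ref{ass:rsc-target_rewrite} applies. Integrating the Hessian along the segment from $\tilde\vartheta$ to $\hat\vartheta$ gives
\[
\langle\nabla\ell_{0,j}(\hat\vartheta)-\nabla\ell_{0,j}(\tilde\vartheta),v\rangle\ge\kappa_0\|v\|_2^2-\tau_0\tfrac{\log p}{n_0}\|v\|_1^2 .
\]
I then substitute the approximate KKT relation from Condition~\ref{ass:local-step2_rewrite}(i): $\nabla\ell_{0,j}(\hat\vartheta)=-\lambda_\delta g-h+e$ with $\|e\|_\infty\le\varepsilon_{n,j}$.

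The $\ell_1$ subgradient term is handled in the standard way. Writing $\hat\delta=\delta^\ast+v$, we have $\langle g,v\rangle\ge\|\hat\delta\|_1-\|\delta^\ast\|_1$. This yields $-\lambda_\delta\langle g,v\rangle\le\lambda_\delta\bigl(\|v_T\|_1-\|v_{T^c}\|_1+2\|\delta^\ast_{T^c}\|_1\bigr)$, and the last term is $\lesssim\lambda_\delta h_j$ by Lemma~\ref{lem:bias_rewrite}.

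\emph{Step 3: the SCAD term (main obstacle).} Bounding $|\langle h,v\rangle|\le\lambda\|v\|_1$ crudely is too lossy, because $\lambda\asymp\lambda_\delta$ and the resulting term is not absorbed. My first attempt is to use the local minimality in Condition~\ref{ass:local-step2_rewrite}(ii) against the competitor $\delta^\ast$. This replaces the subgradient pairing by the difference $\sum_k\{P_\lambda(\tilde\vartheta_k)-P_\lambda(\hat\vartheta_k)\}$. I would then split coordinates:
\begin{itemize}
\item On $S_j$, both points exceed $a\lambda$ in absolute value (using $h_j\lesssim\lambda_\delta$ and small errors), so the difference is zero.
\item Off $S_j$, $|\tilde\vartheta_k|$ is small, and $P_\lambda(\tilde\vartheta_k)-P_\lambda(\hat\vartheta_k)\le\lambda|\tilde\vartheta_k|$. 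This is at most $\lambda\bigl(\|\hat w^A-w^\ast_A\|_1+\|(w^\ast_A)_{S_j^c}\|_1+\|\delta^\ast\|_1\bigr)$, which is of order $\lambda(h_j+\lambda_\delta)$. This is a constant-level term rather than one proportional to $\|v\|_1$.
\end{itemize}
If this bookkeeping fails, the fallback is the weak-convexity inequality for SCAD, $\langle h(\hat\vartheta)-h(\tilde\vartheta),v\rangle\ge-\tfrac{1}{a-1}\|v\|_2^2$, absorbed into $\kappa_0$. As noted after Condition~\ref{ass:rsc-target_rewrite}, this route would require $\kappa_0>1/(a-1)$.

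\emph{Step 4: conclusion.} Collecting terms, and using $\varepsilon_{n,j}\lesssim\lambda_\delta$ (via the separation assumption), gives
\[
\kappa_0\|v\|_2^2\lesssim\lambda_\delta\|v_T\|_1+\lambda_\delta h_j+\tau_0\tfrac{\log p}{n_0}\|v\|_1^2 .
\]
The cone condition gives $\|v\|_1\le4\|v_T\|_1+3C_hh_j\le4\sqrt{t_j}\|v\|_2+3C_hh_j$, and $t_j\lesssim1+h_j/\lambda_\delta$. Because $\log p/n_0=o(1)$, the $\tau_0$ term is absorbed. Solving the resulting quadratic yields $\|v\|_2\lesssim\sqrt{\lambda_\delta h_j}$ when $h_j\ge\lambda_\delta$. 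Feeding this back into the cone bound gives $\|v\|_1\lesssim h_j$. The ``$\wedge h_j$'' part of the $\ell_2$ bound then follows from $\|v\|_2\le\|v\|_1\lesssim h_j$. The regime $h_j<\lambda_\delta$, in which $t_j\le1$ and the quadratic is dominated by the $\lambda_\delta h_j$ term, must be checked separately to confirm that no stray $\lambda_\delta$-order term survives. I expect this check, together with the SCAD bookkeeping in Step~3, to be where the argument is most delicate.
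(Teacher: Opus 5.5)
Several of your ingredients match the paper: the score bound at the oracle-shifted point $\tilde\vartheta$, target RSC along the segment, the $\ell_1$ decomposition on $T$, and the cone-to-$\ell_2$ conversion. However, the step you call the main obstacle is resolved only through hypotheses this lemma does not have, because you dismissed the bound that actually works.

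\textbf{The crude SCAD bound suffices.} The estimate $|\langle h,v\rangle|\le\lambda\|v\|_1$ is equivalent to $\sum_k\{P_\lambda(\tilde\vartheta_k)-P_\lambda(\hat\vartheta_k)\}\le\lambda\|v\|_1$, since SCAD is $\lambda$-Lipschitz. This is exactly what the paper uses, and it never has to be absorbed. Condition~\ref{cond:step2-cone_rewrite} is assumed, not derived, so $\|v\|_1\le 4\sqrt{|T|}\,\|v\|_2+3C_hh_j$ with $|T|\le\|\delta^\ast_{\setminus j}\|_1/\lambda_\delta\le C_hh_j/\lambda_\delta$. Hence $\lambda\|v\|_1\lesssim\sqrt{\lambda_\delta h_j}\,\|v\|_2+\lambda_\delta h_j$, the same order as the score and $\ell_1$-penalty terms.

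\textbf{Your replacements need unavailable hypotheses.} Zeroing the SCAD difference on $S_j$ needs the beta-min condition of Assumption~\ref{ass:beta-min_rewrite}, which is imposed only in Theorem~\ref{thm:selection_rewrite}. It also needs an a priori lower bound on $|\hat\vartheta_k|$ for $k\in S_j$, which is circular. The weak-convexity fallback needs $\kappa_0>1/(a-1)$, which the paper explicitly does not assume for this lemma. It also still leaves the pairing $\langle h(\tilde\vartheta),v\rangle$, whose off-support part can only be bounded crudely anyway.

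\textbf{The KKT framework adds an uncontrolled term.} Your first-order route produces $\varepsilon_{n,j}\|v\|_1$, and nothing in the lemma bounds $\varepsilon_{n,j}$. The separation inequality you cite to get $\varepsilon_{n,j}\lesssim\lambda_\delta$ is a hypothesis of Theorem~\ref{thm:selection_rewrite} only.

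\textbf{The repair is the paper's route.}
\begin{itemize}
\item Use Condition~\ref{ass:local-step2_rewrite}(ii) with competitor $\delta^\ast_{\setminus j}$ to get $Q(\delta^\ast_{\setminus j}+v)\le Q(\delta^\ast_{\setminus j})$.
\item Expand $\ell_{0,j}$ to second order around $\theta^\dagger=\hat w^A_{\setminus j}+\delta^\ast_{\setminus j}$ (your $\tilde\vartheta$), and apply Condition~\ref{ass:rsc-target_rewrite} at the intermediate point.
\item Bound the score pairing by a constant times $\lambda_\delta\|v\|_1$, bound the penalty difference by $\lambda\|v\|_1$, and decompose the $\ell_1$ term on $T$.
\end{itemize}
Stationarity is never used, so $\varepsilon_{n,j}$ does not appear. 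You also do not need to tune the Step~1 constant against $C_\delta$, because the cone is given. With $x=\|v\|_2$ this yields $x^2\lesssim\lambda_\delta\sqrt{|T|}\,x+\lambda_\delta h_j+\tfrac{\log p}{n_0}(\sqrt{|T|}\,x+h_j)^2$. Hence $x\lesssim(\tfrac{\log p}{n_0})^{1/4}h_j^{1/2}$, and then $\|v\|_1\lesssim h_j$ by the cone.

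\textbf{The ``delicate regime'' is an artifact.} It comes from replacing $|T|$ by the ceiling bound $t_j\lesssim 1+h_j/\lambda_\delta$. With $|T|$ itself there are two cases:
\begin{itemize}
\item If $h_j<\lambda_\delta/C_h$, then $T=\emptyset$, and the cone alone gives $\|v\|_2\le\|v\|_1\le 3C_hh_j$. This already matches the claimed bound, since $h_j\lesssim\sqrt{\lambda_\delta h_j}$ there.
\item Otherwise $h_j\asymp\lambda_\delta$ under the hypothesis $h_j\lesssim\lambda_\delta$, so $\lambda_\delta\sqrt{|T|}\lesssim\sqrt{\lambda_\delta h_j}$ and no stray $\lambda_\delta$-order term survives.
\end{itemize}
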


Lemma~\ref{lem:step2-bias_rewrite} shows that Step~2 estimates the cross-domain correction $\delta^\ast_{\setminus j}$ using only target data; its rates depend on $(n_0,p)$ and the heterogeneity scale $h_j$. The neighborhood sparsity $s_j$ does not enter the Step~2 rate above. When $h_j$ is not of the same order as $\sqrt{\log p/n_0}$ or smaller, the condition $s_j\sqrt{\log p/N}+h_j\lesssim\lambda_\delta$ can fail, and Step~2 no longer gives the rate in Lemma~\ref{lem:step2-bias_rewrite}.


\section{Additional Details}
\label{app:theory-details}

\subsection{SCAD penalty}
\label{app:scad}

We use the Smoothly Clipped Absolute Deviation (SCAD) penalty \citep{FanLi2001SCAD}:
\begin{equation}
P_\lambda(\theta) =
\begin{cases}
\lambda |\theta|, & \text{if } |\theta| \le \lambda, \\[.3em]
\dfrac{-\theta^2 + 2a\lambda |\theta| - \lambda^2}{2(a - 1)}, & \lambda < |\theta| \le a\lambda, \\[.3em]
\dfrac{(a + 1)\lambda^2}{2}, & \text{if } |\theta| > a\lambda,
\end{cases}
\quad (a>2).
\end{equation}

\subsection{Remarks and Corollaries for Section~\ref{sec:theoretical}}
\label{app:remarks-corollaries}

\begin{remark}
Conditions~\ref{ass:rsc-pooled_rewrite}--\ref{ass:rsc-target_rewrite} are theorem-local empirical curvature inputs. Proposition~\ref{prop:suff_curvature_algorithm} verifies them from bounded-design restricted-eigenvalue conditions and empirical Gram concentration.
\end{remark}

\begin{remark}[On the Step~2 rate]
The mixed rate in Lemma~\ref{lem:step2-bias_rewrite} is characteristic of two-step transfer estimators for generalized linear models:
it arises from the interaction between the curvature of the logistic loss and the heterogeneity scale $h_j$.
A detailed argument follows the proof strategy in \citet[Appendix]{trans-glm}, adapted to the nodewise Ising logistic loss.
\end{remark}

\begin{remark}[Selection under small heterogeneity]
The support-recovery proof applies under the rate and separation conditions in
Theorem~\ref{thm:selection_rewrite}, including $r_{n,j}\le \lambda$ and the
beta-min condition. If these conditions fail, Theorem~\ref{thm:selection_rewrite}
does not imply exact neighborhood recovery.
\end{remark}

\begin{remark}[Uniform versions]\label{rem:uniform_rewrite}
Theorems~\ref{thm:transfer_rewrite}--\ref{thm:selection_rewrite} are stated for a fixed node $j$.
If the corresponding fixed-node failure probabilities are $o(p^{-1})$ uniformly over $j=1,\dots,p$, then uniform results follow by a union bound. Alternatively, uniform versions may be proved directly by standard maximal inequalities under the corresponding uniform empirical curvature and score-control conditions, as in \citet{tllasso,trans-glm}.
\end{remark}

\begin{corollary}[Graph recovery after AND symmetrization]\label{cor:graph_recovery_rewrite}
Suppose the conclusion of Theorem~\ref{thm:selection_rewrite} holds uniformly for all nodes $j=1,\dots,p$.
Let $\hat E$ be the edge set obtained by the AND rule applied to the asymmetric nodewise estimates.
Then $\Prob(\hat E = E)\to 1$.
\end{corollary}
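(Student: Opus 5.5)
The argument is a deterministic reduction to the per-node events, followed by a union bound. Let $\mathcal E_j$ be the event that $\hat S_j=S_j$ and $\sign(\hat\theta_{jk})=\sign(\theta^\ast_{jk})$ for all $k\in S_j$, where $\hat S_j$ is the nonzero pattern of the asymmetric nodewise estimate $\hat\theta^{\mathrm{asym}}_{j,\setminus j}=\hat w^A_{\setminus j}+\hat\delta^A_{\setminus j}$. I read the hypothesis ``the conclusion of Theorem~\ref{thm:selection_rewrite} holds uniformly for all nodes'' as $\Prob(\bigcap_{j=1}^p\mathcal E_j)\to1$. By Remark~\ref{rem:uniform_rewrite}, this follows from fixed-node failure probabilities that are $o(p^{-1})$ uniformly in $j$, via a union bound. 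It therefore suffices to show that on $\bigcap_j\mathcal E_j$ we have $\hat E=E$.

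Fix a pair $j<k$. By the AND rule, $\{j,k\}\in\hat E$ exactly when $\hat\theta_{jk}=(\hat\theta^{\mathrm{asym}}_{jk}+\hat\theta^{\mathrm{asym}}_{kj})/2\neq0$, given that both $\hat\theta^{\mathrm{asym}}_{jk}\neq0$ and $\hat\theta^{\mathrm{asym}}_{kj}\neq0$. I split into two cases.

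\emph{Case $\{j,k\}\notin E$.} Here $\theta^\ast_{jk}=\theta^\ast_{kj}=0$, so $k\notin S_j$. On $\mathcal E_j$ this gives $\hat\theta^{\mathrm{asym}}_{jk}=0$, so the AND rule returns $0$ and $\{j,k\}\notin\hat E$.

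\emph{Case $\{j,k\}\in E$.} By symmetry of $\theta^\ast$, $k\in S_j$ and $j\in S_k$. On $\mathcal E_j\cap\mathcal E_k$ both asymmetric entries are nonzero. Sign consistency gives $\sign(\hat\theta^{\mathrm{asym}}_{jk})=\sign(\theta^\ast_{jk})=\sign(\theta^\ast_{kj})=\sign(\hat\theta^{\mathrm{asym}}_{kj})$. Two nonzero numbers of the same sign cannot cancel, so their average is nonzero and $\{j,k\}\in\hat E$.

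The only subtle point is that averaging could in principle cancel two opposite-sign nonzero entries and create a false negative. This is exactly why the sign-consistency part of Theorem~\ref{thm:selection_rewrite} is needed, not just support recovery. As a byproduct, $\sign(\hat\theta_{jk})=\sign(\theta^\ast_{jk})$ on $E$. Combining the two cases, $\{\hat E=E\}\supseteq\bigcap_j\mathcal E_j$, and hence $\Prob(\hat E=E)\to1$.
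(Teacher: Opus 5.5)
Your proposal is correct. The paper states this corollary without a written proof, and your argument is the immediate reduction it leaves implicit. You read ``uniformly'' as $\Prob(\bigcap_{j}\mathcal E_j)\to1$, which is what Remark~\ref{rem:uniform_rewrite} supplies through a union bound. This reading is genuinely needed: marginal control $\min_j\Prob(\mathcal E_j)\to1$ would not by itself suffice when $p\to\infty$. You then show deterministically that $\bigcap_j\mathcal E_j\subseteq\{\hat E=E\}$.

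Sign consistency is the right way to rule out cancellation in $(\hat\theta^{\mathrm{asym}}_{jk}+\hat\theta^{\mathrm{asym}}_{kj})/2$ when $\hat E$ is taken as the nonzero pattern of the symmetrized $\hat\theta$. If $\hat E$ is instead read as $\{\{j,k\}:\hat\theta^{\mathrm{asym}}_{jk}\neq0,\ \hat\theta^{\mathrm{asym}}_{kj}\neq0\}$, support recovery alone already suffices.
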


\begin{remark}[Why SCAD enables selection under weaker design assumptions than Lasso]
Classical $\ell_1$-penalized estimators (Lasso) typically require an
\emph{irrepresentable / mutual incoherence} condition on the population Hessian
(e.g., $\|H_{S^cS}H_{SS}^{-1}\|_\infty \le 1-\gamma$) to guarantee exact support recovery \citep{Zhao2006Lasso, Wainwright2009, buhlmann2011}.
In contrast, Theorem~\ref{thm:selection_rewrite} establishes selection consistency
under the target RSC condition (Condition~\ref{ass:rsc-target_rewrite}), beta-min
condition (Assumption~\ref{ass:beta-min_rewrite}), separation condition, Step~2 local-solution condition, and Step~2 cone condition, without imposing any irrepresentable-type
constraints. This is consistent with the oracle property of folded-concave penalties
such as SCAD/MCP, which reduce shrinkage for sufficiently large signals.
\end{remark}

\subsection{Explicit conditions for Theorem~\ref{thm:selection_rewrite}}
\label{app:selection-conditions}

For completeness, we restate the explicit sufficient conditions used for selection consistency.
Let $r_{n,j}$ denote the nodewise error bound in Theorem~\ref{thm:transfer_rewrite}.
In Theorem~\ref{thm:selection_rewrite}, we require:
\begin{enumerate}
\item (Small-error regime for SCAD linear part)
the nodewise error bound $r_{n,j}$ in Theorem~\ref{thm:transfer_rewrite} satisfies
\(
r_{n,j}\le \lambda,
\)
which implies that, for any $k\in S_j^c$, we have $|\hat\theta_{jk}|\le \lambda$ on the high-probability event.

\item (Separation between SCAD and correction penalty)
there exists a sufficiently large universal constant $C_{\mathrm{sel}}>0$ such that
\begin{equation}
\label{eq:sel_gap_cond}
\lambda-\lambda_\delta-\varepsilon_{n,j}
\ \ge\
C_{\mathrm{sel}}
\left(
\sqrt{\frac{\log p}{n_0}}
+
s_j\sqrt{\frac{\log p}{N}}
+
h_j
\right).
\end{equation}

\item (Beta-min)
Assumption~\ref{ass:beta-min_rewrite} holds; that is,
\[
|\theta^\ast_{jk}|\ge a\lambda+r_{n,j}
\qquad \text{for all } k\in S_j .
\]
\end{enumerate}

\section{Proofs}


\subsection{Proof of  Lemma \ref{lem:bias_rewrite}}

\begin{proof}
By definition, $w^\ast_{A,\setminus j}$ satisfies the population first-order condition
$\nabla \mathcal L_{A,j}(w^\ast_{A,\setminus j})=0$, i.e.,
\[
0=\sum_{r\in\{0\}\cup\mathcal A}\alpha_r\,\nabla \mathcal L_{r,j}(w^\ast_{A,\setminus j}).
\]
For the target domain $r=0$, $\theta^\ast_{\setminus j}$ minimizes $\mathcal L_{0,j}$, which implies
$\nabla \mathcal L_{0,j}(\theta^\ast_{\setminus j})=0$.
For each source $s\in\mathcal A$, let $w^{(s)}_{\setminus j}$ denote the  nodewise parameter, where
$\nabla \mathcal L_{s,j}(w^{(s)}_{\setminus j})=0$.
By the integral form of Taylor's theorem,
\[
\nabla \mathcal L_{s,j}(\theta^\ast_{\setminus j})
=
\bar H_s(\theta^\ast_{\setminus j}-w^{(s)}_{\setminus j}),
\qquad
\bar H_s:=
\int_0^1
\nabla^2\mathcal L_{s,j}\!\left(
w^{(s)}_{\setminus j}
+t(\theta^\ast_{\setminus j}-w^{(s)}_{\setminus j})
\right)\,dt .
\]
Similarly, define
\[
\bar H_A:=
\int_0^1
\nabla^2\mathcal L_{A,j}\!\left(
\theta^\ast_{\setminus j}
+t(w^\ast_{A,\setminus j}-\theta^\ast_{\setminus j})
\right)\,dt .
\]
Then
\[
\begin{aligned}
0=\nabla \mathcal L_{A,j}(w^\ast_{A,\setminus j})
=
\nabla \mathcal L_{A,j}(\theta^\ast_{\setminus j})
+\bar H_A(w^\ast_{A,\setminus j}-\theta^\ast_{\setminus j})
\end{aligned}
\]
Combining the above and using $\nabla \mathcal L_{0,j}(\theta^\ast_{\setminus j})=0$ gives
\[
    w^\ast_{A,\setminus j}-\theta^\ast_{\setminus j}
    = -\bar H_A^{-1}
      \times \sum_{s\in\mathcal A}\alpha_s\,\bar H_s\,
    (\theta^\ast_{\setminus j}-w^{(s)}_{\setminus j}).
\]

Taking induced $\ell_1$ norms and using Assumption~\ref{ass:comparability_rewrite} gives
\[
\|w^\ast_{A,\setminus j}-\theta^\ast_{\setminus j}\|_1
\le
C\sum_{s\in\mathcal A}\alpha_s\,\|\theta^\ast_{\setminus j}-w^{(s)}_{\setminus j}\|_1
\le C h_j.
\]
This completes the proof.
\end{proof}

\subsection{Proof of Lemma~\ref{lem:hess_max_bound}}
\begin{proof}
For any coordinates $a,b$,
\[
\big[\nabla^2 \ell^{(r)}_j(u)\big]_{ab}
=\frac{1}{n_r}\sum_{i=1}^{n_r} f''(z^{(r)\top}_i u)\,z^{(r)}_{i,a}z^{(r)}_{i,b}.
\]
Because $0\le f''(t)\le 1/4$ for all $t$ and $|z^{(r)}_{i,a}z^{(r)}_{i,b}|\le 4$,
we have $\big|\big[\nabla^2 \ell^{(r)}_j(u)\big]_{ab}\big|\le 1$ for all $u$.
This gives $\|\nabla^2 \ell^{(r)}_j(u)\|_{\max}\le 1$.

For the Lipschitz bound, use the integral form
$\nabla \ell^{(r)}_j(u)-\nabla \ell^{(r)}_j(v)
=\{\int_0^1\nabla^2 \ell^{(r)}_j(v+t(u-v))\,dt\}(u-v)$, and then {\small
\[
\|\nabla \ell^{(r)}_j(u)-\nabla \ell^{(r)}_j(v)\|_\infty
\le \sup_{0\le t\le 1}\|\nabla^2 \ell^{(r)}_j(v+t(u-v))\|_{\max}\,\|u-v\|_1
\le \|u-v\|_1.
\]}
The population bounds follow by taking expectation of the entrywise bound.
This completes the proof.
\end{proof}

\subsection{Proof of Lemma~\ref{lem:score_rewrite}}
\begin{proof}
Fix a node $j$. For the target sample $X^{(0)}$, recall the signed design representation
\[
\begin{aligned}
Z^{(0)}_{\setminus j}=2\,\mathrm{diag}\!\big(x^{(0)}_{\cdot j}\big)\,X^{(0)}_{\setminus j}, \qquad
u_{j,i}(\vartheta)=\big[Z^{(0)}_{\setminus j}\big]_{i,\cdot}\,\vartheta, \qquad
\widehat p_{j,i}(\vartheta)=\sigma\!\big(u_{j,i}(\vartheta)\big),
\end{aligned}
\]
where $\sigma(u)=(1+e^{-u})^{-1}$. Note that $x_{ik}\in\{-1,1\}$ implies $|[Z^{(0)}_{\setminus j}]_{ik}|\le 2$ for all $i,k$.

\paragraph{(a) Target score at $\theta^\ast_{\setminus j}$.}
For a fixed coordinate $k\neq j$, the $k$-th component of the target score is
\[
\begin{aligned}
\big[\nabla \ell_{0,j}(\theta^\ast_{\setminus j})\big]_k
=\frac{1}{n_0}\sum_{i=1}^{n_0}\xi_{ik}, \qquad
\xi_{ik}:=[Z^{(0)}_{\setminus j}]_{ik}\Big(\widehat p_{j,i}(\theta^\ast_{\setminus j})-1\Big).
\end{aligned}
\]
We claim $\E[\xi_{ik}\mid X^{(0)}_{i,\setminus j}]=0$.
Indeed, set $\eta_i:=2\sum_{m\neq j}\theta^\ast_{jm}x^{(0)}_{im}$. Then
\[
u_{j,i}(\theta^\ast_{\setminus j}) \;=\; 2x^{(0)}_{ij}\sum_{m\neq j}\theta^\ast_{jm}x^{(0)}_{im} \;=\; x^{(0)}_{ij}\,\eta_i.
\]
Write $p_i:=\sigma(\eta_i)=\Prob\{X^{(0)}_{ij}=1\mid X^{(0)}_{i,\setminus j}\}$ under the Ising conditional model.
Then $\Prob\{X^{(0)}_{ij}=-1\mid X^{(0)}_{i,\setminus j}\}=1-p_i=\sigma(-\eta_i)$.
A short calculation gives
\[
\E\!\left[x^{(0)}_{ij}\Big(\sigma(x^{(0)}_{ij}\eta_i)-1\Big)\,\Big|\,X^{(0)}_{i,\setminus j}\right]
=
p_i\,(p_i-1)+(1-p_i)\,p_i
=0,
\]
and multiplying by $2x^{(0)}_{ik}$ yields $\E[\xi_{ik}\mid X^{(0)}_{i,\setminus j}]=0$.

In addition, $|\widehat p_{j,i}(\theta^\ast_{\setminus j})-1|\le 1$ and $|[Z^{(0)}_{\setminus j}]_{ik}|\le 2$, which implies $|\xi_{ik}|\le 2$ almost surely.
By Hoeffding's inequality, for any $t>0$,
\[
\Prob\!\left(\left|\frac{1}{n_0}\sum_{i=1}^{n_0}\xi_{ik}\right|\ge t\right)
\le 2\exp\!\left(-\frac{n_0 t^2}{8}\right).
\]
Taking a union bound over $k\in V\setminus\{j\}$ and choosing $t=C\sqrt{\log p/n_0}$ gives
\[
\Prob\!\left(\big\|\nabla \ell_{0,j}(\theta^\ast_{\setminus j})\big\|_\infty
\ge C\sqrt{\frac{\log p}{n_0}}\right)\le 2p^{-c}
\]
for some $c>0$.

\paragraph{(b) Pooled score at $w^\ast_{A,\setminus j}$.}
Recall the pooled objective 
\[
\ell_{A,j}(u)=\sum_{r\in\{0\}\cup\mathcal A}\alpha_r\,\ell^{(r)}_j(u),
\]
where $\alpha_r=\frac{n_r}{N}$ and $N=\sum_{r\in\{0\}\cup\mathcal A}n_r$.

This gives
\[
\begin{aligned}
\nabla \ell_{A,j}(u)=\sum_{r\in\{0\}\cup\mathcal A}\alpha_r\,\nabla \ell^{(r)}_j(u), \qquad
\nabla \ell^{(r)}_j(u)=\frac{1}{n_r}\sum_{i=1}^{n_r}\psi^{(r)}_i(u),
\end{aligned}
\]
where $\psi^{(r)}_i(u)\in\R^{p-1}$ is the single-sample score contribution.
At $u=w^\ast_{A,\setminus j}$, population optimality gives $\nabla \mathcal L_{A,j}(w^\ast_{A,\setminus j})=0$, which implies
\[
\E\big[\nabla \ell_{A,j}(w^\ast_{A,\setminus j})\big]=0.
\]
Fix $k\neq j$. The pooled score coordinate can be written as
\[
\big[\nabla \ell_{A,j}(w^\ast_{A,\setminus j})\big]_k
=\frac{1}{N}\sum_{r\in\{0\}\cup\mathcal A}\sum_{i=1}^{n_r}\zeta^{(r)}_{ik},
\]
where $\zeta^{(r)}_{ik}$ are independent across $(r,i)$ and uniformly bounded by a universal constant, since $x\in\{-1,1\}$. Note that $\zeta^{(r)}_{ik}$
need not be mean-zero for each $r$, but the pooled mean equals zero:
\[
\E\big[\nabla \ell_{A,j}(w^\ast_{A,\setminus j})\big]
=\nabla \mathcal L_{A,j}(w^\ast_{A,\setminus j})=0.
\]
Define the centered variables
\[
\tilde\zeta^{(r)}_{ik}:=\zeta^{(r)}_{ik}-\E[\zeta^{(r)}_{ik}],
\]
Then $\E[\tilde\zeta^{(r)}_{ik}]=0$ and $|\tilde\zeta^{(r)}_{ik}|$ is still uniformly bounded.
Then
\[
\big[\nabla \ell_{A,j}(w^\ast_{A,\setminus j})\big]_k
=\frac{1}{N}\sum_{r\in\{0\}\cup\mathcal A}\sum_{i=1}^{n_r}\tilde\zeta^{(r)}_{ik},
\]
and Hoeffding's inequality yields
\[
\Prob\!\left(\left|\big[\nabla \ell_{A,j}(w^\ast_{A,\setminus j})\big]_k\right|\ge t\right)
\le 2\exp\!\left(-cN t^2\right),
\]
for a universal $c>0$.
A union bound over $k$ gives
\[
\Prob\!\left(\big\|\nabla \ell_{A,j}(w^\ast_{A,\setminus j})\big\|_\infty
\ge C\sqrt{\frac{\log p}{N}}\right)\le 2p^{-c'}
\]
for some $c'>0$.

\paragraph{(c) Pooled score at $\theta^\ast_{\setminus j}$.}
Decompose
\[
\begin{aligned}
\nabla \ell_{A,j}(\theta^\ast_{\setminus j})
=
\Big(\nabla \ell_{A,j}(\theta^\ast_{\setminus j})-\E[\nabla \ell_{A,j}(\theta^\ast_{\setminus j})]\Big)
+\E[\nabla \ell_{A,j}(\theta^\ast_{\setminus j})].
\end{aligned}
\]
The fluctuation term is an average of $N$ independent bounded summands, which gives
\[
\big\|\nabla \ell_{A,j}(\theta^\ast_{\setminus j})-\E[\nabla \ell_{A,j}(\theta^\ast_{\setminus j})]\big\|_\infty
\lesssim \sqrt{\frac{\log p}{N}}
\quad\text{w.h.p.}
\]

For the bias term, note that $\E[\nabla \ell_{A,j}(u)]=\nabla \mathcal L_{A,j}(u)$ and
$\nabla \mathcal L_{A,j}(w^\ast_{A,\setminus j})=0$. By the integral form of Taylor's theorem,
\[
\nabla \mathcal L_{A,j}(\theta^\ast_{\setminus j})
=
\left\{\int_0^1\nabla^2 \mathcal L_{A,j}\big(w^\ast_{A,\setminus j}
+t(\theta^\ast_{\setminus j}-w^\ast_{A,\setminus j})\big)\,dt\right\}
(\theta^\ast_{\setminus j}-w^\ast_{A,\setminus j})
\]
and Lemma~\ref{lem:hess_max_bound} bounds the integrated Hessian in max norm by $1$, which gives
\begin{align*}
\big\|\E[\nabla \ell_{A,j}(\theta^\ast_{\setminus j})]\big\|_\infty
=
\|\nabla \mathcal L_{A,j}(\theta^\ast_{\setminus j})\|_\infty 
\le \|\theta^\ast_{\setminus j}-w^\ast_{A,\setminus j}\|_1
=\|\delta^\ast_{\setminus j}\|_1
\lesssim h_j,
\end{align*}
where the last step uses Lemma~\ref{lem:bias_rewrite}.
Combining both parts yields
\[
\big\|\nabla \ell_{A,j}(\theta^\ast_{\setminus j})\big\|_\infty
\ \lesssim\ \sqrt{\frac{\log p}{N}}+h_j
\quad\text{w.h.p.}
\]

This completes the proof.
\end{proof}

\subsection{Proof of Lemma~\ref{lem:step1-pooled_rewrite}}
\begin{proof}
Fix a node $j$ and abbreviate
\[
\hat w:=\hat w^A_{\setminus j},\qquad
w^\ast:=w^\ast_{A,\setminus j},\qquad
\Delta:=\hat w-w^\ast.
\]
Let $S:=S_j=\supp(\theta^\ast_{\setminus j})$ and $s:=|S|$.
Note that $w^\ast$ need not be supported on $S$, but by Lemma~\ref{lem:bias_rewrite},
\begin{equation}
\label{eq:approx_sparse}
\|w^\ast_{S^c}\|_1
\le \|w^\ast-\theta^\ast_{\setminus j}\|_1
=\|\delta^\ast_{\setminus j}\|_1
\lesssim h_j.
\end{equation}

\paragraph{Step 1: Basic inequality.}
By optimality of $\hat w$ for \eqref{eq:step1_rewrite},
\[
\ell_{A,j}(\hat w)+\lambda_w\|\hat w\|_1
\le
\ell_{A,j}(w^\ast)+\lambda_w\|w^\ast\|_1.
\]
Rearrange:
\begin{equation}
\label{eq:basic_ineq_step1}
\ell_{A,j}(\hat w)-\ell_{A,j}(w^\ast)
\le
\lambda_w(\|w^\ast\|_1-\|\hat w\|_1).
\end{equation}
Also, by convexity of $\ell_{A,j}$,
\begin{equation}
\label{eq:convex_lb_step1}
\ell_{A,j}(\hat w)-\ell_{A,j}(w^\ast)\ \ge\ \langle \nabla\ell_{A,j}(w^\ast),\Delta\rangle.
\end{equation}
Combining \eqref{eq:basic_ineq_step1}--\eqref{eq:convex_lb_step1} gives
\begin{equation}
\label{eq:basic_ineq_step1b}
\langle \nabla\ell_{A,j}(w^\ast),\Delta\rangle
\le
\lambda_w(\|w^\ast\|_1-\|w^\ast+\Delta\|_1).
\end{equation}

\paragraph{Step 2: Control of the $\ell_1$-difference.}
Using decomposability of the $\ell_1$ norm,
\begin{equation} \label{eq:l1_diff}
\begin{aligned}
\|w^\ast\|_1-\|w^\ast+\Delta\|_1
= \|w^\ast_S\|_1+\|w^\ast_{S^c}\|_1
-\|w^\ast_S+\Delta_S\|_1-\|w^\ast_{S^c}+\Delta_{S^c}\|_1
\le
\|\Delta_S\|_1-\|\Delta_{S^c}\|_1+2\|w^\ast_{S^c}\|_1.
\end{aligned}
\end{equation}
On the other hand,
\[
\langle \nabla\ell_{A,j}(w^\ast),\Delta\rangle
\ge
-\|\nabla\ell_{A,j}(w^\ast)\|_\infty\,\|\Delta\|_1.
\]
Plugging this and \eqref{eq:l1_diff} into \eqref{eq:basic_ineq_step1b} yields
\[
\begin{aligned}
-\|\nabla\ell_{A,j}(w^\ast)\|_\infty(\|\Delta_S\|_1+\|\Delta_{S^c}\|_1)
\le
\lambda_w\big(\|\Delta_S\|_1-\|\Delta_{S^c}\|_1+2\|w^\ast_{S^c}\|_1\big).
\end{aligned}
\]
Choose $\lambda_w\ge 2\|\nabla\ell_{A,j}(w^\ast)\|_\infty$ (w.h.p.\ by Lemma~\ref{lem:score_rewrite} when $\lambda_w=C_w\sqrt{\log p/N}$ and $C_w>0$ is sufficiently large).
Then
\[
\frac{\lambda_w}{2}\|\Delta_{S^c}\|_1
\le
\frac{3\lambda_w}{2}\|\Delta_S\|_1
+2\lambda_w\|w^\ast_{S^c}\|_1,
\]
i.e. the shifted cone condition
\begin{equation}
\label{eq:cone_shift}
\|\Delta_{S^c}\|_1
\le
3\|\Delta_S\|_1+4\|w^\ast_{S^c}\|_1.
\end{equation}
In particular, using \eqref{eq:approx_sparse},
\begin{equation}
\label{eq:cone_shift_h}
\|\Delta_{S^c}\|_1
\le
3\|\Delta_S\|_1 + C h_j
\qquad\text{w.h.p.}
\end{equation}

Condition~\ref{ass:rsc-pooled_rewrite} includes the required containment of the segment between
$\hat w$ and $w^\ast$ in the local RSC neighborhood.

\paragraph{Step 3: Apply pooled RSC.}
By Condition~\ref{ass:rsc-pooled_rewrite}, for $\vartheta$ on the segment between $\hat w$ and $w^\ast$,
\[
\begin{aligned}
\ell_{A,j}(\hat w)-\ell_{A,j}(w^\ast)-\langle \nabla\ell_{A,j}(w^\ast),\Delta\rangle
\ge
\frac{\kappa_A}{2}\|\Delta\|_2^2-\tau_A\frac{\log p}{N}\|\Delta\|_1^2.
\end{aligned}
\]
Combine this with \eqref{eq:basic_ineq_step1}:
\[  
\begin{aligned}
\frac{\kappa_A}{2}\|\Delta\|_2^2-\tau_A\frac{\log p}{N}\|\Delta\|_1^2
\le \lambda_w(\|w^\ast\|_1-\|\hat w\|_1)
-\langle \nabla\ell_{A,j}(w^\ast),\Delta\rangle \\
\le
\lambda_w\big(\|\Delta_S\|_1-\|\Delta_{S^c}\|_1+2\|w^\ast_{S^c}\|_1\big)
+\|\nabla\ell_{A,j}(w^\ast)\|_\infty\|\Delta\|_1.
\end{aligned}
\]
Using again $\lambda_w\ge 2\|\nabla\ell_{A,j}(w^\ast)\|_\infty$ and \eqref{eq:cone_shift}, the right-hand side is bounded by
\[
\frac{3\lambda_w}{2}\|\Delta_S\|_1+2\lambda_w\|w^\ast_{S^c}\|_1
\le
\frac{3\lambda_w}{2}\sqrt{s}\,\|\Delta\|_2+2\lambda_w\|w^\ast_{S^c}\|_1.
\]
Also from \eqref{eq:cone_shift},
\[
\begin{aligned}
\|\Delta\|_1
=\|\Delta_S\|_1+\|\Delta_{S^c}\|_1
\le 4\|\Delta_S\|_1+4\|w^\ast_{S^c}\|_1
\le 4\sqrt{s}\|\Delta\|_2+4\|w^\ast_{S^c}\|_1.
\end{aligned}
\]

\paragraph{Step 4: Conclude the rates.}
Using the previous bounds and absorbing the RSC tolerance term under
$s\log p/N+(\log p/N)h_j^2=o(1)$,
we arrive at an inequality of the form
\[
\|\Delta\|_2^2 \ \lesssim\ \lambda_w\sqrt{s}\,\|\Delta\|_2 \;+\; \lambda_w\|w^\ast_{S^c}\|_1.
\]
Solving this quadratic inequality first yields
\[
\|\Delta\|_2
\ \lesssim\
\lambda_w\sqrt{s}
\;+\;
\sqrt{\lambda_w\,\|w^\ast_{S^c}\|_1}.
\]
Let $B:=\|w^\ast_{S^c}\|_1$. If $B\ge \lambda_w$, then
$\sqrt{\lambda_wB}\le B$. If $B<\lambda_w$ and $s\ge 1$, then
$\sqrt{\lambda_wB}\le \lambda_w\le \lambda_w\sqrt{s}$, and this term is
absorbed into the leading term. If $s=0$, \eqref{eq:cone_shift} gives
$\|\Delta\|_1\le 4B$, which implies $\|\Delta\|_2\le 4B$. Combining these cases,
\[
\|\Delta\|_2
\ \lesssim\
\lambda_w\sqrt{s}
\;+\;
\Big(\sqrt{\lambda_wB}\ \wedge\ B\Big).
\]
In addition, the shifted cone bound gives
\[
\|\Delta\|_1
\ \lesssim\
s\,\lambda_w+\|w^\ast_{S^c}\|_1.
\]
With $\lambda_w=C_w\sqrt{(\log p)/N}$ and $\|w^\ast_{S^c}\|_1\lesssim h_j$, this yields
\begin{align*}
\|\Delta\|_2 \lesssim \sqrt{\frac{s\log p}{N}} 
+ \Big(\big[(\tfrac{\log p}{N})^{1/4}h_j^{1/2}\big]\wedge h_j\Big), \qquad
\|\Delta\|_1 \lesssim s\sqrt{\frac{\log p}{N}} + h_j.
\end{align*}
This completes the proof.
\end{proof}

\subsection{Proof of Lemma~\ref{lem:step2-bias_rewrite} }
\begin{proof}
Fix a node $j$. Write
\[
\hat\delta:=\hat\delta^A_{\setminus j},\qquad
\delta^\ast:=\delta^\ast_{\setminus j},\qquad
v:=\hat\delta-\delta^\ast,\qquad
\hat\theta:=\hat w^A_{\setminus j}+\hat\delta.
\]
Recall the Step~2 objective
\[
Q(\delta)
:=\ell_{0,j}(\hat w^A_{\setminus j}+\delta)
+\lambda_\delta\|\delta\|_1
+\sum_{k\neq j}P_\lambda(\hat w^A_{jk}+\delta_{jk}),
\]
where $P_\lambda$ is the SCAD penalty with parameter $a>2$ and level $\lambda$.
Let
\[
\theta^\dagger:=\hat w^A_{\setminus j}+\delta^\ast.
\]
Note that $\theta^\dagger=\theta^\ast_{\setminus j}+\Delta_w$, where
$\Delta_w:=\hat w^A_{\setminus j}-w^\ast_{A,\setminus j}$ is the Step~1 error.

By Condition~\ref{ass:local-step2_rewrite}(ii), $\hat\delta=\delta^\ast+v$ satisfies
$\|\hat\delta-\delta^\ast\|_1\le r_\delta$ and is locally optimal on the $\ell_1$-ball around $\delta^\ast$.
Taking $\delta=\delta^\ast$ yields
\begin{equation}
\label{eq:step2_local_opt}
Q(\delta^\ast+v)\ \le\ Q(\delta^\ast).
\end{equation}

\paragraph{Step 1: Basic inequality and Taylor expansion.}
Expanding \eqref{eq:step2_local_opt} yields
\begin{equation} \label{eq:step2_basic_ineq}
\begin{aligned}
\ell^{(0)}_j(\theta^\dagger+v)-\ell^{(0)}_j(\theta^\dagger)
\le \lambda_\delta\big(\|\delta^\ast\|_1-\|\delta^\ast+v\|_1\big)
+\sum_{k\neq j}\Big(P_\lambda(\theta^\dagger_k)-P_\lambda(\theta^\dagger_k+v_k)\Big).
\end{aligned}
\end{equation}
Let $\tilde\theta$ be a point on the segment between $\theta^\dagger$ and $\theta^\dagger+v$.
Then
\begin{equation}
\label{eq:loss_taylor}
\ell^{(0)}_j(\theta^\dagger+v)-\ell^{(0)}_j(\theta^\dagger)
=
\langle \nabla\ell^{(0)}_j(\theta^\dagger),v\rangle
+\frac12\,v^\top \nabla^2\ell^{(0)}_j(\tilde\theta)\,v.
\end{equation}
Combining \eqref{eq:step2_basic_ineq} and \eqref{eq:loss_taylor} gives
\begin{equation} \label{eq:step2_master0}
\begin{aligned}
\langle \nabla\ell^{(0)}_j(\theta^\dagger),v\rangle
+\frac12\,v^\top \nabla^2\ell^{(0)}_j(\tilde\theta)\,v 
\le \lambda_\delta\big(\|\delta^\ast\|_1-\|\delta^\ast+v\|_1\big)
+\sum_{k\neq j}\Big(P_\lambda(\theta^\dagger_k)-P_\lambda(\theta^\dagger_k+v_k)\Big).
\end{aligned}
\end{equation}

For SCAD, the subgradient is uniformly bounded: for all $t$, any $g\in\partial P_\lambda(t)$ satisfies $|g|\le \lambda$.
Equivalently, $P_\lambda(\cdot)$ is $\lambda$-Lipschitz. This implies that, for any $u,b\in\R$,
\[
P_\lambda(u)-P_\lambda(u+b)\ \le\ \lambda|b|.
\]
Applying this coordinate-wise yields the clean bound
\begin{equation}
\label{eq:scad_diff_bound}
\sum_{k\neq j}\Big(P_\lambda(\theta^\dagger_k)-P_\lambda(\theta^\dagger_k+v_k)\Big)
\ \le\ \lambda\|v\|_1.
\end{equation}

\paragraph{Step 2: Score control at $\theta^\dagger$.}
Recall $\theta^\dagger=\theta^\ast_{\setminus j}+\Delta_w$ where $\Delta_w=\hat w^A_{\setminus j}-w^\ast_{A,\setminus j}$.
By Lemma~\ref{lem:hess_max_bound},
\[
\|\nabla\ell_{0,j}(\theta^\dagger)-\nabla\ell_{0,j}(\theta^\ast_{\setminus j})\|_\infty
\le \|\theta^\dagger-\theta^\ast_{\setminus j}\|_1
= \|\Delta_w\|_1.
\]
This gives
\[
\|\nabla\ell_{0,j}(\theta^\dagger)\|_\infty
\le
\|\nabla\ell_{0,j}(\theta^\ast_{\setminus j})\|_\infty + \|\Delta_w\|_1.
\]
This implies
\[
|\langle \nabla\ell_{0,j}(\theta^\dagger),v\rangle|
\le
\Big(\|\nabla\ell_{0,j}(\theta^\ast_{\setminus j})\|_\infty + \|\Delta_w\|_1\Big)\,\|v\|_1.
\]
By Lemma~\ref{lem:score_rewrite}, $\|\nabla\ell_{0,j}(\theta^\ast_{\setminus j})\|_\infty\lesssim \sqrt{\log p/n_0}$ w.h.p.
The Step~1 hypothesis of this lemma gives $\|\Delta_w\|_1\lesssim \lambda_\delta$ w.h.p.
Because $\lambda_\delta\asymp\sqrt{\log p/n_0}$, this gives
\begin{equation}\label{eq:score_bound_step2}
|\langle \nabla\ell_{0,j}(\theta^\dagger),v\rangle|
\ \lesssim\
\lambda_\delta\,\|v\|_1
\ \text{w.h.p.}
\end{equation}

\paragraph{Step 3: $\ell_1$ term via effective sparsity of $\delta^\ast$.}
The vector $\delta^\ast$ is not assumed sparse, but satisfies $\|\delta^\ast\|_1\le C_h h_j$.
Define the ``large'' index set at level $\lambda_\delta$:
\[
T:=T(\lambda_\delta):=\{k\neq j:\ |\delta^\ast_k|>\lambda_\delta\},
\qquad t:=|T|.
\]
Then $t\le \|\delta^\ast\|_1/\lambda_\delta \le C_h h_j/\lambda_\delta$.
By decomposability,
\begin{equation} \label{eq:l1_decomp_delta}
\begin{aligned}
\|\delta^\ast\|_1-\|\delta^\ast+v\|_1
\le \|v_T\|_1-\|v_{T^c}\|_1+2\|\delta^\ast_{T^c}\|_1
\le \|v_T\|_1-\|v_{T^c}\|_1+2C_h h_j.
\end{aligned}
\end{equation}

\paragraph{Step 4: Cone condition and mixed rate.}
Condition~\ref{cond:step2-cone_rewrite} gives
\[
\|v_{T^c}\|_1 \le 3\|v_T\|_1 + 3C_hh_j.
\]
Together with Condition~\ref{ass:local-step2_rewrite}, this permits the use of
Condition~\ref{ass:rsc-target_rewrite} in the Taylor expansion. Combine
\eqref{eq:step2_master0}, \eqref{eq:scad_diff_bound},
\eqref{eq:score_bound_step2}, \eqref{eq:l1_decomp_delta}, and
Condition~\ref{ass:rsc-target_rewrite}.
With high probability,
\begin{equation} \label{eq:master_final}
\begin{aligned}
\frac{\kappa_0}{2}\|v\|_2^2
-\tau_0\frac{\log p}{n_0}\|v\|_1^2
\lesssim
\lambda_\delta\|v\|_1
+\lambda\|v\|_1 +\lambda_\delta(\|v_T\|_1-\|v_{T^c}\|_1)+\lambda_\delta h_j.
\end{aligned}
\end{equation}
Choose $\lambda\asymp \lambda_\delta\asymp \sqrt{\log p/n_0}$.
By Condition~\ref{cond:step2-cone_rewrite},
\[
\|v\|_1
=
\|v_T\|_1+\|v_{T^c}\|_1
\le
4\|v_T\|_1+3C_hh_j
\lesssim
\sqrt{t}\,\|v\|_2+h_j.
\]
Substituting this bound into \eqref{eq:master_final} reduces the bound to an inequality in $\|v\|_2$:
\[
\|v\|_2^2
\ \lesssim\
\lambda_\delta\sqrt{t}\,\|v\|_2
+\lambda_\delta h_j
+\frac{\log p}{n_0}\big(\sqrt{t}\,\|v\|_2+h_j\big)^2.
\]
Using $t\lesssim h_j/\lambda_\delta$ and $\lambda_\delta\asymp \sqrt{\log p/n_0}$, put
$a_n:=\lambda_\delta$ and $x:=\|v\|_2$. The preceding inequality gives
\[
x^2
\lesssim
a_n\sqrt{t}\,x+a_nh_j
+a_n^2(\sqrt{t}\,x+h_j)^2 .
\]
Because $t\lesssim h_j/a_n$, $h_j\lesssim a_n$, and $a_n^2=\log p/n_0=o(1)$, the coefficient of $x^2$ in
$a_n^2t x^2$ is $O(a_nh_j)=o(1)$ and can be absorbed into the left-hand side. The remaining terms in
$a_n^2(\sqrt{t}\,x+h_j)^2$ are bounded by a constant multiple of
$(a_nh_j)^{1/2}x+a_nh_j$. This gives
\(
x^2\lesssim (a_nh_j)^{1/2}x+a_nh_j.
\)
Solving this scalar inequality gives
\[
\|v\|_2\lesssim (a_nh_j)^{1/2}
=
\left(\frac{\log p}{n_0}\right)^{1/4}h_j^{1/2}.
\]
In addition, the cone bound gives
\[
\|v\|_1\lesssim \sqrt{t}\|v\|_2+h_j\lesssim h_j,
\]
which implies $\|v\|_2\le \|v\|_1\lesssim h_j$. Combining the two bounds,
\[
\|v\|_2
\ \lesssim\
\Big(\big[(\tfrac{\log p}{n_0})^{1/4}h_j^{1/2}\big]\wedge h_j\Big),
\qquad
\|v\|_1\ \lesssim\ h_j,
\]
as claimed. This completes the proof.
\end{proof}

\subsection{Proof of Theorem~\ref{thm:transfer_rewrite} }
\begin{proof}
Fix a node $j$. Recall
\[
\hat\theta_{\setminus j}=\hat w^A_{\setminus j}+\hat\delta^A_{\setminus j},
\qquad
\theta^\ast_{\setminus j}=w^\ast_{A,\setminus j}+\delta^\ast_{\setminus j}.
\]
Define the Step~1 error $\Delta_w:=\hat w^A_{\setminus j}-w^\ast_{A,\setminus j}$ and
the Step~2 error $v:=\hat\delta^A_{\setminus j}-\delta^\ast_{\setminus j}$.
We condition on the intersection of the event on which
Conditions~\ref{ass:rsc-pooled_rewrite}--\ref{cond:step2-cone_rewrite} hold,
the score-concentration event in Lemma~\ref{lem:score_rewrite}, and the
high-probability events in Lemmas~\ref{lem:step1-pooled_rewrite}
and~\ref{lem:step2-bias_rewrite}. This intersection has probability at least
$1-C_0p^{-c_0}$ after changing constants.
Lemma~\ref{lem:step1-pooled_rewrite} and the rate condition
$s_j\sqrt{\log p/N}+h_j\lesssim\lambda_\delta$ give
\[
\|\Delta_w\|_1
\lesssim
s_j\sqrt{\frac{\log p}{N}}+h_j
\lesssim
\lambda_\delta.
\]
This verifies the Step~1 condition in Lemma~\ref{lem:step2-bias_rewrite}.
Then
\[
\hat\theta_{\setminus j}-\theta^\ast_{\setminus j}=\Delta_w+v,
\]
and the triangle inequality gives
\[
\|\hat\theta_{\setminus j}-\theta^\ast_{\setminus j}\|_2
\le \|\Delta_w\|_2+\|v\|_2.
\]
Apply Lemma~\ref{lem:step1-pooled_rewrite} to bound $\|\Delta_w\|_2$ and
Lemma~\ref{lem:step2-bias_rewrite} to bound $\|v\|_2$.
Combining the two bounds yields the claimed $\ell_2$ rate. The same triangle inequality, together with the
$\ell_1$ bounds in Lemmas~\ref{lem:step1-pooled_rewrite} and~\ref{lem:step2-bias_rewrite}, gives
\[
\|\hat\theta_{\setminus j}-\theta^\ast_{\setminus j}\|_1
\lesssim
s_j\sqrt{\frac{\log p}{N}}+h_j.
\]
This completes the proof.
\end{proof}

\subsection{Proof of Theorem~\ref{thm:selection_rewrite}}

\begin{proof}
Fix a node $j\in V$ and let $S:=S_j=\{k\neq j:\theta^\ast_{jk}\neq 0\}$. Recall $\hat\theta_{\setminus j} = \hat w^A_{\setminus j} + \hat\delta^A_{\setminus j}$.
By Theorem~\ref{thm:transfer_rewrite}, with probability tending to one,
\[
\|\hat\theta_{\setminus j}-\theta^\ast_{\setminus j}\|_\infty
\le
\|\hat\theta_{\setminus j}-\theta^\ast_{\setminus j}\|_2
\le r_{n,j}.
\]
On this event, we prove the following two claims.

\paragraph{(a) Sign Consistency and Vanishing Bias on $S$.}
For any $k \in S$, under the beta-min condition in Theorem~\ref{thm:selection_rewrite},
\[
|\hat\theta_{jk}-\theta^\ast_{jk}|
\le r_{n,j}
<
|\theta^\ast_{jk}|,
\]
which implies that $\sign(\hat\theta_{jk})=\sign(\theta^\ast_{jk})$. In addition,
\[
|\hat\theta_{jk}| \ge |\theta^\ast_{jk}| - \|\hat\theta_{\setminus j}-\theta^\ast_{\setminus j}\|_\infty
\ge (a\lambda + r_{n,j}) - r_{n,j} = a\lambda.
\]
This has two implications:
\begin{enumerate}
    \item Since $|\hat\theta_{jk}| > 0$, we have $\hat S_j \supseteq S$ (No false negatives).
    \item Since $|\hat\theta_{jk}| \ge a\lambda$, we are in the flat region of the SCAD penalty. The subgradient of the SCAD penalty is zero:
    \[
    P'_\lambda(|\hat\theta_{jk}|) = 0 \quad \forall k \in S.
    \]
\end{enumerate}
This property, namely zero penalty derivative for large coefficients, is specific to folded-concave penalties such as SCAD and distinguishes this proof from Lasso. For Lasso, the penalty derivative would be $\lambda \cdot \sign(\hat\theta_{jk})$, introducing a bias that requires the Irrepresentable Condition to manage.

\paragraph{(b) No False Positives (Selection Consistency).}
We show $\hat\theta_{jk} = 0$ for all $k \in S^c$.
For $k\in S^c$, suppose for contradiction that $\hat\theta_{jk}\neq 0$.
On this event and under $r_{n,j}\le\lambda$, we have
$0<|\hat\theta_{jk}|\le \lambda$. The SCAD subgradient with respect to
$\hat\theta_{jk}$ has absolute value $\lambda$.
By Condition~\ref{ass:local-step2_rewrite}, there exist
$g_{jk}\in\partial|\hat\delta^A_{jk}|$ and
$h_{jk}\in\partial P_\lambda(\hat\theta_{jk})$ such that
\[
\left|
\Big[\nabla\ell_{0,j}(\hat\theta_{\setminus j})\Big]_k
+\lambda_\delta g_{jk}+h_{jk}
\right|
\le \varepsilon_{n,j}.
\]
Because $|g_{jk}|\le 1$ and $|h_{jk}|=\lambda$, this implies
\[
\left|
\Big[\nabla\ell_{0,j}(\hat\theta_{\setminus j})\Big]_k
\right|
\ge
\lambda-\lambda_\delta-\varepsilon_{n,j}.
\]
The integral form of Taylor's theorem and Lemma~\ref{lem:hess_max_bound} give
\[
\left|
\Big[\nabla\ell_{0,j}(\hat\theta_{\setminus j})\Big]_k
\right|
\le
\|\nabla\ell_{0,j}(\theta^\ast_{\setminus j})\|_\infty
+
\|\hat\theta_{\setminus j}-\theta^\ast_{\setminus j}\|_1 .
\]
Lemma~\ref{lem:score_rewrite} and Theorem~\ref{thm:transfer_rewrite} imply
\[
\left|
\Big[\nabla\ell_{0,j}(\hat\theta_{\setminus j})\Big]_k
\right|
\lesssim
\sqrt{\frac{\log p}{n_0}}
+
s_j\sqrt{\frac{\log p}{N}}
+
h_j .
\]
This contradiction implies that $\hat\theta_{jk}=0$ for all $k\in S^c$.

We have shown that $\hat S_j \supseteq S$ from beta-min and flat penalty region and $\hat S_j \subseteq S$ from noise control dominating the gradient. This gives $\hat S_j = S$. The result holds without imposing the mutual incoherence condition on the Hessian, because the SCAD derivative equals zero on the true support.
\end{proof}

\subsection{Proof of Theorem~\ref{thm:detection_consistency}}
\label{proof:thm_detection}

In this section, we prove Theorem~\ref{thm:detection_consistency} from the graph-level separation and threshold condition.

\begin{proof}
Recall that the compatibility score for source $s$ is the difference between the averaged empirical cross-validation losses on the target domain. Define
\[
\bar{\mathcal L}_0:=\frac12\sum_{r=1}^2\hat{\mathcal L}^{(r)}_0,
\qquad
\bar{\mathcal L}_s:=\frac12\sum_{r=1}^2\hat{\mathcal L}^{(r)}_s .
\]
Then
\[
\Delta_s=\bar{\mathcal L}_s-\bar{\mathcal L}_0
=\frac12\sum_{r=1}^2
\left(\hat{\mathcal L}^{(r)}_s-\hat{\mathcal L}^{(r)}_0\right).
\]
On the event in Condition~\ref{ass:detection_sep}, we have
\[
\max_{1\le s\le S}|\Delta_s-\mathcal E(s)|\le C\epsilon_n
\]
and
\[
c_{\mathrm{gap}}\nu_n+C\epsilon_n
<
C_\tau\hat\sigma
<
(1+c_{\mathrm{gap}})\nu_n-C\epsilon_n .
\]
If $s\in\mathcal A_h$, then $\mathcal E(s)\le c_{\mathrm{gap}}\nu_n$, which implies
\[
\Delta_s
\le
\mathcal E(s)+C\epsilon_n
\le
c_{\mathrm{gap}}\nu_n+C\epsilon_n
<
C_\tau\hat\sigma .
\]
This gives $s\in\hat{\mathcal A}$.
If $s\notin\mathcal A_h$, then $\mathcal E(s)\ge(1+c_{\mathrm{gap}})\nu_n$, which implies
\[
\Delta_s
\ge
\mathcal E(s)-C\epsilon_n
\ge
(1+c_{\mathrm{gap}})\nu_n-C\epsilon_n
>
C_\tau\hat\sigma .
\]
This gives $s\notin\hat{\mathcal A}$. The event in
Condition~\ref{ass:detection_sep} has probability tending to one, which proves
$\Prob(\hat{\mathcal A}=\mathcal A_h)\to1$. This completes the proof.
\end{proof}

\end{document}